\documentclass[twoside,11pt]{article}
\usepackage{blindtext}

\usepackage[preprint]{jmlr2e}

\usepackage{lastpage}
\jmlrheading{23}{2022}{1-\pageref{LastPage}}{1/21; Revised 5/22}{9/22}{21-0000}{Author One and Author Two}

\ShortHeadings{Block-Sphere Vector Quantization}{Block-Sphere Vector Quantization} 
\firstpageno{1}

\usepackage{amsmath}
\usepackage{amssymb}
\usepackage{amsthm} %
\usepackage{abbreviations}
\usepackage{natbib} 
    \setcitestyle{authoryear,round,citesep={;},aysep={,},yysep={;}}
    \renewcommand{\cite}[1]{\citep{#1}} 
    
\usepackage{hyperref} 
    \hypersetup{ %
        pdftitle={},
        pdfsubject={},
        pdfkeywords={},
        pdfborder=0 0 0,
        pdfpagemode=UseNone,
        colorlinks=true,
        linkcolor=black,
        citecolor=mydarkblue, 
        filecolor=mydarkblue,
        urlcolor=mydarkblue,
    }

\usepackage{url}
\usepackage{mathtools}
\usepackage{dsfont}
\usepackage{tabularx}
\usepackage{makecell}
\usepackage{graphicx}
\usepackage{subcaption}
\usepackage{booktabs}       
\usepackage{subfiles}

\usepackage[capitalize,noabbrev]{cleveref}

\usepackage[utf8]{inputenc} 
\usepackage[T1]{fontenc}    
\usepackage{hyperref}       
\usepackage{url}            
\usepackage{booktabs}       
\usepackage{multirow}
\usepackage{amsfonts}       
\usepackage{nicefrac}       
\usepackage{microtype}      
\usepackage{xcolor}         

\usepackage{graphicx}
\usepackage{subcaption}

\usepackage{amsmath}
    
\usepackage{amssymb}
\usepackage{mathtools}
\usepackage{amsthm}
\usepackage{enumitem}   
\usepackage{algorithm}
\usepackage[noend]{algpseudocode}
\usepackage{abbreviations}
\usepackage{tabularx}
\usepackage{booktabs}   
\usepackage{makecell}
\usepackage{dsfont}
\usepackage{pifont}
\usepackage{caption}
\newcommand{\bq}{\textup{\texttt{BlockQuant}}}
\newcommand{\bqmse}{\textup{\texttt{BlockQuant}\textsubscript{MSE}}}
\newcommand{\bqmseapprox}{\textup{\texttt{BlockQuant}\textsubscript{MSE,approx}}}
\newcommand{\bqbsm}{\textup{\texttt{BlockQuant}\textsubscript{BSM}}}

\newcommand{\bqub}{\textup{\texttt{BlockQuant}\textsubscript{UB}}}
\newcommand{\bqubapprox}{\textup{\texttt{BlockQuant}\textsubscript{UB,approx}}}

\newcommand{\tq}{\textup{\texttt{TurboQuant}}}
\newcommand{\tqmse}{\textup{\texttt{TurboQuant}\textsubscript{MSE}}}
\newcommand{\tqprod}{\textup{\texttt{TurboQuant}\textsubscript{PROD}}}

\newcommand{\rbq}{\textup{\texttt{RabitQ}}}
\newcommand{\rbqbsm}{\textup{\texttt{RabitQ}\textsubscript{BSM}}}
\newcommand{\rbqub}{\textup{\texttt{RabitQ}\textsubscript{UB}}}

\newcommand{\ed}{\textup{\texttt{EDEN}}}
\newcommand{\edbsm}{\textup{\texttt{EDEN}\textsubscript{BSM}}}
\newcommand{\edub}{\textup{\texttt{EDEN}\textsubscript{UB}}}
\newcommand{\qjl}{\textup{\texttt{QJL}}}

\newcommand{\mse}{\mathcal{D}\textsubscript{MSE}}
\newcommand{\iprod}{\mathcal{D}\textsubscript{IP}}

\newcommand{\sphere}{\mathbb{S}^{d-1}}

\usepackage[toc,page,header]{appendix}
\usepackage{minitoc}

\title{Block-Sphere Vector Quantization}

\author{\name Heesang Ann\thanks{Equal contribution.} \email sang3798@snu.ac.kr \\
       \addr 
       Seoul National University \\
       \name Joongkyu Lee\footnotemark[1] \email jklee0717@snu.ac.kr \\
       \addr 
       Seoul National University \\
       \name Min-hwan Oh \email minoh@snu.ac.kr \\
       \addr 
       Seoul National University}

\editor{My editor}

\doparttoc 
\faketableofcontents 

\begin{document}

\maketitle

\begin{abstract}
Vector quantization is a fundamental primitive for scalable machine learning systems, enabling memory-efficient storage, fast retrieval, and compressed inference.
Recent rotation-based quantizers such as $\ed$, $\rbq$, and $\tq$ have introduced strong guarantees and empirical performance, but the surrounding comparisons have been difficult to interpret because they rely on different distortion criteria, probability regimes, and implementation assumptions.
As our first contribution, we provide a unified theoretical comparison of these methods and show that their relative advantages are criterion-dependent rather than absolute: $\ed$ and $\tq$ are favorable for MSE distortion, $\ed$ is also effective for expected inner-product distortion, and $\rbq$ provides strong high-probability control.
This comparison further clarifies that $\ed$ provides particularly strong guarantees for expected distortion measures.
As our second contribution, we introduce \textit{Block-Sphere Quantization} ($\bq$), a new rotation-based block quantization algorithm designed around the spherical geometry of randomly rotated vectors.
Unlike coordinate-wise quantizers, $\bq$ quantizes blocks on the sphere, preserving the geometry of rotated embeddings more faithfully.
We prove that this block-spherical design theoretically improves over the baselines considered in this paper for both reconstruction MSE and expected inner-product distortion.
Our experiments on real embedding datasets and long-context LLM inference tasks
show practical gains that are consistent with our theoretical improvements.
\end{abstract}



\section{Introduction}

Vector quantization addresses a central bottleneck in large-scale machine
learning systems: storing, transmitting, and comparing massive collections of
high-dimensional vectors. These vectors appear as embeddings for retrieval,
gradients in distributed and federated learning, and key--value cache states
in long-context LLM inference.
In these settings, quantization reduces storage cost and memory traffic while
enabling efficient similarity computation.
It has long been a core tool for billion-scale similarity search and is becoming
increasingly important in LLM inference, where KV-cache memory grows with
batch size and context length and can become a major bottleneck
\citep{johnson2019billion,liu2024kivi}.

Recent quantizers using random rotation have attracted significant attention, showing strong performance even in the low-bit regime. 
$\tq$ explicitly targets both reconstruction MSE and inner-product distortion, and reports strong performance in simple LLM and nearest-neighbor tasks~\citep{zandieh2025turboquant}. 
$\rbq$ and its extensions are also rotation-based quantizers, but use a different spherical approximation based on normalized rotated grids; they have recently been explored for both approximate nearest-neighbor search and LLM
quantization~\citep{gao2024rabitq,gao2025practical,yang2025raana,gao2026revisiting}. In parallel, $\ed$, which also uses random rotation and coordinate-wise marginal
distributions, has been revisited in relation to $\tq$~\citep{vargaftik2021drive, vargaftik2022eden,benbasat2026note}. 
A major strength of these methods
is that they come with theoretical guarantees. However, these guarantees are stated under different criteria: $\tq$ and $\ed$ are primarily analyzed through expected distortion measures, whereas $\rbq$ is analyzed through high-probability bit-complexity guarantees. 
This makes a direct theoretical comparison difficult. 
Moreover, some recently discussed variants, such as
inner-product variants of $\ed$ and reconstruction variants of $\rbq$, have mostly been compared empirically~\citep{ben2026note, gao2026revisiting}.

\begin{table}[t]\centering
    \caption{\small{Comparison of rotation-based quantizers. 
    The suffixes BSM and UB denote the best-scalar MSE variant and the unbiased
    inner-product variant, respectively; MSE and PROD follow the original
    \tq{} notation.
    The entries for
    $b=1,~2,~3,~4$ report high-dimensional approximate values. 
    The large-$b$ column reports the
    high-rate asymptotic upper bounds of the form $C\cdot 4^{-b}$. 
    All values are computed in this work except~$^\dagger$, reported in~\citet{zandieh2025turboquant}. 
    }}
\label{table:comparison_quantizers}
\begin{tabular}{lccccc}
\toprule
\multirow{2}{*}{Quantizer for reconstruction}
& \multicolumn{5}{c}{MSE distortion} \\
\cmidrule(lr){2-6}
& $b=1$ & $b=2$ & $b=3$ & $b=4$ & Large $b$ \\\midrule
$\edbsm$& $0.363$& $0.117$& ${0.0345}$& ${0.0095}$& ${2.721}\cdot 4^{-b}$ \\
$\rbqbsm$& ${0.363}$& ${0.119}$& ${0.0374}$& ${0.0115}$& -- \\
$\tqmse$& $0.36^\dagger$& $0.117^\dagger$& $0.03^\dagger$& $0.009^\dagger$& ${2.721}\cdot 4^{-b~\dagger}$ \\
$\bqbsm (p=2)$(ours)& ${0.363}$& ${0.108}$& ${0.0297}$& ${0.0078}$& ${2.015}\cdot 4^{-b}$ \\
$\bqbsm (p=3)$(ours)& $\mathbf{0.357}$& $\mathbf{0.101}$& $\mathbf{0.0271}$& $\mathbf{0.0071}$& $\mathbf{1.770}\cdot 4^{-b}$ \\
\specialrule{0.8pt}{1pt}{2pt}
\multirow{2}{*}{Quantizer for inner product}
& \multicolumn{5}{c}{Inner product distortion} \\
\cmidrule(lr){2-6}
& $b=1$ & $b=2$ & $b=3$ & $b=4$ & Large $b$ \\\midrule
$\edub$\rule{0pt}{3ex}&$\frac{0.571}{d-1}$& $\frac{0.133}{d-1}$& $\frac{0.0358}{d-1}$& $\frac{0.0096}{d-1}$& $\frac{2.721}{d-1}\cdot 4^{-b}$ \\
$\rbqub$\rule{0pt}{3ex}& $\frac{0.571}{d-1}$& $\frac{0.135}{d-1}$& $\frac{0.0389}{d-1}$& $\frac{0.0117}{d-1}$& -- \\
$\tqprod$\rule{0pt}{3ex}& $\frac{1.57}{d}^\dagger$& $\frac{0.56}{d}^\dagger$& $\frac{0.18}{d}^\dagger$& $\frac{0.047}{d}^\dagger$& $\frac{17.09}{d}\cdot 4^{-b~\dagger}$ \\
$\bqub ~(p=2)~$(ours)\rule{0pt}{3ex}& $\frac{0.571}{d-1}$& $\frac{0.120}{d-1}$& $\frac{0.0306}{d-1}$& $\frac{0.0078}{d-1}$& $\frac{2.015}{d-1}\cdot 4^{-b}$ \\
$\bqub ~(p=3)~$(ours)\rule{0pt}{3ex}& $\frac{\mathbf{0.553}}{d-1}$& $\frac{\mathbf{0.113}}{d-1}$& $\frac{\mathbf{0.0279}}{d-1}$& $\frac{\mathbf{0.071}}{d-1}$& $\frac{\mathbf{1.770}}{d-1}\cdot 4^{-b}$ 
\\\bottomrule
\end{tabular}

\end{table}

In this paper, we close this gap by placing $\ed$, $\rbq$, and $\tq$ under a
unified theoretical framework. We compare these methods under three criteria:
reconstruction MSE, expected inner-product distortion, and high-probability
bit complexity. This comparison reveals that no existing method dominates
across all criteria. Rather, each algorithm reflects a different design
principle: $\ed$ and $\tq$ are strong for reconstruction MSE, $\ed$ is effective for expected inner-product distortion, and $\rbq$ provides the strongest
high-probability control. This unified view separates the role of the quantization codebook from that of the dequantization rule.

Motivated by these findings, we propose Block-Sphere Quantization
($\bq$), a rotation-based block quantizer designed to improve expected
distortion. Whereas $\ed$ and $\tq$ use coordinate-wise marginal distributions
after random rotation, $\bq$ groups coordinates into small blocks and uses
the exact block marginal distribution induced by the unit sphere to construct
its centroids. This direction is related in spirit to product quantization~\citep{jegou2011product, ge2013optimized},
but differs in a crucial way: unlike product quantization methods, the codebook of $\bq$ is not learned from data, but is
derived from the known spherical distribution induced by random rotation.
As a result, $\bq$ achieves the best reconstruction MSE and expected inner-product error among the rotation-based methods. 
Our lower-bound analysis further shows that increasing the block size moves this framework toward the ideal spherical quantization limit, and our empirical validation supports the theoretical findings.    
Our main contributions are summarized as follows:
\begin{itemize}
    \item \textbf{A unified comparison of $\ed$, $\rbq$, and $\tq$.}
    We provide a unified theoretical comparison of $\ed$, $\rbq$, and $\tq$
    under three criteria: reconstruction MSE, expected inner-product distortion,
    and high-probability bit-complexity guarantees.
    Our analysis shows that the relative strengths of these methods depend on the
    criterion: $\ed$ yields strong guarantees for expectation-based distortion
    criteria, while $\rbq$ provides stronger high-probability guarantees.

    \item \textbf{Block-Sphere Quantization. }
    We propose Block-Sphere Quantization ($\bq$), a rotation-based block
    quantizer that exploits the spherical geometry of randomly rotated unit
    vectors. Unlike coordinate-wise quantizers, $\bq$ quantizes
    low-dimensional blocks using the exact block marginal distribution
    induced by the unit sphere. We prove that $\bq$ improve both the MSE distortion and the expected inner-product
    distortion over the baselines considered in this paper, respectively
    (see Table~\ref{table:comparison_quantizers}). Our analysis covers both the derivation of approximate constants for small bit-widths and the high-rate asymptotic regime.

    \item \textbf{A sharper lower bound.}
    We revisit the Shannon lower bound for the MSE distortion, and we obtain a corrected lower bound for our problem setting. We further show that the idealized version of
    $\bq$ with block size $p=d$ has an MSE upper bound that closely matches
    this lower bound, both in order and in the leading constant. This
    indicates that incorporating block-spherical structure is a principled
    route toward near-optimal MSE distortion.

    \item \textbf{Empirical validation.}
    We show that $\bq$ achieves lower reconstruction and inner-product distortion
    than $\ed$, $\rbq$, and $\tq$ on real embedding data, while maintaining
    comparable runtime via efficient approximate nearest-centroid search.
    We further demonstrate that $\bq$ yields practical gains for KV-cache
    quantization in long-context LLM inference, supporting our theoretical findings.
\end{itemize}

\section{Preliminaries}
\textbf{Notations. } 
We denote the unit sphere and the unit ball
in $\mathbb{R}^d$ by $\sphere$ and $\mathbb{B}^{d}$, respectively. We denote the beta 
function by $\mathrm{Beta}$ and the gamma and  digamma functions by $\Gamma$ and $\psi(t):=\frac{d}{dt}\log\Gamma(t)=\Gamma'(t)/\Gamma(t)$, respectively. 

\subsection{Problem Settings}
We consider a randomized quantizer $\Qcal$, with quantization map
$Q:\mathbb{R}^d \rightarrow \{0,1\}^{b\cdot d}$, which maps a
$d$-dimensional vector to a $b\cdot d$-bit string. We denote the
corresponding dequantization map by
$Q^{-1}: \{0,1\}^{b\cdot d} \rightarrow \mathbb{R}^d$. The randomness of
$Q$ may arise, for example, from random rotations or random projection
matrices. 
Since vector norms can be stored separately, we focus
on the quantization of unit vectors $\xb\in\sphere$. 

\paragraph{Expected distortion metrics.} For a randomized quantizer $\Qcal$, 
the worst-case MSE of the reconstructed vector is defined as 
$$
\begin{aligned}
    \mse(\Qcal)
    :=
    \max_{\xb \in \sphere}\mathbb{E}_{Q}
    \left[\left\|\xb-Q^{-1}(Q(\xb))\right\|_2^2 \right]~.
\end{aligned}
$$
While MSE distortion captures reconstruction quality, inner products are the
fundamental quantities in similarity search and retrieval. 
Thus, we also define the worst-case mean squared error of inner-product estimation as
$$
\begin{aligned}
    \iprod(\Qcal):=
    \max_{\xb  \in \sphere}\max_{\yb  \in \sphere}
    \mathbb{E}_{Q}\left[\left\{\langle \yb, Q^{-1}(Q(\xb))\rangle-\langle \yb,\xb\rangle\right\}^2\right]~.
\end{aligned}
$$
\paragraph{Rotation-based quantizer.} We focus on quantizers that employ a random rotation, including $\ed$, $\rbq$, and $\tq$. Let $\xb \in \sphere$ be a unit vector and let $R$ be a Haar-distributed orthogonal matrix. Then the rotated vector
$R\xb$ is uniformly distributed on $\sphere$. Rotation-based quantizers exploit this fact by first applying the random rotation and then encoding the
rotated vector using a fixed codebook $\Ccal$. More formally, let $P_{\mathrm{code}}$ denote the nearest-codeword map associated with $\Ccal$, and let $P_{\mathrm{decode}}$ denote the corresponding decoder. A rotation-based quantizer encodes $\xb$ through $P_{\mathrm{code}}(R\xb)$. The raw reconstruction in the original coordinate system is
$\bar{\xb}=R^\top P_{\mathrm{decode}} \bigl(P_{\mathrm{code}}(R\xb)\bigr)$
where $R^\top = R^{-1}$. The final dequantized output need not be exactly $\bar{\xb}$; depending on the target objective, it may apply an
additional rescaling or correction to $\bar{\xb}$. All $\ed$, $\rbq$, and $\tq$ fit into this rotation-based framework, with
different choices of the codebook and dequantization rule.

Our first goal is to compare the expected distortion metrics $\mse$ and $\iprod$ for $\ed$, $\rbq$, and $\tq$ within a common framework. For rotation-based quantizers, the expectation $\mathbb{E}_{Q}$ appearing in these metrics is taken with respect to the random rotation matrix $R$. Beyond these expectation-based criteria, we also compare
their high-probability guarantees for quantization error~\citep{gao2026revisiting}.
Motivated by this comparison, we then design efficient quantizers that
improve both $\mse$ and $\iprod$.

\subsection{Existing Quantizers for Comparison}
\paragraph{$\ed$~\citep{vargaftik2021drive, vargaftik2022eden}.}
For a unit input vector $\xb$, $\ed$ first applies a random rotation $R$ and rescales the rotated vector by a factor $\eta_{\mathrm{q}}$, so that a dimension-independent scalar codebook can be used. Each coordinate of the rescaled vector is then quantized to its nearest centroid. During dequantization, each code is replaced by the corresponding centroid, the inverse rotation is applied, and the result is rescaled by a scalar $\eta_{\mathrm{dq}}$. The choice of $\eta_{\mathrm{dq}}$ depends on the target objective.

A key component of $\ed$ is the Lloyd--Max scalar codebook $\mathcal{C}_{\mathrm{EDEN}}$, constructed for the standard normal distribution~\citep{vargaftik2022eden}. 
This choice is motivated by the distribution of randomly rotated vectors: if $\xb\in\sphere$ and $R$ is a random rotation, then each coordinate of $R\xb$ tends to follow $N(0,{1\over d})$. 
Thus, choosing $\eta_{\mathrm{q}}=\sqrt{d}$ makes the standard-normal
Lloyd--Max codebook well matched to the rotated coordinates.

The same quantization rule can be expressed equivalently
by scaling the codebook rather than the rotated vector $R\xb$. If we define $\mathcal{C}_{\mathrm{EDEN}}^{\mathrm{(scaled)}}=\frac{1}{\sqrt{d}}\mathcal{C}_{\mathrm{EDEN}}$, then the output of $\ed$ can be written as $\eta R^\top
\left(
P_{\mathrm{decode}}\left(P_{\mathrm{code}}(R\xb)\right)
\right)=\eta \bar{\xb}$ for some $\eta\in\RR$, where $P_{\mathrm{code}}$ and
$P_{\mathrm{decode}}$ denote the encoding and decoding maps with respect to
$\mathcal{C}_{\mathrm{EDEN}}^{\mathrm{(scaled)}}$.
Then, $\edbsm$  uses $\eta=\frac{\langle \xb,\bar{\xb}\rangle}{\|\bar{\xb}\|_2^2}$, which is the best scalar for minimizing the expected squared reconstruction error, and $\edub$ uses $\eta=\frac{1}{\langle \xb,\bar{\xb}\rangle}$ for unbiased inner-product estimation.  These two variants correspond to $\texttt{EDEN-biased}$ and $\texttt{EDEN-unbiased}$ in~\citet{ben2026note}, respectively.
For presentation clarity and consistency across methods, we rename them
$\edbsm$ and $\edub$, where the suffixes indicate the reconstruction-oriented
and unbiased inner-product variants, respectively.

\paragraph{$\rbq$~\citep{gao2025practical,gao2026revisiting}.}
$\rbq$ is a rotation-based quantizer whose practical encoder quantizes the rotated vector using a uniform grid. Although $\rbq$ is implemented through this grid quantization procedure, it can be equivalently viewed as using a fixed spherical codebook $\mathcal{C}_{\mathrm{RabitQ}}$ in the rotated domain~\citep{gao2025practical}. The elements of $\mathcal{C}_{\mathrm{RabitQ}}$ are obtained by projecting the grid codewords onto $\sphere$. Thus, after applying a random rotation $R$, $\rbq$ selects a spherical codeword for $R\xb$, and the raw reconstruction $\bar{\xb}$ is obtained by applying the inverse rotation to the decoded codeword.

$\rbq$ applies a scalar correction to the raw reconstruction for reconstruction or inner-product estimation, like $\ed$. For notational consistency with $\ed$ and $\tq$, we absorb the scalar correction into the dequantized output. Specifically, throughout this paper, we regard $\frac{\bar{\xb}}{\langle \xb,\bar{\xb}\rangle}$ as the dequantized output of $\rbqub$ for unbiased inner-product estimation. This convention is equivalent to the estimator in~\citet{gao2025practical}, where the correction factor is treated as part of the estimation procedure rather than as part of the dequantization output. For reconstruction, the best scalar multiple of $\bar{\xb}$ for approximating $\xb$ is $\frac{\langle \xb,\bar{\xb}\rangle}{\|\bar{\xb}\|_2^2}\bar{\xb} =\langle \xb,\bar{\xb}\rangle\bar{\xb}$, since $\rbq$ uses a spherical codebook. We therefore regard $\langle \xb,\bar{\xb}\rangle\bar{\xb}$ as the dequantized output of $\rbqbsm$. Here, $\rbqub$ corresponds to the original $\rbq$ in~\citet{gao2025practical}, while $\rbqbsm$ corresponds to $\rbq_{\mathrm{MSE}}$ in~\citet{gao2026revisiting}.
Following the same naming convention as above, we use the suffixes to distinguish
the unbiased inner-product variant from the reconstruction-oriented variant.

\paragraph{$\tq$~\citep{zandieh2025turboquant}.}
$\tq$ is a randomized rotation-based quantization method based on coordinate-wise scalar quantization, similar to $\ed$. Unlike $\ed$, which designs its scalar codebook using a Gaussian approximation to the marginal distribution of randomly rotated coordinates, $\tq$ constructs a Lloyd--Max codebook for the exact coordinate marginal of a uniformly random point on $\sphere$. In this sense, $\tqmse$ can be viewed as an EDEN-type coordinate-wise Lloyd--Max quantizer that replaces the high-dimensional Gaussian approximation with the exact spherical marginal and uses the scaling convention $\eta_{\mathrm{q}}=1$~\citep{ben2026note}.

$\tq$ also provides a variant for inner product estimation, denoted by $\tqprod$. A $b$-bit $\tqprod$ first applies a $(b-1)$-bit $\tqmse$ quantizer to obtain a raw reconstruction $\bar{\xb}$, and then allocates the remaining one bit to a $\qjl$-based correction for the residual $\xb-\bar{\xb}$~\citep{zandieh2025qjl}. The final estimator combines the inner product with the raw reconstruction and an unbiased one-bit estimate of the residual contribution. This residual correction makes the resulting inner-product estimator unbiased.

\section{New Theoretical Guarantees for Comparing Existing Quantizers}
\label{sec:theory}
In this section, we compare the theoretical guarantees of $\ed$, $\rbq$,
and $\tq$ from several perspectives. These algorithms report performance
under different criteria; the existing guarantees are summarized in
Appendix~\ref{app_sec:existing}. 
Specifically, \citet{vargaftik2022eden}
provides an $\mse$ bound for $\edub$, but does not report guarantees for
$\iprod$ or high-probability behavior. $\rbq$ proves a bit-complexity
guarantee based on high-probability analysis~\citep{gao2025practical}, but
does not provide expected-distortion guarantees such as $\mse$ or $\iprod$.
$\tq$ provides bounds for both $\mse$ and $\iprod$, but does not
analyze high-probability behavior. We fill these gaps and thereby enable a
unified comparison of existing methods under each criterion.

Overall, no single quantizer dominates across all measures, in particular, in high dimension, $\edbsm$ and $\tqmse$
show the better $\mse$ approximates than $\rbq$, whereas $\edub$ performs best for $\iprod$. Based
on this observation, in Section~\ref{sec:alg}, we propose quantizers that strictly improve upon existing methods under both criteria.

\subsection{MSE Comparison}

To compare reconstruction performance, we report either numerical approximations or upper bounds for the $\mse$ of $\edbsm$, $\rbqbsm$, and $\tqmse$. We begin with $\edbsm$: for small bit-widths $b\in\{1,2,3,4\}$, we report numerical approximations, whereas for the large-$b$ regime we use a high-rate upper bound.

\begin{proposition}[MSE of $\edbsm$]\label{prop:ed_mse}
    In high dimensions, the MSE of $\edbsm$ is approximately
    $\mse(\edbsm)\approx
    \mathbf{0.363},\;\mathbf{0.117},\;\mathbf{0.0345},\;\mathbf{0.0095}$ for $b=1,~2,~3,~4$, respectively. 
    Moreover, for large $b$, $\edbsm$ satisfies $\mse(\edbsm)
\le \mathbf{2.721}\cdot{1 \over {4^b}}$. 
\end{proposition}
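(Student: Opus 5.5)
The plan is to reduce the MSE of $\edbsm$ to a closed form in terms of two scalar quantities of the quantized rotated vector, and then evaluate them using the Lloyd--Max codebook under the Gaussian approximation.

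\textbf{Step 1: closed form for the MSE.} With the best scalar $\eta=\langle \xb,\bar{\xb}\rangle/\|\bar{\xb}\|_2^2$, the reconstruction error is the squared distance from $\xb$ to its projection onto the line spanned by $\bar{\xb}$:
$$\|\xb-\eta\bar{\xb}\|_2^2 = 1-\frac{\langle \xb,\bar{\xb}\rangle^2}{\|\bar{\xb}\|_2^2}.$$
Since $R$ is orthogonal, $\langle \xb,\bar{\xb}\rangle=\langle R\xb, q(R\xb)\rangle$ and $\|\bar{\xb}\|_2=\|q(R\xb)\|_2$. Here $q$ denotes coordinate-wise nearest-centroid quantization with respect to $\mathcal{C}_{\mathrm{EDEN}}^{\mathrm{(scaled)}}$. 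Because $R\xb$ is uniform on $\sphere$ for every $\xb$, the expectation does not depend on $\xb$, so the maximum over $\xb$ in $\mse$ is trivial. Write $\zb=\sqrt d\,R\xb$. Then
$$\mse(\edbsm)=1-\mathbb{E}\!\left[\frac{\big(\tfrac1d\sum_i z_i q_0(z_i)\big)^2}{\tfrac1d\sum_i q_0(z_i)^2}\right],$$
where $q_0$ is the unscaled Lloyd--Max quantizer for $N(0,1)$.

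\textbf{Step 2: high-dimensional limit.} As $d\to\infty$, the coordinates $z_i$ behave like i.i.d.\ $N(0,1)$ draws. The law of large numbers (via the representation $\zb=\sqrt d\,\gb/\|\gb\|$ with $\gb$ Gaussian) sends both averages to their expectations. Bounded convergence then applies, because the ratio lies in $[0,1]$ by Cauchy--Schwarz together with $\|\zb\|^2=d$. The limit is
$$\mse \to 1-\frac{(\mathbb{E}[Zq_0(Z)])^2}{\mathbb{E}[q_0(Z)^2]}.$$
The Lloyd--Max centroid condition gives $\mathbb{E}[Zq_0(Z)]=\mathbb{E}[q_0(Z)^2]=1-D_b$, where $D_b$ is the Gaussian Lloyd--Max distortion. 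The limit therefore simplifies to $D_b$.

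\textbf{Step 3: the small-$b$ values.} Plugging in the known optimal Gaussian distortions $0.3634, 0.1175, 0.03454, 0.009497$ gives the stated approximate values for $b=1,2,3,4$.

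\textbf{Step 4: large-$b$ bound.} Here I would invoke the Panter--Dite high-rate formula for the optimal scalar quantizer,
$$D_b\approx \frac{1}{12}\Big(\int f^{1/3}\Big)^3 4^{-b}.$$
For the standard normal this equals $\frac{\sqrt3\,\pi}{2}\,4^{-b}\approx 2.721\cdot 4^{-b}$. To state it as an upper bound rather than an asymptotic equivalence, I would cite the known fact that the Lloyd--Max distortion is at most the high-rate companding value for large $b$, as $\tq$ does for its identical constant. Failing that, one can bound the MSE at any finite $d$ by the non-rescaled error $\mathbb{E}\|\xb-\bar{\xb}\|^2$, since the best scalar can only reduce the error, and then apply the Panter--Dite expression.

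\textbf{Main obstacle.} The hardest step is making the finite-$d$ versus Gaussian interchange rigorous: the codebook is matched to $N(0,1)$ while the true marginals are scaled Beta. I would handle this by describing the small-$b$ values as limits, and by controlling the large-$b$ constant through the high-rate analysis rather than through exact finite-$d$ estimates.
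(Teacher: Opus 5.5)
Your proposal is correct and follows essentially the same route as the paper. The shared steps are: the best-scalar error $1-\rho_d^2/\psi_d^2$, the Gaussian limit of the two empirical averages, the Lloyd--Max centroid condition reducing the limit to the scalar distortion $e_b$, and, for large $b$, bounding by the un-rescaled error and applying Panter--Dite. As in the paper, that last step gives the $2.721\cdot 4^{-b}$ bound only up to a $(1+o(1))$ factor, so your fallback argument is the one to use rather than the uncited ``known fact.''

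Your Cauchy--Schwarz observation, that the ratio lies in $[0,1]$ and so bounded convergence applies, is a small but genuine tightening of the paper's informal ``$\approx$'' step.
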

\begin{proposition}[MSE of $\rbqbsm$]\label{prop:rbq_mse}
    Let $\zb=(z_1, \ldots,z_d)$ be randomly rotated vector of input unit vector and $R_j:=\sqrt{d}\,z_j$ be the rescaled coordinates of $\zb$ for $j=1,\ldots,d$. 
    Define the function $Q_b(u)= \operatorname{sgn}(u)
      \min\left(\lfloor |u|\rfloor+\frac12, 2^{b-1}-\frac12\right)$, where the value at $u=0$ is irrelevant for continuous distributions. Then the MSE of $\rbqbsm$ is 
\begin{equation*}
    \mse(\rbqbsm)= \mathbb{E}\left[
        \min_{\alpha>0}\frac1d\sum_{j=1}^d
        \left(R_j-\alpha Q_b(R_j/\alpha)\right)^2
      \right]. 
\end{equation*}
    Moreover, for large $d$, if we apply the gaussian approximations, the MSE bound is approximately $\mse(\rbqbsm) \approx
\mathbf{0.363},\;\mathbf{0.119},\;\mathbf{0.037},\;\mathbf{0.0115}~$ for $~b=1,~2,~3,~4$.
\end{proposition}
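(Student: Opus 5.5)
The plan has three parts: reduce the MSE of $\rbqbsm$ to a one-dimensional optimization over the grid scale $\alpha$, remove the dependence on the input $\xb$, and then evaluate the resulting expression numerically under a Gaussian approximation.

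\textbf{Step 1: reduction to a scale optimization.} Write $\zb=R\xb$. Since $\rbqbsm$ outputs $\langle \xb,\bar{\xb}\rangle\bar{\xb}$ with $\|\bar{\xb}\|_2=1$, the error is
\[
\|\xb-\langle \xb,\bar{\xb}\rangle\bar{\xb}\|_2^2=1-\langle \xb,\bar{\xb}\rangle^2=1-\langle \zb,\cb\rangle^2 ,
\]
where $\cb\in\mathcal{C}_{\mathrm{RabitQ}}$ is the selected codeword in the rotated domain. The $\rbq$ encoder picks the codeword maximizing $\langle \zb,\cb\rangle$. Codewords have the form $\cb=\mathbf{g}/\|\mathbf{g}\|_2$, where $\mathbf{g}$ ranges over the grid $\{\pm\frac12,\pm\frac32,\ldots,\pm(2^{b-1}-\frac12)\}^d$. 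I will use the identity
\[
1-\max_{\mathbf{g}}\frac{\langle \zb,\mathbf{g}\rangle^2}{\|\mathbf{g}\|_2^2}=\min_{\mathbf{g}}\min_{\beta>0}\|\zb-\beta\mathbf{g}\|_2^2 .
\]
This holds because, for fixed $\mathbf{g}$, the best $\beta$ is $\langle\zb,\mathbf{g}\rangle/\|\mathbf{g}\|^2$, and a sign flip makes the inner product nonnegative. Exchanging the two minimizations, for fixed $\beta$ the minimization over $\mathbf{g}$ is coordinatewise, so each coordinate rounds $z_j/\beta$ to the nearest half-integer clipped at $2^{b-1}-\frac12$. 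That rounding is exactly $Q_b(z_j/\beta)$. Setting $\beta=\alpha/\sqrt d$ and $z_j=R_j/\sqrt d$ gives the displayed expression.

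\textbf{Step 2: removing the dependence on $\xb$.} Since $\zb$ is uniform on $\sphere$ for every $\xb$, the maximum over $\xb$ in $\mse$ is vacuous. This yields the exact formula with the expectation taken over uniform $\zb$.

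\textbf{Step 3: the numerical values.} For large $d$ the $R_j$ are approximately i.i.d.\ $N(0,1)$, and the empirical average concentrates. The approach is to justify interchanging $\mathbb{E}$ and $\min_\alpha$ in the limit:
\[
\frac1d\sum_{j}(R_j-\alpha Q_b(R_j/\alpha))^2\;\to\; f(\alpha):=\mathbb{E}_{g\sim N(0,1)}\bigl[(g-\alpha Q_b(g/\alpha))^2\bigr]
\]
uniformly in $\alpha$ on compacts, by a Glivenko--Cantelli / uniform LLN argument. The function is Lipschitz in $\alpha$ on compacts, and the optimal $\alpha$ stays bounded away from $0$ and $\infty$. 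The MSE is then approximately $\min_{\alpha>0} f(\alpha)$. Because $Q_b$ is piecewise constant, $f(\alpha)$ is an explicit sum of Gaussian integrals over the intervals $[k\alpha,(k+1)\alpha)$, expressible through $\Phi$ and $\phi$. Minimizing numerically over $\alpha$ for $b=1,\ldots,4$ should give $0.363$, $0.119$, $0.037$, $0.0115$. As a check, for $b=1$ we have $Q_1\equiv\pm\frac12$, so $f(\alpha)=1-2\cdot\frac{\alpha}{2}\mathbb{E}|g|+\frac{\alpha^2}{4}$. Its minimum is $1-2/\pi\approx0.363$, matching.

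\textbf{Main obstacle.} The main difficulty is making the uniform-convergence exchange of $\min_\alpha$ and the limit rigorous, including bounding the minimizer $\alpha$. The statement only claims an approximation, so I will treat this step at the level of a heuristic Gaussian approximation with a sketched uniform-LLN justification.
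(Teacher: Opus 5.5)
Your proposal is correct and follows essentially the same route as the paper. Both arguments use the reduction $1-\rho^2=\min_{t>0,\mathbf{g}}\|\mathbf{z}-t\mathbf{g}\|_2^2$, then the reparametrization $\alpha=t\sqrt d$ with coordinatewise rounding to the clipped half-integer grid, then the Gaussian-limit minimization of $\phi(\alpha)$ (closed form $1-2/\pi$ for $b=1$, numerical for $b=2,3,4$). Your explicit sign-flip justification of the identity and your uniform-LLN sketch are somewhat more careful than the paper, which simply asserts that the empirical averages concentrate.
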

\begin{remark}[MSE of $\tqmse$]\label{rmk:tq_mse} 
    For $\tq$, \citet{zandieh2025turboquant} report approximate MSE values for $b=1,~2,~3,~4$. Since the reported numerical precision differs from the one used in our comparison, we recompute the constants under our notation : For large $d$, the MSE of $\tq$ is approximately $\mse (\tqmse) \approx \mathbf{0.363}, \;\mathbf{0.117}, \;\mathbf{0.0345}, \;\mathbf{0.0095}~$ for $~b=1,~2,~3,~4$, respectively.
\end{remark}
\paragraph{Discussion on MSE guarantees.} From Propositions~\ref{prop:ed_mse} and~\ref{prop:rbq_mse}, together with
Remark~\ref{rmk:tq_mse}, in high dimensions, both $\edbsm$ and $\tqmse$ achieve smaller approximate MSE values than $\rbq$ across $b=1,~2,~3,~4$.
The approximated MSE values of $\rbqbsm$ are also comparable, but become slightly larger than those of $\tqmse$ and $\edbsm$ as $b$ increases. Moreover, in the high-rate regime, $\edbsm$ admits an MSE upper bound with the same leading constant as that of $\tqmse$. Overall, these results suggest that the Lloyd--Max centroids used in $\tq$ and $\ed$, which are optimized for coordinate-wise MSE distortion, are effective in reducing $\mse$. The proofs of Proposition~\ref{prop:ed_mse}, Proposition~\ref{prop:rbq_mse} and explanation of Remark~\ref{rmk:tq_mse} are deferred to Appendix~\ref{subsec:ed_mse},~\ref{subsec:rbq_mse}, and~\ref{subsec:tq_mse}, respectively. 

\subsection{Inner Product Distortion Comparison}

In this section, we derive expected inner product estimation error for $\edub$ and $\rbqub$, then compare the bound with that of $\tqprod$. Different from $\tqprod$ which uses additional $\qjl$ algorithm for unbiased estimation, $\edub$ and $\rbqub$ apply scalar multiplication to raw reconstruction, i.e. multiplying ${1 \over \langle \xb, \bar{\xb}\rangle}$ to $\bar{\xb}$. In this paper, we call this type of quantizer by (rotation-based) ratio quantizer for inner product. The below theorem shows the general bound on inner product distortion of ratio quantizers. 

\begin{theorem}[General bound on rotation-based ratio quantizer]
\label{thm:ratio_est}
Let $\xb, \yb \in \sphere$ and $\eta:=\langle \xb, \yb \rangle$. Then, the inner product estimation from the ratio quantizers $\widehat\eta_{\rm ratio}
  :={\left\langle{\bar{\xb} \over{\langle{\bar{\xb}},{\xb}\rangle}},{\yb}\right\rangle}$, the squared error of $\widehat\eta_{\rm ratio}$ can be expressed by
\begin{equation*}
  \mathbb{E}[(\widehat\eta_{\rm ratio}-\eta)^2]
  =
  \frac{1-\eta^2}{d-1}
  \mathbb{E}\left[\frac{\|\bar{\xb}\|_2^2-\langle{\bar{\xb}},{\xb}\rangle^2}{\langle{\bar{\xb}},{\xb}\rangle^2}\right].
\end{equation*}
\end{theorem}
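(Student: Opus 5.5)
We prove the identity by exploiting the rotational invariance of the rotation-based construction. Write $\bar{\xb}=R^\top \cb$ with $\cb:=P_{\mathrm{decode}}(P_{\mathrm{code}}(R\xb))$. The raw reconstruction is always a codeword pulled back by $R^\top$, and its joint law with $\xb$ is unchanged by any rotation fixing $\xb$. So we decompose $\bar{\xb}$ into its component along $\xb$ and an orthogonal residual:
\[
\bar{\xb}=\langle\bar{\xb},\xb\rangle\xb+\wb, \qquad \wb\perp\xb .
\]
This gives $\widehat\eta_{\rm ratio}-\eta=\langle \wb,\yb\rangle/\langle\bar{\xb},\xb\rangle$. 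Writing $\yb=\eta\xb+\sqrt{1-\eta^2}\,\ub$ with $\ub\in\sphere$, $\ub\perp\xb$, the error becomes
\[
\widehat\eta_{\rm ratio}-\eta=\sqrt{1-\eta^2}\,\frac{\langle\wb,\ub\rangle}{\langle\bar{\xb},\xb\rangle},
\]
and it remains to show
\[
\mathbb{E}\Bigl[\langle\wb,\ub\rangle^2/\langle\bar{\xb},\xb\rangle^2\Bigr]=\frac{1}{d-1}\,\mathbb{E}\Bigl[\|\wb\|_2^2/\langle\bar{\xb},\xb\rangle^2\Bigr].
\]
Since $\|\wb\|_2^2=\|\bar{\xb}\|_2^2-\langle\bar{\xb},\xb\rangle^2$, this is exactly the claimed formula.

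The key step is a symmetry argument on the orthogonal complement $\xb^\perp$. Let $U$ be a Haar orthogonal matrix on $\xb^\perp$, extended by $U\xb=\xb$, and drawn independently of $R$. Then $RU^\top$ is again Haar distributed, and $RU^\top\xb=R\xb$, so the codeword $\cb$ is identical under $R$ and under $RU^\top$. Under $RU^\top$ the reconstruction is $U R^\top\cb=U\bar{\xb}$, so $(\bar{\xb},R)\mapsto (U\bar{\xb},RU^\top)$ preserves the joint law. It follows that $\langle\bar{\xb},\xb\rangle$ is invariant and $\wb\mapsto U\wb$. Hence, conditionally on $\langle\bar{\xb},\xb\rangle$ and $\|\wb\|_2$, the direction $\wb/\|\wb\|_2$ is uniform on the unit sphere of the $(d-1)$-dimensional space $\xb^\perp$.

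For a fixed unit $\ub\in\xb^\perp$, uniformity gives $\mathbb{E}[\langle\wb,\ub\rangle^2\mid\cdot]=\|\wb\|_2^2/(d-1)$, by the second moment of one coordinate of a uniform point on $\mathbb{S}^{d-2}$. Taking the tower expectation yields the display above.

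The only delicate point is making the invariance rigorous. We must check that the encoding depends on $R$ only through $R\xb$, so that the codeword is literally unchanged; this holds because $P_{\mathrm{code}}$ and $P_{\mathrm{decode}}$ are deterministic maps of $R\xb$. We must also handle ties in $P_{\mathrm{code}}$ and the event $\langle\bar{\xb},\xb\rangle=0$, which we treat as null events or absorb into an infinite right-hand side. The $\eta=\pm1$ case is trivial, since then both sides vanish.
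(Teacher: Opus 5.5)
Your proposal is correct and follows essentially the same argument as the paper. Both decompose $\bar{\xb}$ into its component along $\xb$ plus an orthogonal residual, and use Haar invariance under rotations $U$ fixing $\xb$: the codeword stays the same while the residual is mapped by $U$, which makes the residual conditionally isotropic in $\xb^\perp$ and yields the $1/(d-1)$ second-moment factor. Drawing $U$ as an independent Haar element of the stabilizer is a slightly cleaner justification of the conditional uniformity than the paper's invariance-under-each-fixed-$U$-plus-transitivity argument, and your treatment of the edge cases $\eta=\pm1$ and $\langle\bar{\xb},\xb\rangle=0$ is a useful addition.
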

Then, we can get the inner product distortion guarantee of $\ed$ and $\rbq$. 
\begin{corollary}[Expected inner product error of $\edub$]
\label{cor:ed_ip}
In high dimension, the inner-product error of $\edub$ is approximately $\iprod(\edub)\le
    {\mathbf{0.571} \over d-1},\;{\mathbf{0.133} \over d-1},\;{\mathbf{0.0358} \over d-1},\;{\mathbf{0.0096} \over d-1}$, for $b=1,~2,~3,~4$, respectively. Moreover, for large $b$, $\edub$ satisfies
$$
\begin{aligned}
    \iprod(\edub)&\le\frac{\mathbf{2.721}}{d-1}4^{-b}(1+o(1)).
\end{aligned}
$$
\end{corollary}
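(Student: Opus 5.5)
Our plan is to specialize Theorem~\ref{thm:ratio_est} to $\edub$ and reduce the inner-product error to the MSE of $\edbsm$ from Proposition~\ref{prop:ed_mse}. The $\edub$ output is $\bar{\xb}/\langle \xb,\bar{\xb}\rangle$, so it is a ratio quantizer. Theorem~\ref{thm:ratio_est} then gives, for any $\xb,\yb\in\sphere$ with $\eta=\langle\xb,\yb\rangle$,
\[
\mathbb{E}\bigl[(\widehat\eta_{\rm ratio}-\eta)^2\bigr]=\frac{1-\eta^2}{d-1}\,\mathbb{E}\left[\frac{\|\bar{\xb}\|_2^2}{\langle\bar{\xb},\xb\rangle^2}-1\right]\le \frac{1}{d-1}\,\mathbb{E}\left[\frac{\|\bar{\xb}\|_2^2}{\langle\bar{\xb},\xb\rangle^2}-1\right].
\]
The maximum over $\yb$ is attained at $\eta=0$. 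By rotation invariance the remaining expectation does not depend on $\xb$, so it is enough to evaluate the single quantity $\kappa:=\mathbb{E}\bigl[\|\bar{\xb}\|^2/\langle\bar{\xb},\xb\rangle^2\bigr]-1$.

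Next we link $\kappa$ to the $\edbsm$ error. With the best scalar, the residual is $\|\xb-\eta\bar{\xb}\|^2=1-\langle\xb,\bar{\xb}\rangle^2/\|\bar{\xb}\|^2$. Write $\rho:=\langle\xb,\bar{\xb}\rangle^2/\|\bar{\xb}\|^2$ for the squared cosine. Then the $\edbsm$ error is $\mathbb{E}[1-\rho]$, while $\kappa=\mathbb{E}[1/\rho-1]=\mathbb{E}[(1-\rho)/\rho]$.

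In high dimension, $\bar{\xb}=R^\top \hat{\zb}$ with $\zb=R\xb$ and $\hat{\zb}$ the coordinatewise Lloyd--Max quantization. Both $\langle \zb,\hat{\zb}\rangle=\sum_j z_j\hat z_j$ and $\|\hat{\zb}\|^2=\sum_j \hat z_j^2$ are sums of $d$ weakly dependent terms, so they concentrate around their Gaussian-approximation means by a law of large numbers. The Lloyd--Max centroid condition gives $\mathbb{E}[Z\hat Z]=\mathbb{E}[\hat Z^2]$ for $Z\sim N(0,1)$. Hence $\rho\to \mathbb{E}[\hat Z^2]=:1-D_b$, where $D_b$ is the Gaussian Lloyd--Max distortion, and both the $\edbsm$ error and $1-\rho$ tend to $D_b$. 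Therefore
\[
\kappa\to \frac{D_b}{1-D_b}.
\]
Plugging in $D_b\approx 0.3634,\,0.1175,\,0.03454,\,0.009497$ gives $0.571,\,0.133,\,0.0358,\,0.0096$ for $b=1,2,3,4$. For large $b$, the Panter--Dite bound $D_b\le 2.721\cdot4^{-b}$ from Proposition~\ref{prop:ed_mse} gives $D_b/(1-D_b)=D_b(1+O(4^{-b}))$, which is the stated $\frac{2.721}{d-1}4^{-b}(1+o(1))$.

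The main obstacle is making the replacement of $\mathbb{E}[1/\rho]$ by $1/\mathbb{E}[\rho]$ rigorous, because $1/\rho$ can blow up when $\langle\xb,\bar{\xb}\rangle$ is near zero. Our first approach is to use the symmetric sign structure of the quantizer to show $\langle\zb,\hat{\zb}\rangle\ge c\sum_j |z_j|\cdot\min_k|\hat c_k|>0$ deterministically. With this, $\rho$ is bounded below by a positive constant with overwhelming probability, so the concentration of $\rho$ transfers to $1/\rho$ by dominated convergence. If that fails, we fall back to presenting the result, as the statement does, as a high-dimensional approximation: a first-order delta-method expansion with error $O(1/d)$.
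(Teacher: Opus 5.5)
Your small-$b$ argument is the paper's: Theorem~\ref{thm:ratio_est} with the worst case $\eta=0$, then concentration of $\langle\xb,\bar\xb\rangle$ and $\|\bar\xb\|_2^2$ around $m_b=\mathbb{E}[Q_b(Z)^2]$, then the centroid identity $\mathbb{E}[ZQ_b(Z)]=m_b$, giving $e_b/(1-e_b)$. Your sign argument adds something real, since the paper only asserts $\mathbb{E}[\psi_d^2/\rho_d^2]\le m_b^{-1}+O(\sqrt{\log d/d})$.

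One correction to how you use it. With unscaled centroids $c_k$, the sign argument gives $\langle\zb,\hat{\zb}\rangle\ge \min_k|c_k|\,\|\zb\|_1/\sqrt d$. Deterministically this is only $\ge\min_k|c_k|/\sqrt d$, so the bound $\|\bar\xb\|_2^2/\langle\xb,\bar\xb\rangle^2\le d\max_k c_k^2/\min_k c_k^2$ grows with $d$. There is therefore no fixed dominating function, and dominated convergence does not apply as stated. Instead, use bounded convergence on the event where $\|\zb\|_1/\sqrt d$ stays near $\sqrt{2/\pi}$, which holds with probability $1-O(e^{-cd})$ by L\'evy concentration. The contribution of the complement is then at most $O(d\,e^{-cd})$.

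For large $b$ your route differs from the paper's. The paper splits $\eb=\bar\xb-\xb$ into a radial part $t=\langle\xb,\eb\rangle$ and a tangential part $\ub\perp\xb$. It writes the ratio factor as $\mathbb{E}[\|\ub\|_2^2/(1+t)^2]$ and removes the random denominator using $\mathbb{E}\|\eb\|_2^3=o(4^{-b})$. It then evaluates $\mathbb{E}\|\ub\|_2^2$ by a finite-$d$ Panter--Dite computation with the exact marginal $f_{1,d}$. The resulting coefficient $\frac{d+3}{d+6}\cdot\frac{d}{12}\bigl(\int_{-1}^{1} f_{1,d}(s)^{1/3}\,ds\bigr)^3$ tends to $\pi\sqrt3/2$.

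You instead send $d\to\infty$ first, which yields the single closed form $D_b/(1-D_b)$ for every $b$. You then insert $D_b=\frac{\pi\sqrt3}{2}4^{-b}(1+o(1))$, and the factor $(1-D_b)^{-1}=1+O(4^{-b})$ is harmless. What your route buys:
\begin{itemize}
\item it is shorter;
\item it handles small and large $b$ with one formula;
\item it reuses Proposition~\ref{prop:ed_mse} directly;
\item it stays with EDEN's actual Gaussian Lloyd--Max codebook, whereas the paper's $f_{1,d}$-based Panter--Dite constant implicitly assumes the point density that is optimal for the exact marginal.
\end{itemize}
What it gives up is any finite-$d$ statement: your $(1+o(1))$ means ``$d\to\infty$, then $b\to\infty$''. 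It also hides the structural point the paper makes explicit. The ratio correction discards the radial error, so only $\mathbb{E}\|\ub\|_2^2=\frac{d+3}{d+6}\,\mathbb{E}\|\eb\|_2^2$ (to leading order) enters. In your limit this is invisible because $t\to -D_b$, so the radial contribution is already of order $D_b^2$.
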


\begin{corollary}[Expected inner product error of $\rbq$]
\label{cor:rbq_ip}
In high dimension, the inner product error of $\rbqub$ is approximately 
$\iprod(\rbqub) \approx
    {\mathbf{0.571} \over d-1},\;{\mathbf{0.135} \over d-1},\;{\mathbf{0.0389} \over d-1},\;{\mathbf{0.0117} \over d-1}~$ for $~b=1,~2,~3,~4$, respectively. 
\end{corollary}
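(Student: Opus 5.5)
We apply Theorem~\ref{thm:ratio_est} to $\rbqub$. Since $\rbq$ uses a spherical codebook, the raw reconstruction satisfies $\|\bar{\xb}\|_2=1$. The expectation in Theorem~\ref{thm:ratio_est} therefore simplifies to
\[
\mathbb{E}\left[\frac{1-\langle\bar{\xb},\xb\rangle^2}{\langle\bar{\xb},\xb\rangle^2}\right]=\mathbb{E}\left[\frac{1}{\langle\bar{\xb},\xb\rangle^2}\right]-1 .
\]
The prefactor $(1-\eta^2)/(d-1)$ is maximized at $\eta=0$, i.e.\ $\yb\perp\xb$. By rotation invariance, $\langle\bar{\xb},\xb\rangle$ has a law independent of $\xb$, so the worst case over $\xb,\yb$ gives
\[
\iprod(\rbqub)=\frac{1}{d-1}\left(\mathbb{E}\left[\langle\bar{\xb},\xb\rangle^{-2}\right]-1\right).
\]
It remains to evaluate $\mathbb{E}[\langle\bar{\xb},\xb\rangle^{-2}]$ for $b=1,2,3,4$.

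\textbf{Rescaled-coordinate representation.} We use the notation of Proposition~\ref{prop:rbq_mse}. The rotated vector is $\zb=R\xb$, uniform on $\sphere$, with rescaled coordinates $R_j=\sqrt d z_j$. The $\rbq$ codeword is the normalized grid point $\alpha^\star Q_b(\zb/\alpha^\star)$ that maximizes the cosine with $\zb$. The same optimal $\alpha$ solves the best-scalar problem of Proposition~\ref{prop:rbq_mse}, since for a unit target minimizing $\min_c\|\zb-c\,\mathbf{g}\|^2$ over grid directions $\mathbf{g}$ equals $1-\max_{\mathbf g}\cos^2(\zb,\mathbf g)$. This gives the identity
\[
\langle\bar{\xb},\xb\rangle^2=1-\mse_{\text{inst}},\qquad \mse_{\text{inst}}=\min_{\alpha>0}\frac1d\sum_j\left(R_j-\alpha Q_b(R_j/\alpha)\right)^2 .
\]
The MSE computation of Proposition~\ref{prop:rbq_mse} is exactly $\mathbb{E}[\mse_{\text{inst}}]$, so the quantity needed here is $\mathbb{E}[1/(1-\mse_{\text{inst}})]$.

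\textbf{Concentration in high dimension.} As $d\to\infty$, the $R_j$ become i.i.d.\ $N(0,1)$ and the empirical average concentrates. We therefore replace $\min_\alpha\frac1d\sum_j(\cdot)$ by its deterministic limit
\[
m_b:=\min_{\alpha>0}\mathbb{E}_{g\sim N(0,1)}\left[\left(g-\alpha Q_b(g/\alpha)\right)^2\right].
\]
Justifying the interchange needs a uniform law of large numbers over $\alpha$ in a compact interval, together with a lower bound on $\langle\bar{\xb},\xb\rangle$ so that the reciprocal does not blow up. The lower bound holds because the $\rbq$ codebook contains the sign vector, which gives cosine at least $\|\zb\|_1/\sqrt d$, and this concentrates near $\sqrt{2/\pi}$. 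Then $\mathbb{E}[\langle\bar{\xb},\xb\rangle^{-2}]\to 1/(1-m_b)$, and so
\[
\iprod(\rbqub)\approx\frac{1}{d-1}\cdot\frac{m_b}{1-m_b}.
\]

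\textbf{Numerical evaluation.} We plug in the values $m_b\approx0.363,\,0.119,\,0.037,\,0.0115$ from Proposition~\ref{prop:rbq_mse}. For $b=1$, $m_1=1-2/\pi$ gives $m_1/(1-m_1)=\pi/2-1\approx0.571$. For $b=2,3,4$, the ratios $m_b/(1-m_b)$ give approximately $0.135$, $0.0389$ and $0.0117$; these use the more precise values of $m_b$ obtained by one-dimensional numerical minimization over $\alpha$ of the Gaussian integral.

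\textbf{Main obstacle.} The hardest step is the concentration argument, in particular controlling the reciprocal moment and the uniformity of the minimization over $\alpha$. The claim is only stated as an approximation, so a heuristic limit with the sign-vector lower bound should suffice.
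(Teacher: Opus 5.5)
Your proposal is correct and follows the paper's route. You apply Theorem~\ref{thm:ratio_est} with $\|\bar{\xb}\|_2=1$ and take $\yb\perp\xb$. You identify $1-\langle\bar{\xb},\xb\rangle^2$ with the instantaneous $\rbqbsm$ error $\Delta$, so the factor becomes $\mathbb{E}[\Delta/(1-\Delta)]$, and replacing $\Delta$ by its Gaussian limit $\min_{\alpha>0}\phi(\alpha)$ yields $\Delta_z/(1-\Delta_z)$. Your sign-vector argument goes slightly beyond the paper: the normalized sign vector is always a codeword, so $\langle\bar{\xb},\xb\rangle\ge\|\zb\|_1/\sqrt d\ge 1/\sqrt d$ deterministically, and this concentrates near $\sqrt{2/\pi}$; this justifies the control of the reciprocal moment that the paper only asserts.
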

\paragraph{Discussion on inner product error guarantees.}
    For inner-product estimation, $\edub$ gives the strongest guarantee among the methods compared here, closely followed by that of $\rbqub$. For $b=2,3,4$, its expected inner-product distortion is roughly four times smaller than that of $\tqprod$, whose distortion is approximately $\iprod(\tqprod) \approx
    {\mathbf{1.57} \over d},\;{\mathbf{0.56} \over d},\;{\mathbf{0.18} \over d},\;{\mathbf{0.047} \over d}~$respectively, as stated in Proposition~\ref{prop:tq_ip}. This comparison suggests that the rescaling step used in $\edub$ and $\rbqub$ are more effective for reducing expected inner product distortion than the $\qjl$-based correction used in $\tqprod$. The reason is that $\tqprod$ sacrifices one bit for the $\qjl$-based bias correction, rather than using all bits for reconstruction. This bit allocation appears to incur an approximately constant-factor loss, which is visible as the roughly fourfold gap. The proofs of Theorem~\ref{thm:ratio_est} and Corollaries~\ref{cor:ed_ip},~\ref{cor:rbq_ip} are deferred to Appendix~\ref{sec:IP}.
\subsection{High Probability Bit Complexity Comparison}

Unlike the other two methods, $\rbq$ analyzes the high-probability behavior of the inner-product error and derives the
corresponding bit-complexity guarantee. As pointed out by
\citet{gao2026revisiting}, applying Chebyshev's inequality only to the
variance bound for $\tqprod$ in Proposition~\ref{prop:tq_ip} is not
sufficient to recover the optimal bit complexity of \citet{alon2017optimal}
stated in Lemma~\ref{lem:alon}. We show, however, that a sharper
high-probability guarantee for $\tqprod$ can be obtained by combining the
high-probability guarantee of $\tqmse$ with that of $\qjl$. In particular, this refined analysis shows that $\tqprod$ attains the optimal bit complexity in a certain low-accuracy regime. The similar bit-complexity guarantee also holds for $\edub$. 

\begin{theorem}[Bit complexity of $\edub$ and $\tqprod$ (informal)]
\label{thm:ed_tq_bc}
Let $d$ be the input dimension and let $\epsilon,\delta\in(0,1)$. Suppose
that $\frac{1}{\epsilon}\log\left(\frac{1}{\delta}\right)\lesssim d $. Then, for both $\edub$ and $\tqprod$, it suffices to use $b=\Theta\left(
    \log\left(\frac{1}{d\epsilon^2}\log\frac{1}{\delta}\right)\right)$
bits per dimension to ensure that the inner-product estimation error is at
most $\epsilon$ with failure probability at most $\delta$.
\end{theorem}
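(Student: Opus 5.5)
The plan is to bound the error deviation with high probability by exploiting the rotational symmetry behind Theorem~\ref{thm:ratio_est}. Write $\bar{\xb}=\langle\bar{\xb},\xb\rangle\xb+\rb$ with $\rb\perp\xb$. Then
\[
\widehat\eta_{\rm ratio}-\eta=\frac{\langle \rb,\yb\rangle}{\langle\bar{\xb},\xb\rangle}.
\]

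\textbf{Conditional spherical symmetry.} Conditioned on the pair $(\langle\bar{\xb},\xb\rangle,\|\rb\|_2)$, the Haar rotation makes the direction of $\rb$ uniform on the unit sphere of $\xb^\perp$. This is the same symmetry that produces the factor $\frac{1-\eta^2}{d-1}$ in Theorem~\ref{thm:ratio_est}. Hence $\langle \rb,\yb\rangle$ equals $\|\rb\|_2\sqrt{1-\eta^2}$ times one coordinate of a uniform point on $\mathbb{S}^{d-2}$. Standard concentration of spherical marginals then gives
\[
|\langle\rb,\yb\rangle|\lesssim \|\rb\|_2\sqrt{\log(1/\delta)/d}
\]
with probability at least $1-\delta/2$.

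\textbf{Bounding the distortion ratio for $\edub$.} It remains to show that, with probability at least $1-\delta/2$,
\[
\frac{\|\rb\|_2^2}{\langle\bar{\xb},\xb\rangle^2}\lesssim 4^{-b}.
\]
The rotated vector has coordinates that are nearly i.i.d.\ Gaussian after scaling by $\sqrt d$, and the Lloyd--Max quantizer acts coordinatewise. So both $\|\bar{\xb}-\xb\|_2^2$ and $\langle\bar{\xb},\xb\rangle$ are averages of $d$ bounded, or sub-exponential, terms.
\begin{itemize}
\item Their means are respectively $\Theta(4^{-b})$ (the high-rate constant $2.721$ from Proposition~\ref{prop:ed_mse}) and $1-O(4^{-b})$.
\item Bernstein's inequality, applied to the Gaussian representation $\zb=\gb/\|\gb\|_2$, gives deviations of order $\sqrt{\log(1/\delta)/d}$.
\item Under the hypothesis $\frac1\epsilon\log\frac1\delta\lesssim d$, these deviations are dominated by the constants, so $\langle\bar{\xb},\xb\rangle\ge 1/2$ and $\|\rb\|_2^2\le\|\bar{\xb}-\xb\|_2^2\lesssim 4^{-b}$.
\end{itemize}
One can also note that $\|\rb\|_2$ is at most the quantization error, which is at most $\sqrt d$ times the max cell half-width in the bounded region. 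This gives a deterministic fallback.

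\textbf{Conclusion for $\edub$.} Combining the two events, the error is at most $C\,2^{-b}\sqrt{\log(1/\delta)/d}$ with probability at least $1-\delta$. Setting this to $\epsilon$ gives
\[
4^{b}\asymp\frac{1}{d\epsilon^2}\log\frac1\delta,
\]
that is, $b=\Theta\bigl(\log(\frac{1}{d\epsilon^2}\log\frac1\delta)\bigr)$. The logarithm is clipped at a constant when its argument is below $1$.

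\textbf{The case of $\tqprod$.} The estimator is $\langle\bar{\xb},\yb\rangle+\widehat{\langle \rb',\yb\rangle}$, where $\rb'=\xb-\bar{\xb}$ is the $(b-1)$-bit $\tqmse$ residual. The $\qjl$ estimator with one bit per dimension has error of order
\[
\|\rb'\|_2\sqrt{\log(1/\delta)/d}
\]
with probability at least $1-\delta/2$, by sub-Gaussian concentration of the sum of the sign terms, conditional on $\rb'$. A high-probability bound $\|\rb'\|_2^2\lesssim 4^{-b}$ for $\tqmse$ follows exactly as above, since the per-coordinate errors are bounded and nearly independent. This yields the same rate.

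\textbf{Main obstacle.} The hardest step is the concentration in the second paragraph. The coordinates of $R\xb$ are not independent, and the scalar quantizer's error is unbounded in the overload region. I would handle this with the Gaussian representation, a union bound on $\max_j|g_j|\lesssim\sqrt{\log(d/\delta)}$, and a truncation argument. I would also check that the constant-accuracy regime condition absorbs the resulting log factors.
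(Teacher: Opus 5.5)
Your architecture matches the paper's. You use conditional isotropy of the tangential residual $\rb$ in $\xb^\perp$, and the spherical bound $|\langle\rb,\yb\rangle|\lesssim\|\rb\|_2\sqrt{L/d}$ with $L=\log(4/\delta)$. You then need a high-probability residual bound that also keeps $\langle\bar\xb,\xb\rangle$ away from zero. For $\tqprod$, you combine the $(b-1)$-bit residual with conditional $\qjl$ concentration.

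The gap is the residual step. You leave it open, and its quantitative form is wrong.
\begin{itemize}
\item An absolute deviation of order $\sqrt{L/d}$ in the average $\|\bar\xb-\xb\|_2^2$ is not dominated by $4^{-b}$ under the hypothesis. At the target $4^{-b}\asymp d\epsilon^2/L$ this would need $\epsilon\gtrsim(L/d)^{3/4}$, but you only have $\epsilon\gtrsim L/d$.
\item The bound $\|\rb\|_2^2\lesssim 4^{-b}$ with probability $1-\delta/2$ cannot hold for every $b$. Take, say, $\log d\ll L\le cd$ and $4^{-b}\ll L/d$. Then with probability far above $\delta$, some coordinate of $\sqrt d\,R\xb$ exceeds $\sqrt{2L}$ in magnitude and lands in a coarse tail cell or the overload region. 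That single coordinate contributes much more than $4^{-b}$ to $\|\rb\|_2^2$.
\end{itemize}
The correct statement is $\|\bar\xb-\xb\|_2\lesssim 2^{-b}+\sqrt{L/d}$, so the error is $\lesssim 2^{-b}\sqrt{L/d}+L/d$. The hypothesis $\frac1\epsilon\log\frac1\delta\lesssim d$ is exactly what absorbs the additive $L/d$ term. Equivalently, it ensures $4^{-b}\gtrsim L/d$ at the chosen $b$, which is the only regime where your bound $C\,2^{-b}\sqrt{L/d}$ can hold.

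Your suggested repair does not close this either. First, $Q_b$ acts on $\sqrt d\,g_j/\|\gb\|_2$, not on $g_j$, so the summands are not independent, and the discontinuity of $Q_b$ blocks a direct comparison. Second, truncating at $\max_j|g_j|\lesssim\sqrt{\log(d/\delta)}$ can leave a Bernstein term as large as $\log(d/\delta)\,L/d$ in $\|\rb\|_2^2$. Under $\epsilon\gtrsim L/d$ alone, that term is not dominated by $4^{-b}$ at the target $b$.

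The missing idea, which is how the paper argues, is to bypass coordinatewise concentration entirely. Write $\|\xb-\bar\xb\|_2=\operatorname{dist}(R\xb,\mathcal{C})$ for the fixed rotated-domain codebook $\mathcal{C}$. Distance to a fixed set is $1$-Lipschitz and $R\xb$ is uniform on $\sphere$, so L\'evy's inequality gives $\|\xb-\bar\xb\|_2\le\mathbb{E}\|\xb-\bar\xb\|_2+C\sqrt{L/d}$ with probability $1-\delta/2$. Jensen's inequality and the expected MSE bound then give $\mathbb{E}\|\xb-\bar\xb\|_2\le\sqrt{C_{\rm ed}}\,2^{-b}$. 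This handles dependence, overload, and coarse tail cells at once. The denominator follows from $\langle\bar\xb,\xb\rangle\ge(1-\|\bar\xb-\xb\|_2^2)/2$. The same lemma applied to the $(b-1)$-bit $\tqmse$ residual repairs your $\tqprod$ argument.
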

\paragraph{Discussion on bit complexity.}
The above theorem shows that $\edub$ and $\tqprod$ achieves the optimal bit
complexity of \citet{alon2017optimal} when
$\frac{1}{\epsilon}\log\left(\frac{1}{\delta}\right)\lesssim d $. This guarantee is weaker than that of $\rbq$ in Lemma~\ref{prop:rbq_bc}, since $\rbq$ achieves the same optimal bit complexity in the high-accuracy regime $d \le \frac{1}{\epsilon^2}\log\left(\frac{1}{\delta}\right)$, which is the regime considered by \citet{alon2017optimal}. This limitation comes from the MSE-oriented construction of $\ed$ and $\tq$: although it effectively reduces expected reconstruction error, it does not directly provide sharp
high-probability control of the residual distribution. The proof is deferred to Appendix~\ref{sec:bc}.

\section{Proposed Method}\label{sec:alg}

Motivated by the comparison in Section~\ref{sec:theory}, we propose a new
quantization scheme that improves both $\mse$ and $\iprod$. Our method
follows the spirit of product quantization by grouping multiple coordinates
into blocks, and extends the coordinate-wise approaches used in $\ed$ and
$\tq$ to a block-spherical quantization scheme. This allows the quantizer to
incorporate the spherical structure of the input vectors more directly,
leading to improved distortion guarantees.

\subsection{Block-Sphere Quantization (\bq)} \label{subsec:bq}
\begin{figure}[t!]
    \centering
    \includegraphics[width=0.85\textwidth, trim=1cm 6cm 1cm 5.5cm, clip]{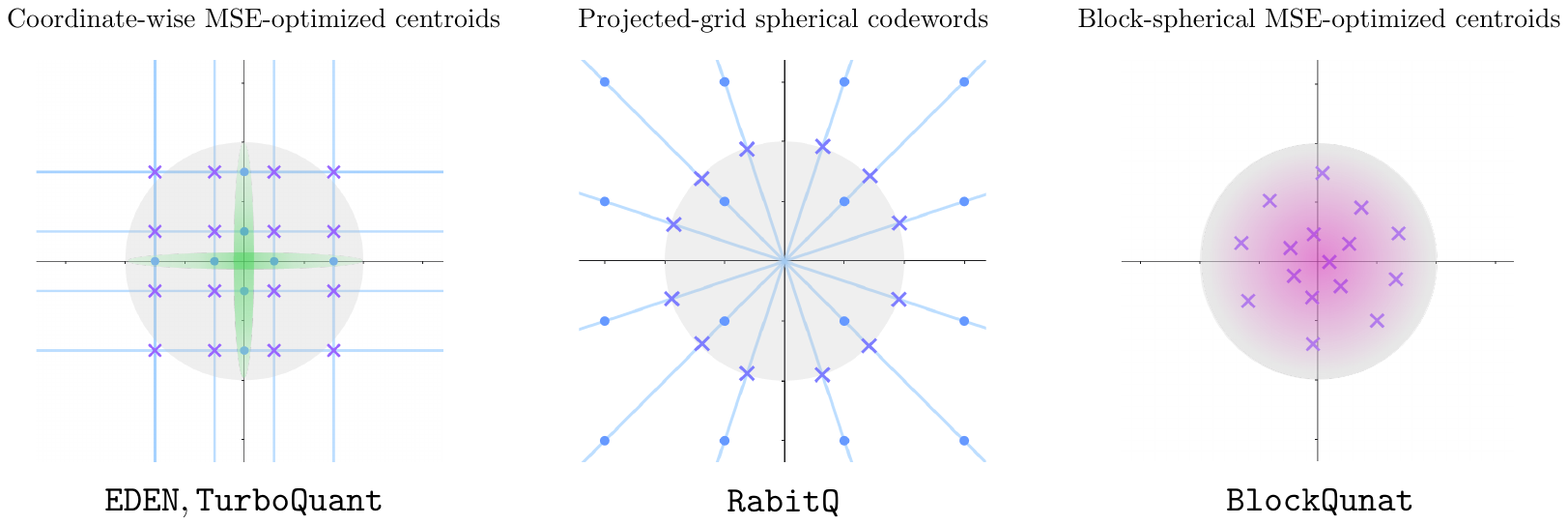}
    \vspace{5pt}
    \caption{\small{
Conceptual comparison of codebooks used by rotation-based quantizers for $b=2$ in a two-coordinate projection when $d>2$. The shaded disk indicates the feasible region of the two displayed coordinates of a rotated unit vector. Left: $\ed$ and $\tq$ use a Cartesian-product codebook formed by coordinate-wise MSE-optimized scalar centroids. Middle: $\rbq$ uses spherical codewords obtained by projecting a $2^b\times 2^b$ uniform grid onto $\sphere$. Right: $\bq$ with block size $p=2$ optimizes $2^{bp}=16$ centroids directly for the two-dimensional block distribution.}}
    \label{fig:codebook}
\end{figure}

The coordinate-wise MSE-optimized centroids used in $\ed$ and $\tq$ lead to low MSE distortion, but they do not fully exploit the spherical structure of unit input vectors. 
Since quantization is performed independently across coordinates, many possible combinations of scalar centroids can lie far from
the sphere. 
This suggests that part of the $2^{bd}$ code space is spent on codewords that are not well aligned with the geometry of the input domain.
If the quantizer incorporates the spherical structure more directly, the bit budget can be used more efficiently, potentially leading to improved MSE and inner-product performance.

We propose \textit{Block-Sphere Quantization} ($\bq$), which addresses this limitation by constructing centroids that capture richer spherical information at the block level
(Figure~\ref{fig:codebook}). After rotating input vectors, instead of quantizing each
coordinate separately, we group the coordinates of rotated vectors $\zb$ into blocks. Specifically,
for a block size $p$ that divides $d$, we decompose $\zb=(\zb_1,\ldots,\zb_m)$ where $m=d/p$ and $\zb_j\in\mathbb{B}^p$ for $j\in[m]$. Each block $\zb_j$ is then assigned to a centroid in a $p$-dimensional codebook. 
Each block $\zb_j$ is then assigned to a centroid in a $p$-dimensional
codebook. In this way, the quantizer reduces to the coordinate-wise schemes when $p=1$, while larger block sizes allow the codebook to capture more of the spherical structure of the rotated unit vector. The following lemma
characterizes the marginal distribution of each block $\zb_j$ in
$\mathbb{B}^p$.
\begin{lemma}[Block marginal distribution of a uniform spherical vector]\label{lem:block_marginal}
    Suppose $d=mp$ and $\xb \sim \operatorname{Unif}(\sphere)$. Divide vector $\xb$ into blocks $\xb=[\zb_1, \dots, \zb_m]$ where $\zb_j \in \mathbb{B}^p$ for $j \in [m]$. Then, each block $\zb_j$ has density on $\mathbb{B}^p$ : $f_{p,d}(\zb_j)=
\frac{\Gamma(d/2)}{\pi^{p/2}\Gamma((d-p)/2)}
(1-\|\zb_j\|_2^2)^{\frac{d-p-2}{2}}$. Equivalently, $\zb_j$ can decomposed as $\zb_j=r_j\thetab_j$ for $r_j \in [0,1]$, $\thetab_j \in \mathbb{S}^{p-1}$, where $r_j \perp \thetab_j$ and each component follows the distributions: $r_j^2\sim \operatorname{Beta}\!\left(\frac p2,\frac{d-p}{2}\right)$ and $\thetab_j\sim \mathrm{Unif}(S^{p-1})$.
\end{lemma}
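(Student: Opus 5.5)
By exchangeability of coordinates under the uniform law on $\sphere$, every block $\zb_j$ has the same law as the first block $\zb_1=(x_1,\dots,x_p)$. So it suffices to treat $j=1$. We use the Gaussian representation: take $\gb\sim N(0,I_d)$ and set $\xb=\gb/\|\gb\|_2$, which is $\operatorname{Unif}(\sphere)$. Write $\gb=(\ub,\vb)$ with $\ub\in\RR^p$ and $\vb\in\RR^{d-p}$ independent standard Gaussians. Then
\[
\zb_1=\frac{\ub}{\sqrt{\|\ub\|_2^2+\|\vb\|_2^2}} .
\]

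The polar decomposition follows directly. Set $r_1=\|\zb_1\|_2$ and $\thetab_1=\ub/\|\ub\|_2$. Then
\[
r_1^2=\frac{\|\ub\|_2^2}{\|\ub\|_2^2+\|\vb\|_2^2},
\]
where $\|\ub\|_2^2\sim\chi^2_p$ and $\|\vb\|_2^2\sim\chi^2_{d-p}$ are independent. The standard Gamma--Beta relation gives $r_1^2\sim\operatorname{Beta}(p/2,(d-p)/2)$. For the direction, the law of $\ub$ is rotation invariant, so $\thetab_1\sim\mathrm{Unif}(S^{p-1})$ and $\thetab_1$ is independent of $\|\ub\|_2$. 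Since $r_1$ depends only on $\|\ub\|_2$ and $\vb$, both independent of $\thetab_1$, we get $r_1\perp\thetab_1$.

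To obtain the density, change variables from the pair $(r_1,\thetab_1)$ to $\zb_1=r_1\thetab_1$.
\begin{itemize}
\item From the Beta density of $s=r^2$, the density of $r$ is
\[
\frac{2r^{p-1}(1-r^2)^{\frac{d-p}{2}-1}}{\mathrm{Beta}(p/2,(d-p)/2)} .
\]
\item The volume element in polar coordinates is $d\zb=r^{p-1}\,dr\,d\sigma(\thetab)$, where $\sigma$ has total mass $|S^{p-1}|=2\pi^{p/2}/\Gamma(p/2)$.
\item Dividing the radial density by $r^{p-1}|S^{p-1}|$ gives
\[
f_{p,d}(\zb)=\frac{\Gamma(p/2)}{\pi^{p/2}\,\mathrm{Beta}(p/2,(d-p)/2)}\,(1-\|\zb\|_2^2)^{\frac{d-p-2}{2}} .
\]
\item Since $\mathrm{Beta}(p/2,(d-p)/2)=\Gamma(p/2)\Gamma((d-p)/2)/\Gamma(d/2)$, the prefactor simplifies to $\Gamma(d/2)/\big(\pi^{p/2}\Gamma((d-p)/2)\big)$, which is the stated constant.
\end{itemize}

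The only delicate point is bookkeeping of normalizing constants, and the edge case $d=p$ where the density degenerates; the statement implicitly assumes $p<d$. As a cross-check, the constant could also be obtained by integrating the uniform surface measure over the remaining $d-p$ coordinates, a sphere of radius $\sqrt{1-\|\zb\|^2}$. The Jacobian of that slice gives the factor $(1-\|\zb\|^2)^{(d-p-2)/2}$, confirming the exponent.
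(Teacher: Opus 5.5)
Your proposal is correct and follows essentially the same route as the paper's proof: the Gaussian representation $\xb=\gb/\|\gb\|_2$, the chi-square/Beta relation giving $r_1^2\sim\mathrm{Beta}(p/2,(d-p)/2)$, independence and uniformity of the Gaussian direction, and the polar change of variables with $|\mathbb{S}^{p-1}|=2\pi^{p/2}/\Gamma(p/2)$, which yields the stated normalizing constant. Your remark that the density formula requires $p<d$ is apt; the paper handles the $p=d$ case separately in the analysis of $\bq$, using the uniform surface density on $\mathbb{S}^{d-1}$.
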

The proof of Lemma~\ref{lem:block_marginal} is provided in Appendix~\ref{sec:marginal_density}. Then, we can formulate the block quantization as the following K-means optimization problem: 
\begin{equation}\label{eq:cost}
    \text{(Distortion cost)}=
\int_{B_p}\min_{i\in [2^{bp}]}\|\zb-\ob_i\|_2^2\,f_{p,d}(\zb)\,\operatorname{d}\!\zb.
\end{equation}
Notably, as in $\tq$, the codebook is constructed only once before
quantization and is reused throughout the online quantization procedure.

For the dequantization step in Line~11 of Algorithm~\ref{alg:bq}, we use objective-dependent rescaling rules for $S$, as in $\ed$.
Let $\bar{\xb}$ denote the raw reconstruction before the final rescaling, and let $\rho:=\langle\xb,\bar{\xb}\rangle$. This alignment is computed in the rotated domain in Line~6 of Algorithm~\ref{alg:bq}, since $\langle\xb,\bar{\xb}\rangle=\langle \xb,R^\top\bar{\zb}\rangle=\langle R\xb,\bar{\zb}\rangle=\langle\zb, \bar{\zb}\rangle$.
For $\bqbsm$, the MSE-optimized variant with best scalar rescaling, we set $S=\langle\xb,\bar{\xb}\rangle/\|\bar{\xb}\|_2^2=\rho/\|\bar{\xb}\|_2^2$. For $\bqub$, the unbiased-reconstruction variant, we set $S=1/\rho$. When fast quantization and dequantization are preferred, we also consider the raw-reconstruction variant, denoted by $\bqmse$ following the notation of $\tqmse$. In this case, we set $S=1$, so the value of $\rho$ in Line~6 does not need to be stored.

\begin{algorithm}[t]
\caption{Block-Sphere Quantization ($\bq$)}
\label{alg:bq}
\begin{algorithmic}[1]
\Require dimension $d$, block size $p$, bit-width $b$.
\Statex \textcolor{gray}{// Global Parameters for Setting up $\bq$.}
\State Generate a random rotation matrix $R \in \mathbb{R}^{d \times d}$ from Haar distribution.
\State Construct codebook by finding centroids $\ob_1,\ob_2,\ldots,\ob_{2^{bp}} \in \mathbb{B}^p$ that minimize Equation~\ref{eq:cost}. 

\Statex
\Procedure{\textsc{Quant}}{$\xb$}
    \State $\zb=(\zb_1, \dots, \zb_m) \gets R \mathbf{x}$, where $m=d/p$.
    \State $\mathrm{idx}_j \gets \arg\min_{i \in [2^{bp}]} \|\zb_j - 
    \ob_i\|$ for every $j \in [m]$.
    \State Save the alignment $\rho=\langle \zb, \bar{\zb}\rangle$, where $\bar{\zb}$ aggregated vector of corresponding codewords. 
    \State \Return $\mathrm{idx}=[\mathrm{idx}_1, \ldots, \mathrm{idx}_m] \in \{0,1\}^{b\cdot d}$.
    \Comment{$\mathrm{idx}$'s are $bp$-bit integers}
\EndProcedure

\Statex
\Procedure{\textsc{DeQuant}}{$\mathrm{idx}$}
    \State $\bar{\zb}=[\bar{\zb}_1, \dots,  \bar{\zb}_m]$ where $\bar{\zb}_j \gets c_{\mathrm{idx}_j}$ for every $j \in [m]$.
    \State $\bar{\xb} \gets R^{\top} \bar{\zb}$.
    \State Set the rescaling parameter
\begin{equation*}
S =
\begin{cases}
\dfrac{\rho}{\|\bar{\xb}\|_2^2}, 
& \textnormal{for minimizing MSE (i.e., }\bqbsm\textnormal{)}, \\[1.0em]
\dfrac{1}{\rho}, 
& \textnormal{for unbiased reconstruction (i.e., }\bqub\textnormal{)}, \\[1.0em]
1, 
& \textnormal{for raw reconstruction (i.e., }\bqmse\textnormal{)}.
\end{cases}
\end{equation*}
    \State \Return $S \cdot \tilde{\xb}$.
\EndProcedure
\end{algorithmic}
\end{algorithm}


\subsection{Analysis of $\bq$}\label{subsec:bq_anal}

The following theorem gives MSE bounds for the proposed algorithm with practical block
sizes $p=2$ and $p=3$, as well as for the idealized case $p=d$,  where the centroids lie on $\sphere$. In this case, $f_{d,d}$ in the cost function (Equation~\ref{eq:cost}) is defined by the probability density function of $\text{Unif}(\sphere)$. 

\begin{theorem}[MSE bounds for $\bqmse$ and $\bqbsm$]\label{thm:bq_mse}
    In high dimension, for $\mathcal Q\in\{\bqmse,\bqbsm\}$, $\mse\left(\mathcal Q_{(p=2)}\right)\approx
\mathbf{0.363},\;\mathbf{0.108},\;\mathbf{0.0297},\;\mathbf{0.0078}$ and $\mse\left(\mathcal Q_{(p=3)}\right)\approx
\mathbf{0.357},\;\mathbf{0.101},\;\mathbf{0.0271},\;\mathbf{0.0071}$, for $b=1,~2,~3,~4$, respectively. Moreover, for large $b$, 
$$
\begin{aligned}
    \mse\left(\mathcal Q_{(p=2)}\right) &\le
\mathbf{2.015}\cdot{1 \over {4^b}}(1+o(1)),\\
    \mse\left(\mathcal Q_{(p=3)}\right) &\le
 \mathbf{1.770}\cdot{1 \over {4^b}}(1+o(1)),\\
 \mse\left(\mathcal Q_{(p=d)}\right) &\le C_d\cdot \left( {1\over 4}\right)^{ {bd \over d-1}} \! \cdot (1+o(1)), 
\end{aligned}
$$
where $C_d:=\Gamma\!\left(1+\frac{2}{d-1}\right)\left[2\sqrt\pi\,\frac{\Gamma((d+1)/2)}{\Gamma(d/2)}\right]^{\frac{2}{d-1}} \approx 1.055,~1.008, ~1.001~$ for $~d=100,~1000,~10000$, respectively.
\end{theorem}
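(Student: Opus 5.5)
Because of rotation invariance, the distortion depends only on the blocks and not on the input $\xb$. I would reduce everything to a single block. For $\bqmse$, $\|\xb-R^\top\bar{\zb}\|_2^2=\|\zb-\bar{\zb}\|_2^2=\sum_{j=1}^m\|\zb_j-\ob_{\mathrm{idx}_j}\|_2^2$, and every $\zb_j$ has density $f_{p,d}$ by Lemma~\ref{lem:block_marginal}. Hence $\mse(\bqmse)=m\cdot D_{p,d}(b)$, where $D_{p,d}(b)$ is the optimal value of the distortion cost (Equation~\ref{eq:cost}). Since $\bqbsm$ uses the best scalar $S$, its error is at most the $S=1$ error pointwise, so it is bounded by the same quantity. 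In high dimension $\sqrt d\,\zb_j\to N(0,I_p)$. Writing $D_{p,d}(b)=\frac1d\,\tilde D_{p,d}(b)$ for the rescaled problem gives $\mse\approx \frac{1}{p}\tilde D_{p}(b)$, where $\tilde D_p(b)$ is the optimal $2^{bp}$-point quantizer distortion of the $p$-dimensional standard Gaussian. For $p=1$ this recovers the EDEN/TurboQuant constants. For $b=1,\dots,4$ and $p=2,3$ I would compute $\tilde D_p(b)$ numerically with Lloyd's algorithm on the exact density $f_{p,d}$ (or the Gaussian limit), which yields the stated table values. These small-$b$ entries are numerical approximations, not proven bounds.

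For large $b$ with $p=2,3$, I would use Zador's high-rate theorem. The optimal $N$-point distortion of a density $f$ on $\mathbb R^p$ satisfies $N^{2/p}D_N\to G_p\,\|f\|_{p/(p+2)}$, where the quantization coefficient $G_p$ is $p$ times the normalized second moment. With $N=2^{bp}$ we get $N^{-2/p}=4^{-b}$, so $\mse\le \frac{1}{p}G_p\|\phi_p\|_{p/(p+2)}4^{-b}(1+o(1))$ in the Gaussian limit. The norm evaluates in closed form: $\|\phi_p\|_{p/(p+2)}=2\pi\,(1+2/p)^{1+p/2}$. Since the optimal $G_p$ is unknown for $p\ge 2$, I would upper bound it by an explicit lattice: the hexagonal lattice for $p=2$ ($G=5/(36\sqrt3)$ per dimension) and the BCC lattice for $p=3$ ($\approx 0.0785$). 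This turns the Zador constant into an upper bound and should give $2.015$ and $1.770$; as a check, $p=1$ gives $\frac{\sqrt3\pi}{2}\approx2.721$. The step I expect to be hardest is justifying the simultaneous limits $d\to\infty$ and $b\to\infty$ and the truncation to the ball. For fixed $d$, I would instead apply Zador directly to $f_{p,d}$, show that $\|f_{p,d}\|_{p/(p+2)}\cdot d/\ldots$ converges to its Gaussian value, and absorb the difference into $1+o(1)$.

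For $p=d$, the codewords lie on $\sphere$, a $(d-1)$-dimensional manifold with $N=2^{bd}$ points. After optimal rescaling, the error equals $1-\langle\xb,\bar\xb\rangle^2\approx\|\xb-\bar\xb\|^2$ to leading order. Rather than Zador on a manifold, I would use a random-coding argument: take $N$ i.i.d.\ uniform codewords and compute $\mathbb E\min_i(1-\langle \xb,\ob_i\rangle^2)$. The cap measure satisfies $P(1-t^2\le s)\approx \frac{2\Gamma(d/2)}{\sqrt\pi\,\Gamma((d-1)/2)(d-1)}s^{(d-1)/2}$ for small $s$, counting both antipodal caps. The minimum over $N$ samples then has expectation $\Gamma\!\left(1+\frac{2}{d-1}\right)(cN)^{-2/(d-1)}(1+o(1))$. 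Using $\Gamma((d+1)/2)=\frac{d-1}{2}\Gamma(\frac{d-1}{2})$ to simplify the cap constant should produce $C_d$, with $N^{-2/(d-1)}=4^{-bd/(d-1)}$. The remaining work is tracking the constant exactly and evaluating $C_d$ numerically; the existence of a deterministic codebook at least as good follows because the average over random codebooks bounds the minimum.
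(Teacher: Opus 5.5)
Your overall plan matches the paper's: Gaussian-limit Lloyd values for $b\le 4$, Zador's formula with the hexagonal and BCC normalized second moments for $p=2,3$, and a random spherical code for $p=d$. Your Gaussian closed form $\|\phi_p\|_{p/(p+2)}=2\pi(1+2/p)^{1+p/2}$ is correct and equals $\lim_{d\to\infty} d\,J_{p,d}$, so it reproduces $2.015$ and $1.770$. Your fixed-$d$ fallback, applying Zador to $f_{p,d}$, is exactly the paper's route, and it also yields explicit finite-$d$ constants such as $\frac{10\pi}{9\sqrt3}(1-2/d)$.

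The $p=d$ step fails as written. The encoder in Algorithm~\ref{alg:bq} picks the Euclidean-nearest codeword. For unit codewords this means $\bar{\mathbf z}=\arg\max_i\langle\mathbf z,\mathbf o_i\rangle$, so only the cap around $+\mathbf z$ matters. Your functional $\min_i(1-\langle\mathbf z,\mathbf o_i\rangle^2)=1-\max_i\langle\mathbf z,\mathbf o_i\rangle^2$ counts both antipodal caps, so it describes a sign-agnostic encoder. It is \emph{smaller} than the algorithm's actual $\bqbsm$ error $1-\rho^2$ with $\rho=\max_i\langle\mathbf z,\mathbf o_i\rangle$. An upper bound on it therefore does not bound the algorithm's MSE.

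The constant exposes the problem. Your cap coefficient simplifies to $\kappa=\Gamma(d/2)/(\sqrt\pi\,\Gamma((d+1)/2))$, and then $\Gamma(1+\tfrac{2}{d-1})(\kappa N)^{-2/(d-1)}=2^{-2/(d-1)}C_d\,N^{-2/(d-1)}$, not $C_d\,N^{-2/(d-1)}$. The antipodal cap also breaks your bridge $1-\rho^2\approx\|\mathbf x-\bar{\mathbf x}\|_2^2$: there the raw error is about $4$. So $\bqmse$ is not covered either.

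The repair is the paper's computation. Bound $\mathbb E\min_i\|\mathbf z-\mathbf o_i\|_2^2$ using the single cap $\{\mathbf o:\|\mathbf z-\mathbf o\|_2^2\le t\}$. Its probability is $(V_{d-1}/|\sphere|)\,t^{(d-1)/2}(1+o(1))$, where $V_{d-1}$ is the volume of the unit ball in $\mathbb R^{d-1}$. This gives exactly $C_d$ and bounds the raw $\bqmse$ error directly, with no leading-order approximation. Then $\bqbsm\le\bqmse$ pointwise finishes the argument.

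A smaller point: for $b\le 4$ you only show $\bqbsm\le\bqmse$, whereas the theorem asserts the same approximate value for both. The paper closes this with the centroid condition. For $\mathbf R\sim N(0,I_p)$ and the Lloyd-optimal block quantizer $Q$, it gives $\mathbb E\langle\mathbf R,Q(\mathbf R)\rangle=\mathbb E\|Q(\mathbf R)\|_2^2=:pM_p$. Hence both $\rho$ and $\|\bar{\mathbf x}\|_2^2$ concentrate at $M_p$, and the best scalar concentrates at $1$. The best-scalar error is then $1-M_p^2/M_p=1-M_p=\phi_p/p$, the same value as the raw reconstruction.
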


\paragraph{Discussion of Theorem~\ref{thm:bq_mse}.}
Since the MSE distortion of $\edbsm$ and $\tqmse$ are  approximately $\mathbf{0.363}, \;\mathbf{0.117}, \;\mathbf{0.0345}, \;\mathbf{0.0095}~$ for $~b=1,~2,~3,~4$ (Proposition~\ref{prop:ed_mse}, Remark~\ref{rmk:tq_mse}), the MSE guarantee of $\bq$ with block size $\ge2$ is better than, that of $\tqmse$ (and other two methods) in both small and large bit regime. Moreover, the result for $p=d$ closely matches the lower bound presented in
Section~\ref{sec:slb}, not only in order but also in the leading constant.
This suggests that increasing the block size in $\bq$ moves the quantizer
toward the optimal quantization limit.
The proof of Theorem~\ref{thm:bq_mse} is deferred to Appendix~\ref{subsec:bq_mse}.

Combining the ratio-estimator bound in Theorem~\ref{thm:ratio_est} with the
MSE bound in Theorem~\ref{thm:bq_mse}, we obtain the bound on
the inner-product error of Algorithm~\ref{alg:bq}.
\begin{corollary}
    [Inner product error bound of Algorithm~\ref{alg:bq}]\label{cor:bq_ip}
In high dimension, the $\bqub$ with block size $p$ satisfies $\iprod\left(\bqub_{(p=2)}\right) \approx
    {\mathbf{0.571} \over d-1},\;{\mathbf{0.120} \over d-1},\;{\mathbf{0.0306} \over d-1},\;{\mathbf{0.0078} \over d-1}$ for $p=2$ and $b=1,~2,~3,~4$, and $\iprod\left(\bqub_{(p=3)}\right)\approx
    {\mathbf{0.553} \over d-1},\;{\mathbf{0.113} \over d-1},\;{\mathbf{0.0279} \over d-1},\;{\mathbf{0.0071} \over d-1}$, for $p=3$ and $b=1,~2,~3,~4$, respectively. Moreover, for large $b$, 
$$
\begin{aligned}
    \iprod\left(\bqub_{(p=2)}\right)
    \le{}&\frac{\mathbf{2.015}}{d-1}4^{-b}(1+o(1)),\\
    \iprod\left(\bqub_{(p=3)}\right)
    \le{}&\frac{\mathbf{1.770}}{d-1}4^{-b}(1+o(1)).
\end{aligned}
$$
\end{corollary}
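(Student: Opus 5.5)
The plan is to feed the MSE analysis of Theorem~\ref{thm:bq_mse} into the ratio-estimator identity of Theorem~\ref{thm:ratio_est}. By Theorem~\ref{thm:ratio_est}, for any $\xb,\yb\in\sphere$ with $\eta=\langle\xb,\yb\rangle$,
\[
\mathbb{E}\bigl[(\widehat\eta_{\rm ratio}-\eta)^2\bigr]=\frac{1-\eta^2}{d-1}\,\mathbb{E}\!\left[\frac{\|\bar{\xb}\|_2^2-\rho^2}{\rho^2}\right],\qquad \rho=\langle\xb,\bar{\xb}\rangle .
\]
The prefactor is at most $1/(d-1)$, with equality at $\eta=0$. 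Hence $\iprod(\bqub)=\frac{1}{d-1}\,\mathbb{E}\bigl[\|\bar{\xb}\|_2^2/\rho^2-1\bigr]$. This reduces the corollary to evaluating one scalar functional of the raw block quantization of a uniform point $\zb=R\xb\in\sphere$. Since $\langle\xb,\bar{\xb}\rangle=\langle\zb,\bar{\zb}\rangle$ and $\|\bar{\xb}\|=\|\bar{\zb}\|$, it is a functional of $\bar{\zb}$ alone.

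Next I decompose over blocks. Write $\rho=\sum_{j=1}^m\langle\zb_j,\bar{\zb}_j\rangle$ and $\|\bar{\zb}\|^2=\sum_j\|\bar{\zb}_j\|^2$. By Lemma~\ref{lem:block_marginal}, each block is distributed as $f_{p,d}$, and for large $d$ the rescaled block $\sqrt d\,\zb_j$ is approximately $N(0,I_p)$. The blocks are weakly dependent, with $m=d/p\to\infty$, so a law-of-large-numbers argument gives $\rho\to m\,\mathbb{E}\langle\zb_1,\bar{\zb}_1\rangle$ and $\|\bar{\zb}\|^2\to m\,\mathbb{E}\|\bar{\zb}_1\|^2$ in probability. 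I would make this rigorous with a variance bound of order $1/m$ relative to the squared means, and use boundedness of the codebook to pass to the expectation of the ratio.

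Let $A=\mathbb{E}\langle \ub,\bar{\ub}\rangle$ and $B=\mathbb{E}\|\bar{\ub}\|^2$, where $\ub\sim N(0,I_p)$ and $\bar{\ub}$ is its nearest centroid in the Lloyd-optimal codebook. Then $\iprod\approx\frac{1}{d-1}(B/A^2-1)$. Lloyd optimality (the centroid condition) gives $A=B$, so the constant is $1/B-1=D/(1-D)$, where $D=1-B$ is the per-dimension Lloyd distortion. This is precisely the high-dimensional MSE computed in Theorem~\ref{thm:bq_mse} for $p=2,3$. Substituting the values of $D$ from that theorem should reproduce the table entries. For example, $p=3,b=1$ gives $0.357/0.643\approx0.555$, which should match $0.553$ up to rounding of $D$. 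For $p=2,b=2$, $0.108/0.892\approx0.121$. So the numerical entries come from the same $D$ values, computed to more digits.

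For the large-$b$ bounds, I take the high-rate asymptotic of Theorem~\ref{thm:bq_mse}: $D\le C_p 4^{-b}(1+o(1))$ with $C_2=2.015$ and $C_3=1.770$. Then $D/(1-D)=D(1+O(D))=C_p4^{-b}(1+o(1))$, which gives the stated bounds. The main obstacle is justifying the interchange between the expectation of the ratio and the ratio of expectations. Concentration of $\rho$ handles the bulk, but $1/\rho^2$ needs control near $\rho\approx0$. I would handle this with a crude lower-tail bound on $\rho$, using exponential concentration of Lipschitz functions on the sphere: $\rho$ is Lipschitz in $\zb$ for a fixed codebook, apart from cell boundaries, so Lipschitzness in $\zb$ does not hold directly. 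A workaround is the inequality $\rho\ge\langle\zb,\tilde{\zb}\rangle$ for any fixed quantization choice $\tilde{\zb}$, since the nearest-centroid choice dominates. It may instead be simpler to note that $\|\bar{\zb}\|^2/\rho^2$ is bounded on the event $\rho\ge A/2$, and that this event fails with probability $e^{-\Omega(d)}$.
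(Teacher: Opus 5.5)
Your small-$b$ argument is the paper's own. It reduces via Theorem~\ref{thm:ratio_est} with the worst case at $\eta=0$, replaces $\sqrt d\,\zb_j$ by $N(0,I_p)$, lets the block sums concentrate, and uses the centroid condition to turn $\mathbb{E}[\|\bar\xb\|_2^2/\rho^2]-1$ into $\delta/(1-\delta)$, with $\delta$ the per-coordinate Lloyd distortion. There is one normalization slip. With $A=\mathbb{E}\langle\ub,\bar\ub\rangle$ and $B=\mathbb{E}\|\bar\ub\|_2^2$ for $\ub\sim N(0,I_p)$, you get $\rho\to A/p$ and $\|\bar\zb\|_2^2\to B/p$, so the constant is $pB/A^2-1$. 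You need the factor $p^{-1}$ in $A$ and $B$ (as in the paper's $m_{p,b}$) for $1/B-1=D/(1-D)$ to hold.

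For large $b$ your route differs from the paper's. You substitute a high-rate rate into $(d-1)\iprod\approx D/(1-D)$, a formula obtained as $d\to\infty$ at fixed $b$. The paper instead stays at fixed $d$. With $\eb=\bar\xb-\xb$, $t=\langle\xb,\eb\rangle$ and $\ub=\eb-t\xb$, it has exactly $\rho=1+t$ and $\|\bar\xb\|_2^2-\rho^2=\|\ub\|_2^2$, so $(d-1)\iprod=\mathbb{E}[\|\ub\|_2^2/(1+t)^2]$. High-rate moment bounds make the denominator cost only $O(\mathbb{E}\|\eb\|_2^3)=o(4^{-b})$. Then $\mathbb{E}\|\ub\|_2^2\le\mathbb{E}\|\eb\|_2^2\le dG_p^\star J_{p,d}4^{-b}(1+o(1))$ by Corollary~\ref{cor:bq_mse_high}. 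This needs neither the Gaussian approximation nor the centroid condition. It gives a finite-$d$ constant (e.g.\ $\frac{10\pi}{9\sqrt3}(1-2/d)$ for $p=2$) with the $o(1)$ genuinely in $b$.

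Your route has the appeal of a single formula covering both regimes. As written, however, it only justifies the constants in the iterated limit $\lim_b\lim_d$. The $o_d(1)$ error in $(d-1)\iprod\approx D/(1-D)$ is not shown to be $o(4^{-b})$ uniformly in $b$, so the fixed-$d$, $b\to\infty$ statement needs something like the paper's tangential decomposition. There is also a mismatch of objects. Your $D$ is the Gaussian Lloyd distortion, whereas the high-rate part of Theorem~\ref{thm:bq_mse} bounds the $f_{p,d}$-optimal codebook at fixed $d$. The clean way to finish your version is to apply Zador--Gersho directly to $N(0,I_p)$. Its per-coordinate constant $2\pi G_p^\star\bigl((p+2)/p\bigr)^{(p+2)/2}=\lim_d dG_p^\star J_{p,d}$ is $\approx2.015$ for $p=2$ and $\le1.770$ for $p=3$.

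Your optional rigor step, which the paper does not attempt, has two errors.
\begin{itemize}
\item The claim $\rho\ge\langle\zb,\tilde\zb\rangle$ for any other codeword choice $\tilde\zb$ is false. The nearest centroid maximizes $\langle\zb,\ob\rangle-\tfrac12\|\ob\|_2^2$, not $\langle\zb,\ob\rangle$, so a farther centroid of larger norm can have a larger inner product.
\item $\mathbb{P}(\rho<A/2)=e^{-\Omega(d)}$ alone does not control $\mathbb{E}[\|\bar\zb\|_2^2\rho^{-2}\mathbf{1}\{\rho<A/2\}]$. You would also need integrability of $\rho^{-2}$, e.g.\ a small-ball estimate for $\rho$ near $0$.
\end{itemize}
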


\paragraph{Discussion of Corollary~\ref{cor:bq_ip}. }Since the MSE distortion of $\edub$ is approximately
$\frac{\mathbf{0.571}}{d-1}$, $\frac{\mathbf{0.133}}{d-1}$,
$\frac{\mathbf{0.0358}}{d-1}$, and $\frac{\mathbf{0.0096}}{d-1}$
for $b=1,~2,~3,~4$, respectively (Corollary~\ref{cor:ed_ip}), the expected inner-product error guarantee for $\bqub$ with block size $\ge2$ is stronger than the corresponding bounds for other existing rotation-based quantizers. The proof of Corollary~\ref{cor:bq_ip} is deferred to Appendix~\ref{subsec:bq_ip}.

\section{Sharper Lower Bound on Distortion}\label{sec:slb}
We derive a lower bound for our quantization problem using the Shannon
lower bound. A similar quantization lower bound based on Shannon's argument
was considered in \citet{zandieh2025turboquant}; however, the entropy term used there is not
directly applicable to a unit vector. (Since
$\sphere$ has zero Lebesgue measure in $\RR^d$, the ambient differential entropy of
$\xb$ is not finite) We correct this by applying the Shannon lower bound to the first $d-1$ coordinates of $\xb$, whose distribution is absolutely continuous on
$\mathbb{B}^{d-1}$. Consequently, unlike the previous expression, the
exponent involves $bd \over (d-1)$ rather than $b$. This correction reflects the
fact that, although the ambient dimension is $d$, the unit sphere $\sphere$
has intrinsic dimension $d-1$.

\begin{theorem}[Shannon distortion lower bound]\label{thm:slb}
    Suppose $\xb \sim \text{Unif}(\sphere)$. Then, for any $b \ge 0$ and any fixed $bd$-bit quantization map $Q$, the MSE is lower bounded as
    \begin{equation*}
        \mathbb{E}_\xb[\|\xb-Q^{-1}(Q(\xb))\|_2^2] \ge c_d\left( {1\over 4}\right)^{ {bd \over d-1}},
    \end{equation*}
    where $c_d:=\frac{d-1}{2\pi e}\left(\frac{\pi^{d/2}}{\Gamma(d/2)}\right)^{2/(d-1)}\exp\!\left(\frac{\psi(1/2)-\psi(d/2)}{d-1}\right) \approx 0.936,0.991,0.999$ for $d=100, 1000, $ $10000$, respectively.
\end{theorem}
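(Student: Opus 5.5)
We reduce the problem to the first $d-1$ coordinates and apply the classical Shannon lower bound in dimension $d-1$.

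\textbf{Step 1: reduction to $d-1$ coordinates.} Write $\xb=(\ub,x_d)$ with $\ub\in\mathbb{B}^{d-1}$. For any reconstruction $\hat\xb=Q^{-1}(Q(\xb))=(\hat\ub,\hat x_d)$ we have
\[
\|\xb-\hat\xb\|_2^2\ge\|\ub-\hat\ub\|_2^2 .
\]
The map $\ub\mapsto\hat\ub$ is a quantizer of $\ub$ with at most $2^{bd}$ reconstruction points. So it suffices to lower bound the MSE of any $2^{bd}$-point quantizer of $\ub$. By Lemma~\ref{lem:block_marginal} with $p=d-1$, the vector $\ub$ has the absolutely continuous density
\[
f_{d-1,d}(\ub)=\frac{\Gamma(d/2)}{\pi^{(d-1)/2}\Gamma(1/2)}(1-\|\ub\|_2^2)^{-1/2}
\]
on $\mathbb{B}^{d-1}$. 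Its differential entropy $h(\ub)$ is therefore finite.

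\textbf{Step 2: Shannon lower bound in dimension $n=d-1$.} Let $\hat\ub$ take $N=2^{bd}$ values. Then
\[
I(\ub;\hat\ub)\le H(\hat\ub)\le bd\log 2 .
\]
Write $D=\mathbb{E}\|\ub-\hat\ub\|^2$. Since the Gaussian maximizes entropy under a second-moment constraint,
\[
h(\ub\mid\hat\ub)\le h(\ub-\hat\ub)\le\frac n2\log\frac{2\pi e D}{n}.
\]
Combining these through $I=h(\ub)-h(\ub\mid\hat\ub)$ gives
\[
D\ge\frac{n}{2\pi e}\exp\!\left(\frac{2}{n}\bigl(h(\ub)-bd\log 2\bigr)\right)=\frac{d-1}{2\pi e}\,e^{2h(\ub)/(d-1)}\,4^{-bd/(d-1)}.
\]
The $4^{-bd/(d-1)}$ factor already appears. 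It remains to show that $\frac{d-1}{2\pi e}e^{2h/(d-1)}$ equals $c_d$.

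\textbf{Step 3: computing $h(\ub)$ (the main computation).} Since $-\log f$ is affine in $\log(1-\|\ub\|^2)$,
\[
h(\ub)=-\log\frac{\Gamma(d/2)}{\pi^{(d-1)/2}\Gamma(1/2)}+\tfrac12\mathbb{E}\log(1-\|\ub\|^2).
\]
By Lemma~\ref{lem:block_marginal}, $\|\ub\|^2\sim\mathrm{Beta}((d-1)/2,1/2)$, so $1-\|\ub\|^2\sim\mathrm{Beta}(1/2,(d-1)/2)$. The standard Beta identity $\mathbb{E}\log X=\psi(\alpha)-\psi(\alpha+\beta)$ then gives
\[
\mathbb{E}\log(1-\|\ub\|^2)=\psi(1/2)-\psi(d/2).
\]
Using $\Gamma(1/2)=\sqrt\pi$, the constant becomes $\pi^{d/2}/\Gamma(d/2)$, so
\[
e^{2h/(d-1)}=\left(\frac{\pi^{d/2}}{\Gamma(d/2)}\right)^{2/(d-1)}\exp\!\left(\frac{\psi(1/2)-\psi(d/2)}{d-1}\right).
\]
This matches $c_d$ exactly.

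The main care points are the sign and bookkeeping in this entropy computation, and justifying $H(\hat\ub)\le bd\log 2$ in nats. The numerical values of $c_d$ then follow by direct evaluation, or from Stirling's formula, which shows $c_d\to1$.
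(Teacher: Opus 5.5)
Your proof is correct and follows the paper's argument essentially step for step: you project to the first $d-1$ coordinates, bound $I(\ub;\widehat\ub)$ by $bd$ bits, apply the Gaussian max-entropy (Shannon) bound in dimension $n=d-1$, and evaluate $h(\ub)$ with the Beta--digamma identity. The differences are minor. You read off the density of $\ub$ from Lemma~\ref{lem:block_marginal} with $p=d-1$, which is fine because its proof works for any leading block size $p<d$ even though the lemma is stated for $p\mid d$. The paper instead derives the density directly from the two-sheet graph parametrization of $\sphere$. Also, $\widehat\ub$ need not be a function of $\ub$ alone, since it may depend on the sign of $x_d$, but your mutual-information argument only uses that $\widehat\ub$ takes at most $2^{bd}$ values, so this does not matter.
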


\paragraph{Disscusion on Theorem~\ref{thm:slb}. } 
The resulting theorem can be interpreted as an $\mse$ lower bound for rotation-based quantizers. Indeed, after a random rotation, the rotated
input vector can be viewed as $\zb\sim\rm{Unif}(\sphere)$, and
nearest-centroid quantization with a fixed codebook can be regarded as a
fixed quantization map on the sphere. Since the distance between two points on sphere is preserved under rotation, the left hand side is equal to $\mse$. This lower bound
shows the tightness of Theorem~\ref{thm:bq_mse}. Moreover, the comparison suggests that, as
the block size $p$ increases, $\bq$ moves closer to the optimal
quantization limit. The proof of Theorem\ref{thm:slb} is provided in  Appendix~\ref{sec:lb}.

\section{Experiments}\label{sec:exp}
To examine whether the distortion improvements predicted by our theory translate into
practical gains, we evaluate \bq{} on real embedding and LLM inference tasks.
Specifically, we consider three settings: quantization accuracy on real embeddings,
nearest-neighbor search using quantized inner-product estimates, and KV-cache quantization
for long-context LLM inference.
Unless stated otherwise, we use block size $p=3$.
We largely follow the experimental environments and settings of~\citet{zandieh2025qjl}.

\paragraph{Practical implementation.}
Since the exact nearest-centroid assignment for \bq{} requires comparing each
block against all $K=2^{bp}$ centroids, its cost grows with both the block size and the
bit-width. 
This cost becomes non-negligible even for small blocks; for instance, when
$p=3$ and $b=4$, each block has $2^{12}=4096$ candidate centroids. 
To keep the
experiments practical, we use a simple lookup-table approximation: we partition the
block domain into a Cartesian grid, precompute a small set of nearest candidate
centroids for each grid cell, and search only within this candidate set at quantization
time. 
This reduces the online assignment cost from a full codebook search to a small
candidate search, while leaving the codebook construction, dequantization, and
rescaling rules unchanged. 
Further details are provided in Appendix~\ref{app:approx_bq}.
\begin{figure}[t]
    \centering
    \includegraphics[width=\linewidth]{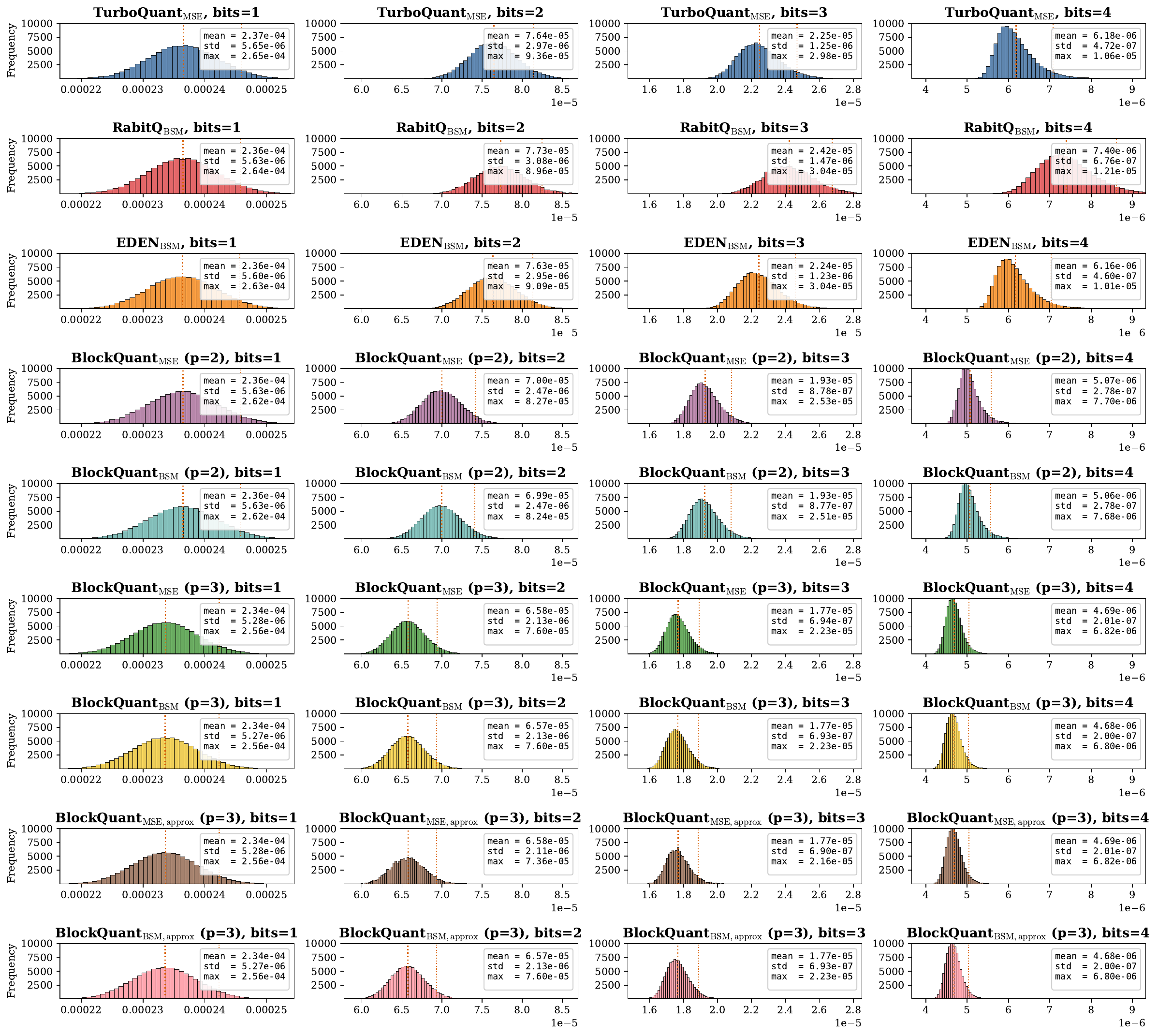}
    \caption{Distribution of MSE.}
    \label{fig:distor_MSE}
\end{figure}
\subsection{Quantization Accuracy.}
\paragraph{Reconstruction MSE.}
We first evaluate reconstruction accuracy on DBpedia Entities~\citep{thakur2021beir} using $1,536$-dimensional embeddings. 
We sample $100,000$ database vectors and normalize them to unit norm, matching the setting of our theoretical analysis. 
For each database vector $\xb_i$, we measure the squared reconstruction error
$e_i=\|\widehat{\xb}_i-\xb_i\|_2^2$, where $\widehat{\xb}_i$ denotes the reconstructed vector.

Figure~\ref{fig:distor_MSE} shows the distribution of the squared reconstruction error. 
Both $\bqmse$ and $\bqbsm$ with block size $p=3$ achieve the smallest distortion among the compared methods, supporting the advantage of block-spherical centroids over coordinate-wise codebooks.
We also observe no noticeable degradation from the approximate nearest-centroid search, suggesting that the approximation preserves the expected-distortion advantage of $\bq{}$ in practice. 
These results further indicate that $\bqbsm$ can serve as an effective reconstruction method in practical settings.

\begin{figure}[t]
    \centering
    \includegraphics[width=\linewidth]{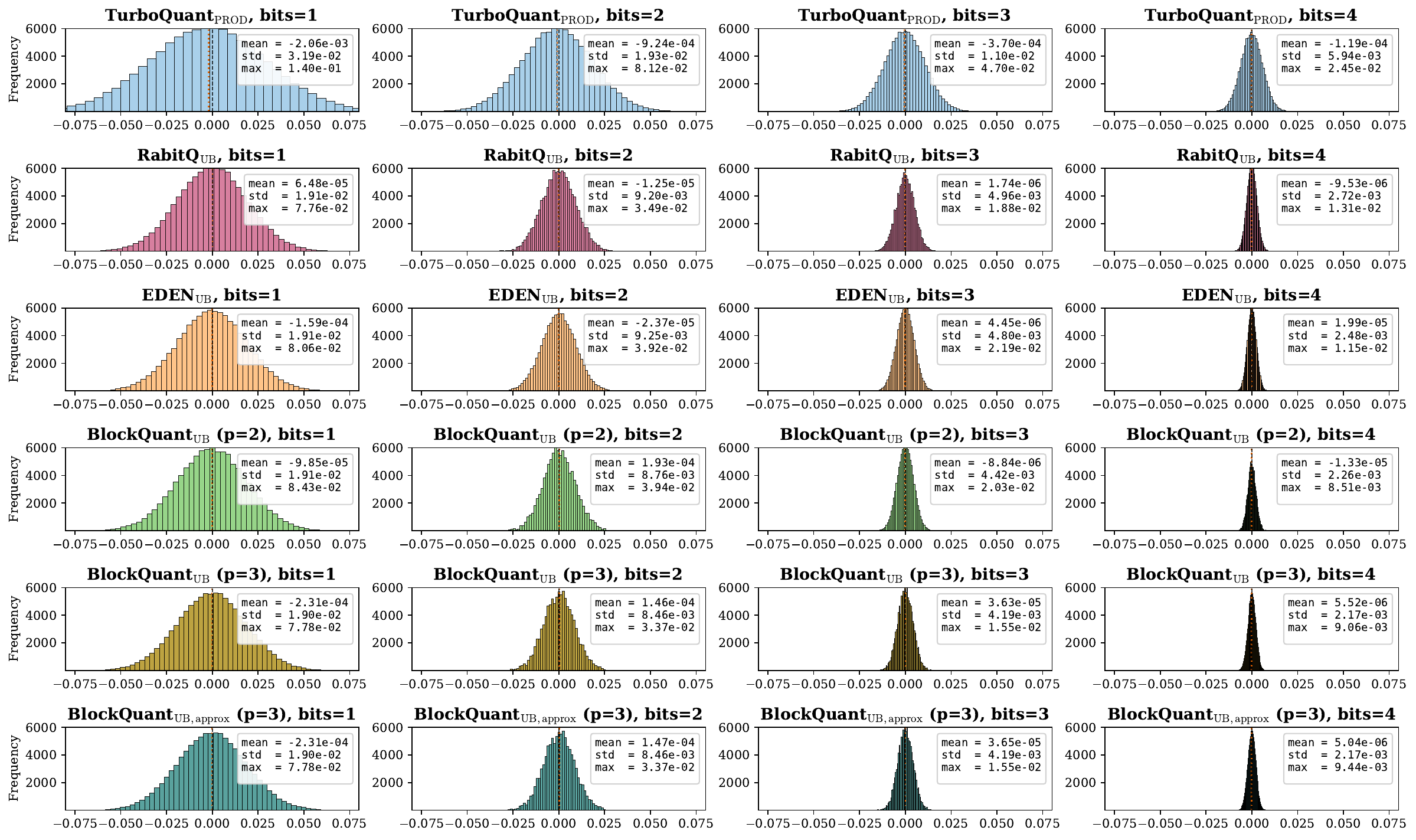}
    \caption{\small{Distribution of inner product error}}
    \label{fig:distor_prod}
\end{figure}
\paragraph{Inner product error.}
We next evaluate inner-product estimation accuracy on DBpedia Entities~\citep{thakur2021beir}
using $1,536$-dimensional embeddings, with $100,000$ database vectors and $1,000$ query vectors. All vectors are normalized, and only the database vectors are quantized. For each pair $(\xb_i,\yb_j)$, we measure the inner-product estimation error $e_{ij}=\langle \widehat{\xb}_i,\yb_j\rangle-\langle \xb_i,\yb_j\rangle$, where $\widehat{\xb}_i$ is the dequantized output of each algorithm.

Figure~\ref{fig:distor_prod} shows that the ratio-based estimators are centered near zero, confirming their empirical unbiasedness. Moreover, $\bqub{}$ yields a more concentrated error distribution than the coordinate-wise baselines, especially at moderate bit-widths.
This is consistent with Corollary~\ref{cor:bq_ip}, which predicts a smaller expected inner-product distortion for the block-spherical construction.

\subsection{Nearest-Neighbor Search.}
We evaluate retrieval quality using Recall@1@k. 
For each query $\qb$, let $g(\qb)$ denote the exact top-$1$ neighbor computed using full-precision inner products, and let $\mathcal{A}_k(\qb)$ denote the set of top-$k$ candidates returned by a method using quantized inner-product estimates.
We define
\begin{equation*}
\mathrm{Recall@1@}k
=
\frac{1}{|\mathcal{Q}|}
\sum_{\qb \in \mathcal{Q}}
\mathbf{1}\!\left\{ g(\qb) \in \mathcal{A}_k(\qb) \right\},
\end{equation*}
where $\mathcal{Q}$ is the query set. Thus, Recall@1@k measures whether quantization preserves the exact nearest neighbor within the top-$k$ retrieved candidates, rather than only measuring average inner-product estimation error.

Figure~\ref{fig:recall_4bit} compares the methods on GloVe ($d=200$) and
OpenAI3/DBpedia ($d=1536$ and $3072$) under $4$-bit compression; the corresponding $2$-bit results are provided in Figure~\ref{fig:recall_2bit}. Across both datasets, the approximate version of $\bqub{}$ achieves strong recall, with the largest gains in the low-$k$ regime. This regime is especially sensitive to quantization error, since small perturbations in inner-product estimates can change the ordering of the top-ranked
candidates.

These results show that the smaller expected inner-product distortion of $\bqub{}$ translates into improved nearest-neighbor retrieval under the same bit budget. In particular, the block-spherical construction improves not only pointwise estimation accuracy but also the ranking quality that is central to approximate nearest-neighbor search.

\subsection{KV Cache Quantization.}
We further evaluate whether the improved distortion of \bq{} translates into end-to-end LLM performance under KV-cache quantization. 
We apply each quantizer to the KV cache of Llama-3.1-8B-Instruct while keeping the model weights unchanged.

\begin{figure}[t]
    \centering
    \includegraphics[width=0.9\linewidth]{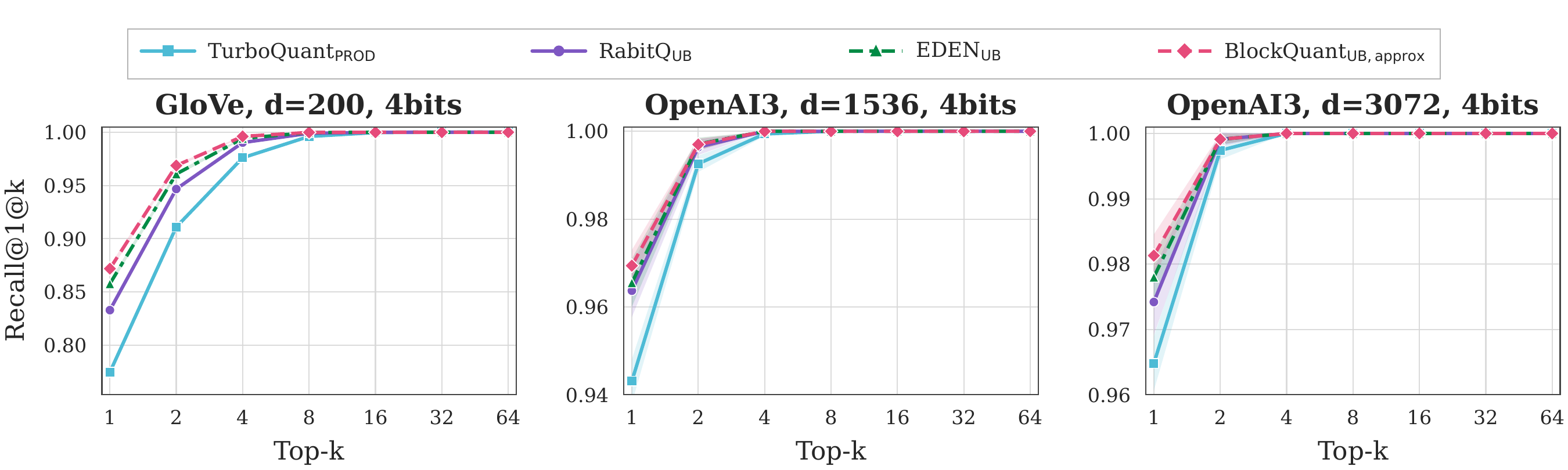}
    \caption{\small{Recall comparison at $4$ bits across different datasets.}}
    \label{fig:recall_4bit}
\end{figure}
\begin{figure}[t]
    \centering
    \includegraphics[width=0.9\linewidth]{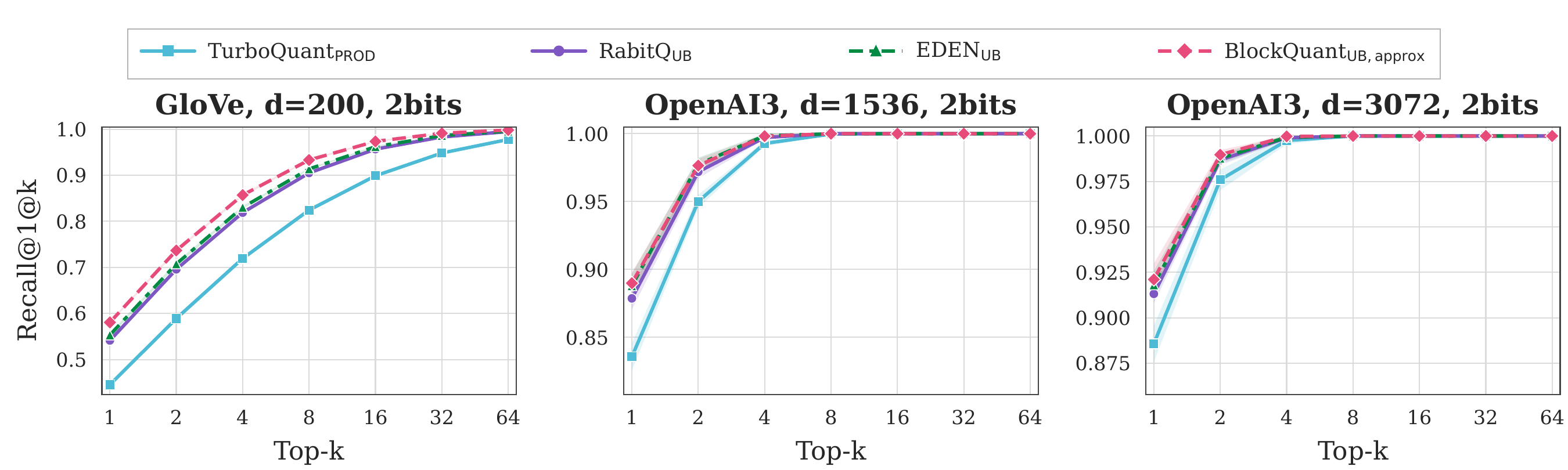}
    \caption{Recall comparison at $2$ bits across different datasets.}
    \label{fig:recall_2bit}
\end{figure}

In the attention computation, the query states are kept in full precision and are not quantized. For the key cache, we follow the outlier-aware configuration used in the KV-cache quantization setup. For each attention head with head dimension $d_h=128$, the $32$ key channels with the largest L2 norm are treated as outlier channels and quantized at a higher bit-width, while the remaining $96$ channels are quantized at a lower bit-width.
In our main $3.5$-bit setting, the outlier channels use $4$-bit quantization and the non-outlier channels use $3$-bit quantization, together with two additional float16 scaling values for the two subvectors. 
This gives an effective key-cache bit-width of $\frac{32 \times 4 + 96 \times 3 + 2 \times 16}{128} = 3.5.$
For the value cache, we quantize the full head dimension uniformly using a $2$-bit approximation. 
All other components, including model weights, MLP layers, embeddings, and output projections, remain unquantized.

We use the approximate nearest-centroid search for $\bq$. We compare against $\rbq$, $\tq$, and $\ed$ under the same KV-cache bit budget, and report results on the Needle-In-A-Haystack~\citep{kamradt2024needle} benchmark 
and LongBench-E~\citep{bai2024longbench}.
Since the randomness induced by sampled rotation matrices has a particularly noticeable effect on LLM inference, we repeat each experiment over five random seeds and report the mean performance with standard deviations.

\begin{figure}[t]
    \centering
    \includegraphics[width=\linewidth]{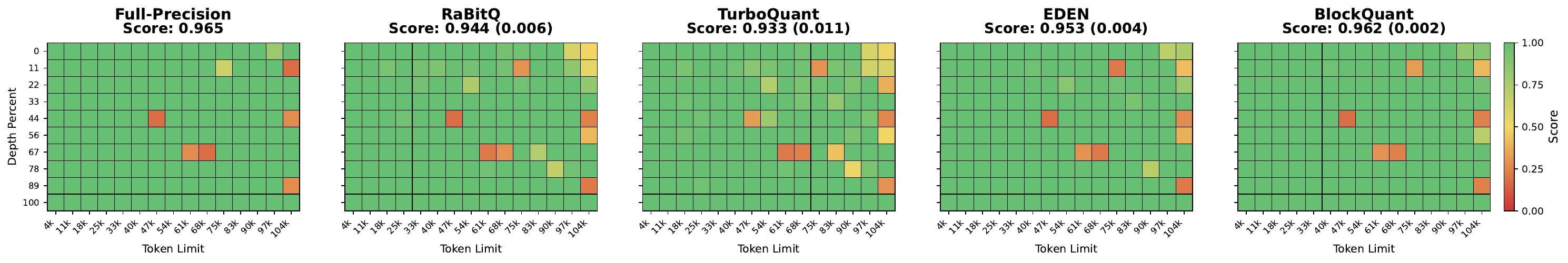}
    \caption{\small{Evaluation of Llama-3.1-8B-Instruct on the ``Needle-In-A-Haystack'' benchmark over five random seeds. Results are reported as mean, with standard deviations shown in parentheses.}}
    \label{fig:needle_3.5bit}
\end{figure}
\begin{table}[t]
\centering
\caption{\small{Evaluation of Llama 3.1 8B Instruct on the ``LongBench-E'' benchmark over five random seeds. Results are reported as mean, with standard deviations shown in parentheses.}}
\label{tab:longbench_35bit}
    \resizebox{\linewidth}{!}{
    \begin{tabular}{l|cccccc|c}
    \toprule
    \textbf{Method} & \textbf{SingleQA} & \textbf{MultiQA} & \textbf{Summ} & \textbf{Few-shot} & \textbf{Synthetic} & \textbf{Code} & \textbf{Average} \\
    
    \midrule
    
    \rbq 
    & 19.12 {\scriptsize (0.54)} 
    & 15.88 {\scriptsize (0.26)} 
    & 29.40 {\scriptsize (0.11)} 
    & 68.23 {\scriptsize (0.10)} 
    & 55.75 {\scriptsize (0.35)} 
    & 60.35 {\scriptsize (0.13)} 
    & 43.52 {\scriptsize (0.15)} \\
    
    \tq 
    & 18.60 {\scriptsize (0.39)} 
    & 15.79 {\scriptsize (0.15)} 
    & 29.11 {\scriptsize (0.11)} 
    & 68.19 {\scriptsize (0.10)} 
    & 55.31 {\scriptsize (0.81)} 
    & 59.72 {\scriptsize (0.49)} 
    & 43.20 {\scriptsize (0.20)} \\
    
    \ed 
    & 19.02 {\scriptsize (0.27)} 
    & 16.30 {\scriptsize (0.26)} 
    & 29.42 {\scriptsize (0.17)} 
    & 68.46 {\scriptsize (0.12)} 
    & 56.29 {\scriptsize (0.22)} 
    & 61.41 {\scriptsize (0.21)} 
    & 43.87 {\scriptsize (0.07)} \\
    
    \bq
    & \textbf{19.55} {\scriptsize (0.46)} 
    & \textbf{16.31} {\scriptsize (0.15)} 
    & \textbf{29.72} {\scriptsize (0.14)} 
    & \textbf{68.49} {\scriptsize (0.16)} 
    & \textbf{56.48} {\scriptsize (0.57)} 
    & \textbf{61.42} {\scriptsize (0.34)} 
    & \textbf{44.03} {\scriptsize (0.10)} \\

    \midrule
    Full Cache (16-bit)
    & 19.53 
    & 16.54 
    & 30.28 
    & 68.41 
    & 56.41 
    & 61.60 
    & 44.15 \\
    
    \bottomrule
    \end{tabular}
    }
\end{table}

\paragraph{Needle-In-A-Haystack.}
Figure~\ref{fig:needle_3.5bit} shows the Needle-In-A-Haystack results across different context lengths and needle depths. The full-precision cache obtains a score of $0.965$. 
Among the quantized methods, $\bq$ achieves the highest score, $0.962$, with a standard deviation of $0.002$, nearly matching the full-precision cache. 
In comparison, $\ed$, $\rbq$, and $\tq$
obtain scores of $0.953$, $0.944$, and $0.933$, respectively. 
This indicates that $\bq$ better preserves the attention-relevant information in the KV cache, especially in long-context settings where small quantization errors can accumulate across many tokens.

\paragraph{LongBench-E.}
Table~\ref{tab:longbench_35bit} reports the LongBench-E results.\footnote{
The original \tq{} evaluation~\citep{zandieh2025qjl} applies additional
prediction post-processing, including truncation to the \textit{first} generated line or
token before scoring. 
To avoid potential evaluation artifacts caused by this truncation, we instead use
the official LongBench-E \texttt{result.json} outputs.
}
$\bq$ achieves the best average score among all quantized methods, with an average of $44.03$, compared with $43.87$ for $\ed$, $43.52$ for $\rbq$, and $43.20$ for $\tq$. 
The score is also close to the full-cache result of $44.15$, leaving only a small gap of $0.12$. 
Moreover, $\bq$ obtains the best quantized performance in all six task groups, including SingleQA, MultiQA, summarization, few-shot tasks, synthetic tasks, and code.

Overall, these results show that the block-spherical construction is effective not only for standalone embedding distortion and nearest-neighbor retrieval, but also for downstream LLM inference. Under the same memory budget, $\bq$ gives the closest performance to the full-precision KV cache and consistently improves over existing rotation-based quantizers.

\section{Conclusion}
In this work, we provide a unified framework for rotation-based quantizers, including $\ed$, $\rbq$, and $\tq$, and compare them from three complementary perspectives: reconstruction MSE, expected inner-product distortion, and high-probability bit complexity. This comparison clarifies that existing methods have different strengths: $\ed$ is favorable for both expected distortion measures, while $\rbq$ provides stronger high-probability guarantees. Motivated by this observation, we propose $\bq$, a block-spherical quantizer that better exploits the geometry of randomly rotated vectors by optimizing centroids at the block level. We prove that $\bq$ improves the distortion constants for both reconstruction MSE and expected inner-product distortion over existing coordinate-wise rotation-based quantizers. Our empirical results support the theoretical findings. 
On embedding retrieval tasks, $\bq$ achieves lower distortion and improved recall compared with existing baselines. We also evaluated KV-cache quantization for Llama-3.1-8B-Instruct, where $\bq$ improves benchmark accuracy over prior rotation-based quantizers. Overall, these results suggest that $\bq$ provides a promising step toward structure-aware vector quantization.

\bibliographystyle{plainnat}
\bibliography{references}

\newpage
\appendix
\counterwithin{table}{section}
\counterwithin{lemma}{section}
\counterwithin{corollary}{section}
\counterwithin{theorem}{section}
\counterwithin{algorithm}{section}
\counterwithin{assumption}{section}
\counterwithin{figure}{section}
\counterwithin{equation}{section}
\counterwithin{condition}{section}
\counterwithin{remark}{section}
\counterwithin{definition}{section}
\counterwithin{proposition}{section}

\clearpage
\appendix

\begingroup
\renewcommand{\partname}{}
\renewcommand{\thepart}{}
\part{Appendix}
\endgroup

\parttoc

\section{Related Work}\label{sec:related_work}

\textbf{Classical vector quantization and high-rate theory. }\,
Vector quantization has a long history in source coding and signal compression. Classical scalar and vector quantizer design is built on the Lloyd--Max optimality conditions and the Linde--Buzo--Gray algorithm, which iteratively alternates nearest-codeword assignment and centroid updates \citep{max1960quantizing,lloyd1982least}. High-rate quantization theory further characterizes the leading-order distortion of optimal vector quantizers through the Zador--Gersho formula \citep{gersho1979asymptotically,zador1982asymptotic}. 

\textbf{Product and block quantization for nearest-neighbor search. }\,
Product quantization (\texttt{PQ}) and its variants are among the most widely used vector quantization methods for approximate nearest-neighbor search. \texttt{PQ} decomposes a high-dimensional vector into low-dimensional subspaces and quantizes each subvector using a separate sub-codebook \citep{jegou2011product, ge2013optimized,norouzi2013cartesian,wang2015optimizedcartesian}. Block-Sphere Quantization ($\bq$) is related to this line of work in that it also quantizes low-dimensional blocks. The key difference is that our codebook is not learned from a dataset. $\bq$ constructs a universal block codebook for this spherical marginal distribution of randomly rotated input vectors in the same framework as $\ed$, $\rbq$, and $\tq$.

\textbf{Spherical quantization. }\,
Several quantization methods exploit hyperspherical geometry by normalizing
vectors and quantizing their directions. For example, deep spherical
quantization uses unit-sphere embeddings for supervised image retrieval,
while binary spherical quantization applies spherical normalization and
binary codes for visual tokenization \citep{eghbali2019deep,zhao2025bsqvit}.
Pyramid vector quantization and related lattice-based methods also use
structured codebooks to represent directions on spherical or pyramidal
domains, and have recently been revisited for neural and LLM compression
\citep{vanderouderaa2024pvq}.

Although these works demonstrate the usefulness of spherical geometry, the
role of the sphere is different from ours. Existing spherical quantizers
typically quantize a full normalized embedding, feature vector, or model-weight
direction, often using a learned or structured spherical codebook. In contrast,
$\bq$ does not construct a codebook on the full sphere. After a random
rotation of $x\in\mathbb{S}^{d-1}$, $\bq$ partitions the rotated vector into
low-dimensional blocks $z_j\in\mathbb{B}^p$ and quantizes the Euclidean-ball
marginal of each block. This marginal distribution is induced exactly by the
unit-sphere geometry and is known in closed form. Thus, $\bq$ uses spherical
geometry through the exact block marginals of randomly rotated unit vectors,
rather than through a global spherical code or a learned hyperspherical
embedding.

\textbf{Rotation-based quantizers and theoretical guarantees. }\,
Several recent quantizers use randomized rotations to make coordinated istributions more regular. \texttt{DRIVE} and $\ed$ apply randomized rotations and scalar quantization for distributed mean estimation and federated learning \citep{vargaftik2021drive,vargaftik2022eden}. $\rbq$combines randomized quantization with a ratio-based estimator and provides high-probability error guarantees for approximate nearest-neighbor search\citep{gao2024rabitq,gao2025practical}. $\tq$ uses the exact coordinate marginal distribution induced by random rotation to design scalar quantizers for MSE, and then combines a reconstruction quantizer with a $\qjl$ residual correction to obtain unbiased inner-product estimation\citep{zandieh2025turboquant,zandieh2025qjl}. Recent notes have compared the $\ed$/\texttt{DRIVE}, $\rbq$, and $\tq$ lines of work, highlighting the need to evaluate these methods under a common set of criteria\citep{benbasat2026note,gao2026revisiting}. However, existing comparisons remain fragmented: they often focus on empirical performance or algorithmic structure, while the theoretical guarantees are stated under different objectives and are therefore difficult to compare directly. Our work follows this comparison-driven perspective by separating three criteria---reconstruction MSE, expected inner-product distortion, and high-probability bit complexity---and analyzing $\ed$, $\rbq$,and $\tq$ under a unified framework. The insights from this comparison then motivate $\bq$, which exploits block-spherical structure to improve expected distortion.

\section{Existing Guarantees on $\ed$, $\rbq$, and $\tq$}
\label{app_sec:existing}
The following proposition is performance guarantees of $\ed$. 
\begin{proposition}[MSE bound of $\edub$, Theorem 2.3 of ~\citet{vargaftik2022eden}]\label{prop:ed_ub_mse}
For all unit input $\xb \in \mathbb{R}^d$, $\edub$ satisfies:
\begin{equation*}
\mse (\edub)
\le
\frac{1}{\mathbb{E}\left[\left(Q(z)\right)^2\right]}
- 1
+ O\left(\sqrt{\frac{\log d}{d}}\right),
\end{equation*}
where $z \sim \mathcal{N}(0,1)$ and $Q$ is Lloyd-max quantizer for standard normal distribution. 
\end{proposition}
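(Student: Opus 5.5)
The plan is to reduce the MSE of $\edub$ to a ratio of two empirical averages over the rotated coordinates, identify their common limit using the Lloyd--Max centroid condition, and then quantify the finite-$d$ deviation through a Gaussian coupling.

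\textbf{Step 1: reduce to empirical averages.} Write $\zb=\sqrt d\,R\xb$, so that $\bar\xb=\frac{1}{\sqrt d}R^\top Q(\zb)$, where $Q$ acts coordinate-wise with the standard-normal Lloyd--Max codebook. Put $\rho:=\langle \xb,\bar\xb\rangle=\frac1d\sum_i z_iQ(z_i)$ and $\|\bar\xb\|_2^2=\frac1d\sum_i Q(z_i)^2$. Expanding $\|\bar\xb/\rho-\xb\|_2^2=\|\bar\xb\|_2^2/\rho^2-2+1$ gives
\[
\mse(\edub)=\mathbb{E}\Bigl[\frac{A}{B^2}\Bigr]-1,\qquad A:=\frac1d\sum_i Q(z_i)^2,\quad B:=\frac1d\sum_i z_iQ(z_i).
\]
This expression does not depend on $\xb$, because $R\xb$ is uniform on $\sphere$, so the maximum over $\xb$ is harmless.

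\textbf{Step 2: the common limit.} For $g\sim N(0,1)$, the Lloyd--Max centroid condition says each centroid is the conditional mean of its cell. This gives $\mathbb{E}[gQ(g)]=\mathbb{E}[Q(g)^2]=:\mu$. If $z_i$ were i.i.d.\ Gaussian, $A$ and $B$ would both concentrate around $\mu$ at rate $O(\sqrt{\log d/d})$ by Hoeffding, since the summands are bounded or subgaussian. Then $A/B^2\approx 1/\mu$, which is exactly the claimed leading term.

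\textbf{Step 3: Gaussian coupling.} To make Step 2 rigorous, couple $\zb=\sqrt d\,\gb/\|\gb\|_2$ with $\gb\sim N(0,I_d)$. With probability $1-d^{-c}$ we have $\|\gb\|_2/\sqrt d=1+O(\sqrt{\log d/d})$ and $\max_i|g_i|=O(\sqrt{\log d})$. On this event each $z_i=g_i(1+\epsilon)$ with $|\epsilon|\lesssim\sqrt{\log d/d}$.

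The function $Q$ is piecewise constant, so $Q(z_i)\ne Q(g_i)$ only when $g_i$ lies within $O(\sqrt{\log d/d}\cdot\sqrt{\log d})$ of one of the finitely many thresholds. The Gaussian mass of such neighborhoods is small, and a Chernoff bound on the count of affected coordinates shows that $A$ and $B$ change by at most $O(\sqrt{\log d/d})$ up to logarithmic factors. I expect the bookkeeping here to be the main obstacle: matching the stated $\sqrt{\log d/d}$ rate, rather than losing an extra $\log$ factor, may require a slightly finer treatment of coordinates near thresholds. Given both deviations, $A/B^2=1/\mu+O(\sqrt{\log d/d})$ on the good event.

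\textbf{Step 4: the bad event.} A deterministic bound is needed because $B$ could in principle be small. Nearest-centroid quantization with a symmetric codebook and no zero centroid gives $\operatorname{sgn}Q(z)=\operatorname{sgn}z$ and $|Q(z)|\ge c_1$, where $c_1>0$ is the smallest positive centroid. Hence
\[
B\ \ge\ \frac{c_1}{d}\sum_i|z_i|\ \ge\ \frac{c_1}{d}\cdot\frac{\sum_i z_i^2}{\max_i|z_i|}\ \ge\ \frac{c_1}{\sqrt d},
\]
using $\sum_i z_i^2=d$ and $\max_i|z_i|\le\sqrt d$. Since $A\le c_{\max}^2$, this yields $A/B^2\le \mathrm{poly}(d)$ deterministically. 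Choosing the failure probability $d^{-c}$ with $c$ large enough, the bad event contributes $o(\sqrt{\log d/d})$. Combining this with Step 3 gives $\mse(\edub)\le 1/\mathbb{E}[Q(z)^2]-1+O(\sqrt{\log d/d})$.
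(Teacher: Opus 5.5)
Your proposal is correct. The paper itself never proves this statement: it is listed among the existing guarantees in Appendix~\ref{app_sec:existing} and imported from EDEN without proof.

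The closest reasoning in the paper is in the proofs of Proposition~\ref{prop:ed_mse} and Corollary~\ref{cor:ed_ip}, which share your Steps 1--2. They use the quantity $\mathbb{E}[\psi_d^2/\rho_d^2]-1$ (your $\mathbb{E}[A/B^2]-1$) and the centroid identity $\mathbb{E}[ZQ_b(Z)]=\mathbb{E}[Q_b(Z)^2]=m_b$. There, however, the passage from Gaussian averages to the actual rotated coordinates is only asserted: ``the concentration argument \dots\ gives $\mathbb{E}[\psi_d^2/\rho_d^2]\le m_b^{-1}+O(\sqrt{\log d/d})$''. That assertion is essentially the statement you are proving. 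Your Steps 3--4 supply what it skips:
\begin{itemize}
\item a coupling $\zb=\sqrt d\,\gb/\|\gb\|_2$ that controls how the random normalization moves coordinates across the thresholds of $Q$;
\item the deterministic bound $B\ge c_1/\sqrt d$, without which high-probability concentration of $A$ and $B$ would not control the expectation of the ratio $A/B^2$.
\end{itemize}

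The extra logarithm you worried about in Step 3 disappears with no additional work.
\begin{itemize}
\item Because $z_i=(1+\epsilon)g_i$ with a common $\epsilon>-1$, the threshold at $0$ is never crossed.
\item Crossing a nonzero threshold $t_k$ when $|\epsilon|\le\epsilon_0<1/2$ forces $|g_i|\le 2|t_k|$, hence $|g_i-t_k|\le 2|t_k|\epsilon_0$.
\item So you may work with the deterministic set $N=\bigcup_k\{g:|g-t_k|\le 2|t_k|\epsilon_0\}$, where $\epsilon_0=C\sqrt{\log d/d}$. Its Gaussian mass is $O(\epsilon_0)$, so Chernoff bounds $\#\{i:g_i\in N\}$ by $O(\sqrt{d\log d})$ with overwhelming probability.
\item Each such coordinate changes $Q(z_i)^2$ and $z_iQ(z_i)$ by $O(1)$.
\item The remaining coordinates leave $A$ unchanged and change $B$ by at most $|\epsilon|B_g$, where $B_g:=\frac1d\sum_i g_iQ(g_i)$.
\end{itemize}
This gives the $O(\sqrt{\log d/d})$ rate directly; the $\max_i|g_i|$ bound is not needed here.

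Two small points should be made explicit. First, your Step 4 needs the $2^b$-level Gaussian Lloyd--Max codebook to be symmetric, so that $0$ is a threshold and $\operatorname{sgn}Q(z)=\operatorname{sgn}z$. This follows from uniqueness of the Lloyd--Max quantizer for log-concave densities. Second, your worst-case bound is $A/B^2\le c_{\max}^2d/c_1^2$. The failure probability of the good event must therefore be $d^{-c}$ with $c>3/2$, so that the bad event contributes $o(\sqrt{\log d/d})$.
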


The following proposition is performance guarantees of $\rbq$. 
\begin{proposition}[Bit complexity of $\rbq$, Theorem 3.2 of ~\citet{gao2025practical}]\label{prop:rbq_bc}
    For $\epsilon,~ \delta>0$, assume that $\frac{1}{\epsilon^{2}}\log\frac{1}{\delta} > d$ holds. Then, to ensure that the inner product error of the estimator is bounded by $\epsilon$ with the probability of at least $1-\delta$, $\rbq$ requires 
$b=\Theta\!\left( \log\left(
    \frac{1}{d\epsilon^{2}}\log\frac{1}{\delta}
    \right)
    \right)$.
\end{proposition}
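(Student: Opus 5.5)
The plan is to analyze the $\rbqub$ estimator $\widehat\eta=\langle \bar{\xb},\yb\rangle/\langle \bar{\xb},\xb\rangle$ directly. We split its error into a geometric factor, the angle between $\xb$ and its codeword, and a random factor, a near-Gaussian projection. Write $\bar{\xb}=\langle\bar{\xb},\xb\rangle\xb+\eb$ with $\eb\perp\xb$. Then
\begin{equation*}
\widehat\eta-\langle\xb,\yb\rangle=\frac{\langle \eb,\yb\rangle}{\langle\bar{\xb},\xb\rangle}
=\tan\theta\cdot\Bigl\langle \frac{\eb}{\|\eb\|_2},\yb_\perp\Bigr\rangle ,
\end{equation*}
where $\theta$ is the angle between $\xb$ and $\bar{\xb}$ (recall $\|\bar{\xb}\|_2=1$ since the codebook is spherical), and $\yb_\perp$ is the component of $\yb$ orthogonal to $\xb$, so $\|\yb_\perp\|_2\le 1$.

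The first step uses the Haar invariance of $R$, in the same way as in Theorem~\ref{thm:ratio_est}. Conditionally on $\theta$, the unit direction $\eb/\|\eb\|_2$ is uniform on the sphere $\mathbb{S}^{d-2}$ of the orthogonal complement of $\xb$. Hence $\langle \eb/\|\eb\|_2,\yb_\perp\rangle$ is distributed as (at most) one coordinate of a uniform vector on $\mathbb{S}^{d-2}$. That coordinate is sub-Gaussian with variance proxy $1/(d-1)$, so with probability at least $1-\delta/2$ its magnitude is at most $\sqrt{2\log(4/\delta)/(d-1)}$.

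The second step shows that $\tan\theta\le C\,2^{-b}$ with probability at least $1-\delta/2$, for a constant $C$ independent of $d$. Since the chosen codeword is the closest normalized grid point, it suffices to exhibit one good grid point. Choose the scale $\alpha=\|\zb\|_\infty/2^{b-1}$. Then each coordinate of $\zb/\alpha-Q_b(\zb/\alpha)$ is at most $1/2$, so $\|\zb/\alpha-Q_b(\zb/\alpha)\|_2\le\sqrt d/2$ while $\|\zb/\alpha\|_2=1/\alpha$. This yields $\sin\theta\lesssim \sqrt d\,\|\zb\|_\infty 2^{-b}\lesssim\sqrt{\log(d/\delta)}\,2^{-b}$.

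This crude bound loses a logarithmic factor, and removing it is the step I expect to be the main obstacle. I would first try replacing the $\ell_\infty$ scale by a scale $\alpha\asymp 2^{-b}/\sqrt d$ matched to typical coordinates. The few coordinates that get clipped would be controlled by Gaussian tail sums, so that $\frac1d\sum_j(R_j-\alpha' Q_b(R_j/\alpha'))^2\lesssim 4^{-b}$ holds with probability $1-e^{-cd}$, where $R_j=\sqrt d\,z_j$ as in Proposition~\ref{prop:rbq_mse}. The justification would be concentration of a Lipschitz-like function of $\zb$ on the sphere. Because $\log(1/\delta)<d\epsilon^2$ is not assumed, we need $e^{-cd}\le\delta/2$. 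If this fails, we can instead absorb the failure probability using the assumption $\frac1{\epsilon^2}\log\frac1\delta>d$, which allows an extra $\log$ factor inside $b$ only up to constants.

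Combining the two steps, with probability at least $1-\delta$ the error is at most $C2^{-b}\sqrt{\log(1/\delta)/d}$. This is at most $\epsilon$ once $b\ge\frac12\log_2\!\bigl(\frac{C^2}{d\epsilon^2}\log\frac1\delta\bigr)$, and this threshold is positive precisely under the stated assumption. That gives the upper bound $b=O\bigl(\log(\frac{1}{d\epsilon^2}\log\frac1\delta)\bigr)$. The matching lower bound follows from the optimal bit complexity of \citet{alon2017optimal} (Lemma~\ref{lem:alon}), which holds for any inner-product sketch in this regime; together these give the stated $\Theta$.
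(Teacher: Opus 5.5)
Step~1 is sound, and the stabilizer argument behind Theorem~\ref{thm:ratio_est} gives slightly more than you use: $\mathbf{w}:=\eb/\|\eb\|_2$ is independent of the whole rotated vector $\zb=R\xb$, not only of $\theta$. Note that the paper itself gives no proof of this proposition; it quotes Theorem~3.2 of \citet{gao2025practical}. The genuine gap is Step~2, whose target is false.

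\textbf{The angular bound $\tan\theta\le C2^{-b}$ is wrong.}
\begin{itemize}
\item By concentration, $\sin^2\theta=\min_{\alpha>0}\frac1d\sum_j(R_j-\alpha Q_b(R_j/\alpha))^2$ tends to $\min_\alpha\phi(\alpha)$ from the proof of Proposition~\ref{prop:rbq_mse}.
\item For a uniform grid on a Gaussian this limit is $\Theta(b\,4^{-b})$, not $O(4^{-b})$. Keeping the overload term below $4^{-b}$ forces the grid to reach $\asymp\sqrt b$ standard deviations, so the step is $\asymp\sqrt b\,2^{-b}$.
\item The paper's own table confirms this: $4^b\min_\alpha\phi(\alpha)\approx1.45,\,1.90,\,2.40,\,2.95$ for $b=1,2,3,4$. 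So no universal $C$ gives $\tan\theta\le C2^{-b}$.
\item Your proposed scale $\alpha\asymp2^{-b}/\sqrt d$ spans only $O(1)$ standard deviations and clips a constant fraction of the coordinates.
\item The confidence level $1-e^{-cd}$ is also unavailable. A single coordinate with $z_j^2\approx t$ (probability $\approx e^{-dt/2}$) already forces $\sin^2\theta\gtrsim t$ once $4^b\lesssim dt$; this is shown below.
\end{itemize}
The $\sqrt b$ loss by itself would be harmless, because $4^b/b\gtrsim X:=\frac1{d\epsilon^2}\log\frac1\delta$ still allows $b=O(\log X)$. The real difficulty is the tail of $\sin\theta$ at level $\delta$. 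Since $\sin\theta$ is the distance from $\zb$ to the union of codeword lines, it is $1$-Lipschitz on the sphere. L\'evy's inequality then gives only $\sin\theta\lesssim\sqrt b\,2^{-b}+\sqrt{\log(1/\delta)/d}$. That reproduces the condition $\epsilon\gtrsim\log(1/\delta)/d$ of Theorem~\ref{thm:ed_tq_bc}, and says nothing in the regime where this statement claims an advantage.

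\textbf{Your fallback for $\delta<e^{-cd}$ cannot be completed, because the claimed upper bound fails there.}
Suppose $z_1^2\ge s$ and $(2^b-1)^2\le sd/144$. Then $\|\zb-t\mathbf g\|_2^2\ge s/4$ for every $t>0$ and $\mathbf g\in\mathcal G_b$, by two cases.
\begin{itemize}
\item If $t(2^b-1)/2\le\sqrt s/2$, the first coordinate is missed by at least $\sqrt s/2$.
\item Otherwise, set $\tau:=\sqrt s/(4(2^b-1))$. Since the grid has no zero level, $|tg_j|\ge t/2>2\tau$ for every $j$. At most $1/\tau^2\le d/9$ coordinates exceed $\tau$, so at least $d/2$ coordinates with $|z_j|\le\tau$ each contribute at least $\tau^2$. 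The total is at least $d\tau^2/2\ge1$.
\end{itemize}
Now take $\yb\perp\xb$ and $s=2\epsilon$, and use your decomposition $|\widehat\eta-\langle\xb,\yb\rangle|=\tan\theta\,|\langle\mathbf w,\yb\rangle|$ together with the independence of $\mathbf w$ and $\zb$. Whenever $(2^b-1)^2\le\epsilon d/72$,
\[
\mathbb P\bigl(|\widehat\eta-\langle\xb,\yb\rangle|>\epsilon\bigr)\ge\mathbb P(z_1^2\ge2\epsilon)\,\mathbb P\bigl(\langle\mathbf w,\yb\rangle^2\ge4\epsilon\bigr)\ge\frac{[(1-2\epsilon)(1-4\epsilon)]^{d/2}}{\mathrm{poly}(d)} .
\]
Take $\epsilon=1/10$ and $\delta=e^{-d}$. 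The hypothesis holds and $X=100$ is constant, so the statement promises a bounded $b$. Yet every $b$ with $(2^b-1)^2\le d/720$ fails for large $d$.

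\textbf{Consequence.} A correct proof needs an additional restriction linking $\delta$, $\epsilon$ and $d$. What your ingredients do give (Step~1 plus the Lipschitz bound with a $\sqrt b$ loss) is $b=O(\log X)$ under the extra condition $\epsilon\gtrsim\log(1/\delta)/d$. Your use of Lemma~\ref{lem:alon} for the lower bound matches how the paper uses it.
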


The following propositions are performance guarantees of $\tq$. 
\begin{proposition}[Expected distortion bound of $\tqmse$, Theorem 1 of \citet{zandieh2025turboquant}]\label{prop:tq_mse}
    The $b$-bit $\tqmse$ achieves the following distortion rate: 
\begin{itemize}
    \item $\mse (\tqmse)\lesssim {\sqrt{3}\pi \over 2}\cdot{1 \over {4^b}}$ 
    \item For $b=1,~2,~3,~4$, $\mse (\tqmse) \approx 0.36, 0.117, 0.03, 0.009$
\end{itemize}
\end{proposition}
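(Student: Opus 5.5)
The plan is to reduce the worst-case $\mse$ to a one-dimensional scalar quantization problem and then apply classical Lloyd--Max and high-rate theory. First, rotation invariance removes the maximum over $\xb$. For any fixed $\xb\in\sphere$, the vector $\zb=R\xb$ is exactly $\mathrm{Unif}(\sphere)$ when $R$ is Haar. Since $R$ is orthogonal, $\|\xb-R^\top\bar{\zb}\|_2^2=\|\zb-\bar{\zb}\|_2^2$. So the expectation does not depend on $\xb$, and
\[
\mse(\tqmse)=\mathbb{E}_{\zb\sim\mathrm{Unif}(\sphere)}\Bigl[\textstyle\sum_{j=1}^d (z_j-q(z_j))^2\Bigr]=d\cdot\mathbb{E}\bigl[(z_1-q(z_1))^2\bigr],
\]
where $q$ is the $b$-bit scalar Lloyd--Max quantizer applied coordinatewise. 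By exchangeability all coordinates share one marginal. By Lemma~\ref{lem:block_marginal} with $p=1$, that marginal has density $f_{1,d}(t)\propto(1-t^2)^{(d-3)/2}$ on $[-1,1]$. The problem is therefore the per-coordinate optimal $2^b$-level distortion $D_b(f_{1,d})$, and the target is $d\,D_b(f_{1,d})$.

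For the asymptotic bound, I would invoke the Panter--Dite (Bennett/Zador high-rate) formula for the optimal scalar quantizer:
\[
D_b(f)\le \tfrac{1}{12}\Bigl(\int f(t)^{1/3}\,dt\Bigr)^3 4^{-b}(1+o(1)).
\]
Rather than rely only on the asymptotics, I would obtain an upper bound at every $b$ by plugging in the companding quantizer built from the point density $\lambda\propto f^{1/3}$. Its distortion upper-bounds the Lloyd--Max distortion and equals the Panter--Dite expression to leading order. Next I would compare $f_{1,d}$ with the Gaussian $N(0,1/d)$. For a Gaussian with variance $\sigma^2$, a direct computation gives $\int f^{1/3}=(2\pi\sigma^2)^{1/3}\sqrt{3}$. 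Hence $\tfrac1{12}\bigl(\int f^{1/3}\bigr)^3=\tfrac{\sqrt3\pi}{2}\sigma^2$, and multiplying by $d$ with $\sigma^2=1/d$ yields $\tfrac{\sqrt3\pi}{2}\,4^{-b}$. For the exact beta-type density, I would show that $\int f_{1,d}^{1/3}$ is at most its Gaussian counterpart up to a $(1+O(1/d))$ factor. Both integrals can be written in closed form through Beta functions, and I would check the resulting ratio of Gamma functions, or otherwise state the bound with the $\lesssim$ that the proposition already uses.

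For $b=1,2,3,4$ I would compute the optimal quantizers numerically. I would run Lloyd's iteration, alternating midpoint thresholds and conditional means, on $f_{1,d}$ for large $d$; equivalently, I would work on $N(0,1)$ after rescaling and multiply by $d\cdot\frac1d=1$. For $b=1$ there is a closed form: the centroids are $\pm\mathbb{E}|g|=\pm\sqrt{2/\pi}$, giving distortion $1-2/\pi\approx0.363$. Higher $b$ reproduce the tabulated Max values $0.1175$, $0.0345$, and $0.0095$, which are consistent with the stated $0.117, 0.03, 0.009$.

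I expect the main obstacle to be the rigorous passage from the high-rate asymptotic to a bound valid for each finite $b$, together with controlling the non-Gaussian marginal uniformly in $d$. The companding construction handles the first issue only up to the $(1+o(1))$ term. This is why the statement is phrased with $\lesssim$, and I would present the constant in that sense.
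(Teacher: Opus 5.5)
Your proposal is correct and follows essentially the same route as the paper and the cited \tq{} argument: rotation invariance reduces $\mse(\tqmse)$ to $d$ times the optimal $2^b$-level scalar distortion of the coordinate marginal $f_{1,d}$. The large-$b$ constant then comes from the Panter--Dite formula (the $p=1$ case of Zador--Gersho, with $G_1^\star=1/12$ and $J_{1,d}\approx 6\sqrt3\pi/d$), evaluated through the $N(0,1/d)$ approximation, and the $b=1,2,3,4$ values come from the Gaussian Lloyd--Max numerics; your remark that the result holds only up to a $(1+o(1))$ factor in $b$ matches the $\lesssim$ in the statement.
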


\begin{proposition}[Expected distortion bound of $\tqprod$, Theorem 2 of \citet{zandieh2025turboquant}]\label{prop:tq_ip}
    The $b$-bit $\tqprod$ achieves the following distortion rate: 
\begin{itemize}
    \item $\iprod (\tqprod) \lesssim {\sqrt{3}\pi^2 \over d}\cdot{1 \over {4^b}}\quad$ 
    \item For $b=1,~2,~3,~4$, $\iprod (\tqprod) \approx {1.57 \over d}, {0.56 \over d}, {0.18 \over d}, {0.047 \over d}$. 
\end{itemize}
\end{proposition}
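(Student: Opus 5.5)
We would prove the statement by conditioning on the first stage of $\tqprod$ and treating the $\qjl$ stage as an unbiased estimator of the residual inner product. Write the $b$-bit $\tqprod$ output as $\bar{\xb}+\widehat{\rb}$. Here $\bar{\xb}$ is the raw reconstruction of the $(b-1)$-bit $\tqmse$ quantizer, $\rb:=\xb-\bar{\xb}$ is the residual, and $\widehat{\rb}=\sqrt{\pi/2}\,\frac{\|\rb\|_2}{d}S^\top\operatorname{sgn}(S\rb)$ for a $d\times d$ Gaussian matrix $S$ with i.i.d.\ $N(0,1)$ entries, drawn independently of the rotation $R$. The error then splits as
\begin{equation*}
\langle \yb,\bar{\xb}+\widehat{\rb}\rangle-\langle\yb,\xb\rangle=\langle\yb,\widehat{\rb}\rangle-\langle\yb,\rb\rangle .
\end{equation*}
Conditionally on $R$, the vector $\rb$ is fixed, so the whole problem reduces to the conditional mean and variance of the $\qjl$ estimator.

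\textbf{Step 1 (one row of $S$).} Fix a row $\sbb$ of $S$. We would show
\begin{equation*}
\mathbb{E}\bigl[\langle\sbb,\yb\rangle\operatorname{sgn}\langle\sbb,\rb\rangle\bigr]=\sqrt{2/\pi}\,\frac{\langle\yb,\rb\rangle}{\|\rb\|_2}.
\end{equation*}
To do this, decompose $\yb$ into its component along $\rb/\|\rb\|_2$ and an orthogonal part. The orthogonal part contributes zero by independence and symmetry. The parallel part contributes $\mathbb{E}|g|=\sqrt{2/\pi}$ for $g\sim N(0,1)$. Multiplying by the prefactor $\sqrt{\pi/2}\,\|\rb\|_2$ gives unbiasedness. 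The second moment is bounded by $\mathbb{E}\langle\sbb,\yb\rangle^2=\|\yb\|_2^2=1$, since $\operatorname{sgn}^2=1$. Averaging the $d$ independent rows therefore gives
\begin{equation*}
\mathbb{E}_S\bigl[(\langle\yb,\widehat{\rb}\rangle-\langle\yb,\rb\rangle)^2\,\big|\,R\bigr]\le\frac{\pi}{2d}\|\rb\|_2^2 .
\end{equation*}

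\textbf{Step 2 (reduce to the first stage).} Take the expectation over $R$ and the maximum over $\xb,\yb$. This yields
\begin{equation*}
\iprod(\tqprod)\le\frac{\pi}{2d}\,\mse\bigl(\tqmse^{(b-1)}\bigr),
\end{equation*}
with the convention that the $0$-bit quantizer outputs $\bar{\xb}=0$, so its MSE equals $1$. Plugging in Proposition~\ref{prop:tq_mse} gives the asymptotic bound
\begin{equation*}
\frac{\pi}{2d}\cdot\frac{\sqrt3\pi}{2}\,4^{-(b-1)}=\frac{\sqrt3\pi^2}{d}\,4^{-b}.
\end{equation*}
It also gives the numerical values: $\frac{\pi}{2}\cdot1\approx1.57$ for $b=1$, and $\frac{\pi}{2}\cdot0.36\approx0.56$, $\frac{\pi}{2}\cdot0.117\approx0.18$, $\frac{\pi}{2}\cdot0.03\approx0.047$ for $b=2,3,4$.

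\textbf{Main obstacle.} The delicate point is Step 1. We must check that the variance bound is uniform in $\yb$ and does not depend on the direction of $\rb$. We also need the factor $\frac{\pi}{2}$ rather than something larger. This holds because the bound drops the subtracted squared-mean term, so it is an upper bound rather than an equality. Finally, we need $S$ independent of $R$, so that the conditioning argument is legitimate.
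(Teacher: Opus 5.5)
Your proposal is correct and follows the same route as the cited proof (Theorem 2 of \citet{zandieh2025turboquant}), which the paper imports rather than reproves; the paper also uses the same residual decomposition in its bit-complexity analysis of $\tqprod$. That route conditions on the $(b-1)$-bit $\tqmse$ stage, uses unbiasedness and the $\frac{\pi}{2d}\|\rb\|_2^2\|\yb\|_2^2$ variance bound of the $\qjl$ estimator, and substitutes the $\tqmse$ constants (taking MSE $1$ at zero bits), which yields both the $\frac{\sqrt3\pi^2}{d}4^{-b}$ rate and the values $\frac{1.57}{d},\frac{0.56}{d},\frac{0.18}{d},\frac{0.047}{d}$.
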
 

\section{High-Resolution Formula}\label{sec:high-rate}

In this section we briefly state high-resolution formula of high-rate quantization analysis~\citep{gersho1979asymptotically,zador1982asymptotic}. Let $\Mcal$ be an $m$-dimensional smooth Riemannian manifold with volume measure $\operatorname{d}\! V_\Mcal$, and let $\mu$ be a probability measure on $\Mcal$ with density $f$ with respect to $\operatorname{d}\! V_\Mcal$. 

\begin{definition}[Zador--Gersho constant]
Let $\Tcal\subset \Mcal$ be a bounded measurable subset of $M$ with positive volume, and let $\bar \xb_\Tcal:=\frac1{|\Tcal|}\int_\Tcal \xb\,\operatorname{d}\! V_\Mcal(\xb)$
be its centroid. Its normalized second moment is
\begin{equation*}
G(\Tcal):=\frac{1}{m\,|\Tcal|^{1+2/m}}
\int_\Tcal \|u-\bar u_\Tcal\|_2^2\,\operatorname{d}\! V_\Mcal(\xb).
\end{equation*}
Then, the Zador--Gersho constant $G_\Mcal^{\star}$ is defined as the infimum of $G(\Tcal)$ over all tessellating $m$-dimensional cells:
\begin{equation*}
G_\Mcal^{\star}:=\inf\{G(\Tcal): \Tcal \text{ tiles }\Mcal\text{ by translations}\}.
\end{equation*}
\end{definition}

\begin{definition}[Source factor]
The source factor $J_\Mcal$ is defined as 
\begin{equation*}
J_\Mcal
:=\left(\int_\Mcal f(\xb)^{\frac{m}{m+2}}\,\operatorname{d}\! V_\Mcal(\xb)
\right)^{\frac{m+2}{m}}.
\end{equation*}
\end{definition}

\begin{proposition}[General high-rate Zador--Gersho formula]\label{prop:high-rate}
    Let $\Dcal_\text{MSE}^*(K)$ be the optimal $K$-point quantization distortion for $\Mcal$, i.e. 
    \begin{equation*}
\Dcal_\text{MSE}^*(K):=
\inf_{\mathcal C\subset \RR^d,\, |\mathcal C|\le K}
\int_\Mcal \min_{c\in\mathcal C} \|\xb-c\|_2^2 f(\xb)\,\operatorname{d}\! V_\Mcal(\xb).
\end{equation*}
    Then, the leading-order optimal distortion is
\begin{equation}
\Dcal_\text{MSE}^*(K)=\left(G^*_\Mcal+o(1)\right) J_\Mcal K^{-2/m}
,
\qquad K\to\infty.
\label{eq:zador-gersho}
\end{equation}
\end{proposition}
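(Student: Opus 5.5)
The plan follows the classical two-sided argument of \citet{zador1982asymptotic} and \citet{gersho1979asymptotically}, applied chart-by-chart on $\Mcal$. Since $\Mcal$ is smooth and $f$ is a density with respect to $\operatorname{d}\! V_\Mcal$, I first reduce to the Euclidean case. I cover the support of $f$ by finitely many small coordinate patches on which the metric is $(1+O(\varepsilon))$-close to Euclidean and $f$ is within a factor $(1\pm\varepsilon)$ of a constant $f_i$ on a cube $\Tcal_i$. Squared ambient distances between nearby points of $\Mcal$ agree with squared geodesic, hence chart, distances up to a factor $1+O(\varepsilon)$. The problem therefore becomes a piecewise-uniform quantization problem in $\RR^m$, with errors that vanish as $\varepsilon\to0$ after $K\to\infty$. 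Tails and regions of small $f$ receive a vanishing fraction of codepoints and are controlled by a crude covering bound.

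\textbf{Upper bound.} Fix any cell $\Tcal$ that tiles $\RR^m$ by translations. I place $K_i$ codepoints in $\Tcal_i$, namely the centroids of a scaled tiling by copies of $\Tcal$ with volume $|\Tcal_i|/K_i$. Each full cell contributes exactly $m\,G(\Tcal)\,(|\Tcal_i|/K_i)^{1+2/m}$ to the integral, because $G$ is scale invariant. Boundary cells are $o(K_i)$ in number, and I absorb them by giving them an extra $o(K)$ points. Summing gives
\[
\sum_i f_i|\Tcal_i|\,G(\Tcal)\,m\,(|\Tcal_i|/K_i)^{2/m}
\]
up to normalization. I then optimize the allocation under $\sum_i K_i=K$. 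By Hölder, the optimum is $K_i\propto f_i^{m/(m+2)}|\Tcal_i|$, and the total becomes $G(\Tcal)\,J_\Mcal\,K^{-2/m}(1+O(\varepsilon))$ once the constant factor $m$ is matched to the normalization in $G$. Taking the infimum over $\Tcal$ gives $\limsup_K K^{2/m}\Dcal_\text{MSE}^*(K)\le G^*_\Mcal J_\Mcal$. This is the direction the paper actually uses in Theorem~\ref{thm:bq_mse}.

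\textbf{Lower bound.} For an optimal codebook I localize again. Let $K_i$ be the number of codepoints whose Voronoi cells meet $\Tcal_i$. The contribution of $\Tcal_i$ is at least $f_i(1-\varepsilon)$ times the optimal $K_i$-point distortion of the uniform law on a cube. I would show that this latter quantity is $\ge(G^*_\Mcal-o(1))\,|\Tcal_i|^{1+2/m}K_i^{-2/m}$ by a rescaling and self-similarity argument. Optimal $K$-point quantizers of a cube can be glued into periodic configurations of a large torus, and a periodic Voronoi partition is a tiling of $\RR^m$ by the orbit of a fundamental domain. The union of the cells in one period is therefore a translational tile whose normalized moment is at most the average normalized moment of the constituent cells. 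Reverse Hölder over the allocation $(K_i)$ then yields the matching $\liminf$.

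\textbf{Main obstacle.} The step I expect to be hardest is precisely the last comparison: bounding the distortion of an optimal periodic Voronoi configuration from below by $G^*_\Mcal$ defined through single-cell translational tiles. The union of a period is a tile, but its second moment about a single centroid is larger than the sum of the per-cell moments. The inequality must therefore instead be obtained by showing that optimal configurations asymptotically have congruent cells. That is Gersho's conjecture, proved for $m=1$ (trivially) and $m=2$ (Fejes T\'oth's hexagon theorem). I would first carry out this case, which covers $p=2$ and $p=3$ with $m=p-1$ on the angular part of each block. For general $m$ I would invoke the formula as cited from \citet{gersho1979asymptotically}, noting that only the upper bound above enters the paper's results.
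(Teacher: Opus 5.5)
\textbf{Where your argument stands.} The paper gives no proof of this proposition; it quotes it from Gersho and Zador, and only the upper-bound direction is ever used. Your upper-bound construction is the standard one and is sound: tile each near-uniform patch by scaled copies of a translational tile, then allocate $K_i\propto f_i^{m/(m+2)}|\Tcal_i|$ by H\"older. It does not, however, yield the displayed formula. With the paper's normalization of $G$, which carries the factor $1/m$, each cell contributes $m\,G(\Tcal)\,|\mathrm{cell}|^{1+2/m}$, exactly as you wrote. Your allocation therefore produces $m\,G(\Tcal)\,J_\Mcal K^{-2/m}$, and the phrase ``once the constant factor $m$ is matched to the normalization in $G$'' hides a real discrepancy: there is nothing to match it to. In fact \eqref{eq:zador-gersho} is false as written for $m\ge2$. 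For $\mathrm{Unif}[0,1]^2$, where $J_\Mcal=1$, the hexagon theorem gives $\Dcal^*_{\mathrm{MSE}}(K)\sim\frac{5}{18\sqrt3}K^{-1}=2G_2^\star K^{-1}$, twice the right-hand side. The correct form is $(m\,G^*+o(1))J_\Mcal K^{-2/m}$. This is also what the paper uses downstream: $d\,G_p^\star J_{p,d}$ summed over $d/p$ blocks in Corollary~\ref{cor:bq_mse_high}, and $n\,G_n^\star$ in the $p=d$ case. You should flag and repair the statement rather than absorb the $m$.

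\textbf{The lower bound.} Here the gap cannot be closed. With $G^*_\Mcal$ defined as an infimum over single translational tiles, the matching $\liminf$ is (a form of) Gersho's conjecture. Gersho's 1979 paper formulated this but did not prove it, so ``invoking the formula as cited'' for $m\ge3$ appeals to an open problem. What is a theorem (Zador, made rigorous by Bucklew--Wise and Graf--Luschgy) is the asymptotic with Zador's constant $\lim_{K}K^{2/m}\Dcal^*_{\mathrm{MSE}}(K)$ for $\mathrm{Unif}[0,1]^m$. That constant is only known to be $\le m\inf_\Tcal G(\Tcal)$, with equality for $m=1,2$. That theorem also needs a moment condition $\int\|\xb\|_2^{2+\delta}f<\infty$, which your tail-covering step requires when the support is unbounded.

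\textbf{Two further corrections.} You correctly see why the periodic-union tile fails. But your claim that $p=2,3$ reduce to $m=p-1$ is wrong. For $p<d$ the block $\zb_j$ has the full-dimensional density $f_{p,d}$ on $\mathbb{B}^p$, so $m=p$, and $p=3$ falls in the open case $m=3$. This is exactly why the paper only uses $G_3^\star\le G(A_3^\star)$; only $p=d$ lives on a sphere, with $m=d-1$. Relatedly, ``tiles $\Mcal$ by translations'' has no meaning on a curved $\Mcal$. Your chart reduction implicitly, and correctly, replaces $G^*_\Mcal$ by the Euclidean constant of $\RR^m$.

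The defensible version of the proposition is therefore $\limsup_K K^{2/m}\Dcal^*_{\mathrm{MSE}}(K)\le m\,G^*J_\Mcal$, with equality known for $m\le2$. Your construction proves this bound, and it is all the paper needs.
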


The Panter--Dite high-resolution formula~\citep{panter2006quantization} used for scalar fixed-rate quantization is the $m=1$ specialization of eq.~\ref{eq:zador-gersho}. For a one-dimensional density $f$ and a scalar quantizer with $K$ levels,
\begin{equation*}
\Dcal_\text{MSE}^*(K)=\left(\frac{1}{12}+o(1)\right)
\left(\int f(x)^{1/3}\,\operatorname{d}\! x\right)^3K^{-2},
\qquad K\to\infty.
\label{eq:panter-dite-K}
\end{equation*}


\section{MSE Analysis}\label{sec:mse}
\subsection{Proof of Proposition~\ref{prop:ed_mse}}\label{subsec:ed_mse}

\subsubsection{Proof for Small $b=1,~2,~3,~4$}
\begin{proof}
Let $\zb=R\xb$, let $R_j:=\sqrt d\,z_j$, and set $K=2^b$. Let $Q_b:\mathbb{R}\to\{q_1,\ldots,q_K\}$ be the $K$-level Lloyd--Max scalar quantizer for $Z\sim N(0,1)$. Thus there are thresholds $-\infty=t_0<t_1<\cdots<t_K=\infty$ such that $Q_b(r)=q_i$ for $r\in[t_{i-1},t_i)$, $q_i=\mathbb{E}[Z\mid Z\in[t_{i-1},t_i)]$, and $t_i=(q_i+q_{i+1})/2$.
Set $\bar{\zb}:=d^{-1/2}(Q_b(R_1),\ldots,Q_b(R_d))$ and $\bar{\xb}:=R^\top\bar{\zb}$. Since $R$ is orthogonal, $\rho_d:=\langle \xb,\bar{\xb}\rangle=\langle \zb,\bar{\zb}\rangle=d^{-1}\sum_{j=1}^d R_jQ_b(R_j)$ and $\psi_d^2:=\|\bar{\xb}\|_2^2=d^{-1}\sum_{j=1}^d Q_b(R_j)^2$.

The dequantized output of $\edbsm$ uses the reconstruction-optimal scalar
$\alpha^\star=\rho_d/\psi_d^2$. Therefore, for each realization of the random rotation,
\begin{equation}
\label{eq:eden-bsm-small-exact}
    \left\|\xb-\alpha^\star\bar{\xb}\right\|_2^2
    =\min_{\alpha\in\mathbb{R}}\|\xb-\alpha\bar{\xb}\|_2^2
    =1-\frac{\rho_d^2}{\psi_d^2}.
\end{equation}

For fixed $b$, the empirical averages in $\rho_d$ and $\psi_d^2$ are approximated in high dimension by their Gaussian counterparts:
\begin{equation*}
    \rho_d
    =\mathbb{E}[ZQ_b(Z)]+o_d(1),
    \qquad
    \psi_d^2
    =\mathbb{E}[Q_b(Z)^2]+o_d(1).
\end{equation*}
By the Lloyd--Max centroid condition,
\begin{equation*}
    \mathbb{E}[Z\mid Q_b(Z)]=Q_b(Z),
    \qquad
    \mathbb{E}[ZQ_b(Z)]=\mathbb{E}[Q_b(Z)^2]=:m_b.
\end{equation*}
Substituting these approximations into \eqref{eq:eden-bsm-small-exact} gives
\begin{equation*}
    \mathbb{E}_{R}\left[\left\|\xb-\alpha^\star\bar{\xb}\right\|_2^2\right]
    \approx
    1-\frac{m_b^2}{m_b}
    =1-m_b.
\end{equation*}
Equivalently, if $e_b:=\mathbb{E}\left[(Z-Q_b(Z))^2\right]$, then the centroid condition also gives $e_b=1-m_b$, and hence the high-dimensional approximate MSE of $\edbsm$ is $e_b$ (which is also the approximate MSE of $\tqmse$).
For $b=1$, $Q_1(Z)=\sqrt{2/\pi}\operatorname{sgn}(Z)$, so $m_1=2/\pi$ and $e_1=1-2/\pi\approx0.363380$.
For $b=2,3,4$, solving the Lloyd--Max equations gives the following scalar distortions.
\begin{center}
\begin{tabular}{rrrr}
\toprule
$b$ & $K=2^b$ & $m_b=\mathbb{E}[Q_b(Z)^2]$ & $e_b=1-m_b$ \\
\midrule
$1$ & $2$  & $0.6366197724$ & $0.3633802276$ \\
$2$ & $4$  & $0.8825181522$ & $0.1174818478$ \\
$3$ & $8$  & $0.9654522392$ & $0.0345477608$ \\
$4$ & $16$ & $0.9904989920$ & $0.0095010080$ \\
\bottomrule
\end{tabular}
\end{center}
Thus, in high dimensions,
\begin{equation*}
    \mse(\edbsm)
    \approx
    0.363,\;0.117,\;0.0345,\;0.0095
\end{equation*}
for $b=1,~2,~3,~4$, respectively.
\end{proof}

\subsubsection{Proof for Large Bit-Width}
\begin{proof}
Since $\edbsm$ chooses the best scalar multiple of the raw reconstruction $\bar{\xb}$, we have, for every realization of
the random rotation,
\begin{equation*}
    \| \xb-\widehat{\xb}_{\edbsm}\|_2^2=
\min_{\alpha\in\mathbb R}\|\xb-\alpha\bar{\xb}\|_2^2\le\|\xb-\bar{\xb}\|_2^2.
\end{equation*}

In the high-dimensional approximation, the scaled rotated
coordinates are asymptotically standard normal.
Therefore, when $d \to \infty$, 
\begin{equation*}
    \mse(\edbsm)\le\mathbb E[(Z-Q_b(Z))^2],
\end{equation*}
where $Z\sim N(0,1)$ and $Q_b$ is the $b$-bit Gaussian Lloyd--Max quantizer. By the
Panter--Dite high-rate formula (Appendix~\ref{sec:high-rate}),
\begin{equation*}
   \mse(\edbsm)\le\mathbb E[(Z-Q_b(Z))^2]
\le
\frac{\pi\sqrt3}{2}4^{-b}(1+o_b(1)),
\end{equation*}
Hence, in high-demensional, $\edbsm$ and $\tqmse$ has the same leading constant. 
 
We note that if high-rate analysis based on the exact spherical marginal density
\begin{equation*}
    f_{1,d}(s)=\frac{\Gamma(d/2)}{\sqrt{\pi}\,\Gamma((d-1)/2)}(1-s^2)^{(d-3)/2},
    \qquad -1\le s\le1,
\end{equation*}
is applied for $\edbsm$, the coefficient of $4^{-b}$ would be $\frac{d+3}{d+6}\frac{\pi d}{12}
    \frac{\Gamma(d/2)}{\Gamma((d-1)/2)}
    \left[\frac{\Gamma((d+3)/6)}{\Gamma((d+6)/6)}
    \right]^3$, which converges to $\pi\sqrt3/2$ as $d\to\infty$.
\end{proof}

\subsection{Proof of Proposition~\ref{prop:rbq_mse}}\label{subsec:rbq_mse}
\begin{proof}
Let $\bar{\xb}$ be the selected $\rbq$ codeword with $\|\xb\|_2=1$ and set $\rho:=\langle\xb,\bar{\xb}\rangle$. The MSE-best scalar reconstruction along $\bar{\xb}$ is $\hat{\xb}:=\rho\bar{\xb}$, and since $\|\xb\|_2=\|\bar{\xb}\|_2=1$,
\begin{equation}\label{eq:rbq_mse_est}
    \|\xb-\hat{\xb}\|_2^2=\|\xb-\rho\bar{\xb}\|_2^2=1-\rho^2.
\end{equation}
Let $\mathcal{G}_b:=\{-(2^b-1)/2+u:u=0,1,\ldots,2^b-1\}^d$. The unrotated $\rbq$ codebook is $\mathcal{C}_{\mathrm{RabitQ}}=\{\mathbf g/\|\mathbf g\|_2:\mathbf g\in\mathcal{G}_b\}$, and the algorithm selects the codeword closest to $\zb$. Therefore the squared angular error is
\begin{equation*}
    1-\rho^2=\min_{t>0,\,\mathbf g\in\mathcal{G}_b}\|\zb-t\mathbf g\|_2^2.
\end{equation*}
For $\alpha=t\sqrt d$, this objective becomes
\begin{equation*}
    \|\zb-t\mathbf g\|_2^2=\sum_{j=1}^d(z_j-tg_j)^2=\frac1d\sum_{j=1}^d(R_j-\alpha g_j)^2.
\end{equation*}
For fixed $\alpha$, the minimizing $g_j$ is the nearest grid point to $R_j/\alpha$, namely $g_j=Q_b(R_j/\alpha)$. Combining this with \eqref{eq:rbq_mse_est} gives
\begin{equation}\label{eq:rbq_mse}
    \mse(\rbqbsm)=\mathbb{E}[1-\rho^2]
    =\mathbb{E}\left[\min_{\alpha>0}\frac1d\sum_{j=1}^d\left(R_j-\alpha Q_b(R_j/\alpha)\right)^2\right].
\end{equation}

For large $d$, the rescaled coordinates $R_j=\sqrt d\,z_j$ are approximately standard normal in the marginal sense, and empirical averages concentrate. Thus, for $Z\sim N(0,1)$, the right-hand side of \eqref{eq:rbq_mse} is approximated by $\min_{\alpha>0}\phi(\alpha)$, where $\phi(\alpha):=\mathbb{E}[(Z-\alpha Q_b(Z/\alpha))^2]$. If $\zeta$ denotes the standard normal density, symmetry gives
\begin{equation}\label{eq:eb_integral_general_new_notation}
    \phi(\alpha)
    =2\sum_{k=0}^{2^{b-1}-2}\int_{k\alpha}^{(k+1)\alpha}\left(r-\alpha\left(k+\frac12\right)\right)^2\zeta(r)\,dr
    +2\int_{(2^{b-1}-1)\alpha}^{\infty}\left(r-\alpha\left(2^{b-1}-\frac12\right)\right)^2\zeta(r)\,dr.
\end{equation}
For $b=1$, $Q_1(Z/\alpha)=\frac12\operatorname{sgn}(Z)$, so $\phi(\alpha)=1-\alpha\sqrt{2/\pi}+\alpha^2/4$ and the minimum is $1-2/\pi\approx0.363380$ at $\alpha=2\sqrt{2/\pi}$. For $b=2,3,4$, numerical minimization of \eqref{eq:eb_integral_general_new_notation} gives the following values.
\begin{center}
\begin{tabular}{ccc}
\toprule
$b$ & minimizing $\alpha$ & $\min_{\alpha>0}\phi(\alpha)$ \\
\midrule
$1$ & $1.5957691216$ & $0.3633802276$ \\
$2$ & $0.9956867007$ & $0.1188460504$ \\
$3$ & $0.5860194285$ & $0.0374396594$ \\
$4$ & $0.3352006088$ & $0.0115428844$ \\
\bottomrule
\end{tabular}
\end{center}
These are the stated Gaussian-approximation constants.
\end{proof}

\subsection{Approximated MSE of $\tq$}\label{subsec:tq_mse}
Let $\mathbf{z}=R\mathbf{x}$ be the randomly
rotated input vector and define the rescaled coordinate
$R_j:=\sqrt d\,z_j$. TurboQuant applies coordinate-wise scalar
quantization, where the $2^b$ centroids are chosen to minimize the
one-dimensional MSE distortion. In the high-dimensional regime, each
$R_j$ is well approximated by $Z\sim N(0,1)$. Hence the $b$-bit
scalar codebook can be computed from the Lloyd--Max problem
\begin{equation*}
    \mathcal A_b^{\rm TQ}:=\arg\min_{\substack{\mathcal A\subset\mathbb R\\|\mathcal A|=2^b}}\mathbb E_{Z\sim N(0,1)}\left[\min_{a\in\mathcal A}(Z-a)^2\right]
\end{equation*}
which is identical to the original codebook of $\ed$~\citep{vargaftik2022eden}.
If $q_b$ denotes the nearest-centroid map associated with
$\mathcal A_b^{\rm TQ}$, then the dequantized vector is
\begin{equation*}
    \bar{\mathbf{x}}\approx R^\top \frac{1}{\sqrt d} \bigl(q_b(R_1),\ldots,q_b(R_d)\bigr),
\end{equation*}
and concentration of empirical averages gives
\begin{align*}
     \mse(\tqmse)=\mathbb E\|\mathbf{x}-\bar{\mathbf{x}}\|_2^2
    \approx \mathbb E_{Z\sim N(0,1)} \left[(Z-q_b(Z))^2\right]
\end{align*}
Solving this scalar Lloyd--Max problem gives
\begin{equation*}
    \mse(\tqmse) \approx 0.363380,\;0.117482,\;0.034548,\;0.009501,
\end{equation*}
for $b=1,~2,~3,~4$, respectively.

\section{Inner Product Distortion Analysis}\label{sec:IP}

\subsection{Proof of Theorem~\ref{thm:ratio_est}}\label{subsec:ratio_est}
\begin{proof}

We first record the isotropy supplied by the random rotation. Fix $\xb$ with $\|\xb\|_2=1$, and consider any orthogonal matrix $U$ satisfying $U\xb=\xb$, i.e., any rotation in the stabilizer of $\xb$. By Haar invariance, $RU^\top\stackrel{d}{=}R$. Moreover, since $U^\top\xb=\xb$, the rotated input is unchanged, $(RU^\top)\xb=R\xb$, while the final inverse rotation gives $\bar\xb_{RU^\top}=U\bar\xb_R$. Consequently, $\rho:=\langle\bar\xb,\xb\rangle$ is unchanged, since $\langle U\bar\xb_R,\xb\rangle=\langle\bar\xb_R,U^\top\xb\rangle=\langle\bar\xb_R,\xb\rangle$, and $\psi:=\|\bar\xb\|_2$ is also unchanged. On the other hand, the residual component $\vb:=\bar\xb-\rho\xb$, which lies in $\xb^\perp$, is transformed as $\vb\mapsto U\vb$. Thus, after conditioning on the scalar quantities $(\rho,\psi)$, the only remaining randomness in $\vb$ is its direction inside $\xb^\perp$. Since the stabilizer of $\xb$ acts transitively on directions in $\xb^\perp$, the conditional distribution of $\vb$ is directionally isotropic in $\xb^\perp$.

Now decompose the reconstruction as $\bar\xb=\rho\xb+\vb$, where $\vb\perp\xb$ by the definition of $\rho=\langle\bar\xb,\xb\rangle$. Since $\|\xb\|_2=1$, this decomposition gives $\|\vb\|_2^2=\|\bar\xb\|_2^2-\rho^2=\psi^2-\rho^2$. Let $\phi:=\langle\xb,\yb\rangle$ and assume $\rho>0$. For the ratio estimator, we have $\widehat\phi_{\rm ratio}=\langle\bar\xb,\yb\rangle/\rho$, and hence $\widehat\phi_{\rm ratio}-\phi=\langle\bar\xb,\yb\rangle/\rho-\langle\xb,\yb\rangle=\langle\vb,\yb\rangle/\rho$. Writing $\yb_\perp:=\yb-\phi\xb$, we have $\yb_\perp\in\xb^\perp$ and $\|\yb_\perp\|_2^2=1-\phi^2$. Moreover, since $\vb\perp\xb$, $\langle\vb,\yb\rangle=\langle\vb,\yb_\perp\rangle$. By the conditional isotropy of $\vb$ in $\xb^\perp$, its conditional mean in every fixed direction of $\xb^\perp$ is zero, and thus $\mathbb{E}[\langle\vb,\yb_\perp\rangle\mid \rho,\psi]=0$. Therefore $\mathbb{E}[\widehat\phi_{\rm ratio}\mid \rho,\psi]=\phi$, so $\widehat\phi_{\rm ratio}$ is unbiased. 

The same conditional isotropy implies that the squared length $\|\vb\|_2^2=\psi^2-\rho^2$ is spread uniformly over the $d-1$ dimensions of $\xb^\perp$, giving $\mathbb{E}[\langle\vb,\yb_\perp\rangle^2\mid \rho,\psi]=(\psi^2-\rho^2)\|\yb_\perp\|_2^2/(d-1)$. Consequently,
\begin{equation*}
    \mathbb{E}\left[(\widehat\phi_{\rm ratio}-\phi)^2\right]
    =\frac{1-\phi^2}{d-1}\mathbb{E}\left[\frac{\psi^2-\rho^2}{\rho^2}\right].
\end{equation*}
\end{proof}


\subsection{Proof of Corollary~\ref{cor:ed_ip}}\label{subsec:ed_ip}

\subsubsection{Proof for Small $b=1,~2,~3,~4$}
\begin{proof}
Let $\zb=R\xb$, $R_j:=\sqrt d\,z_j$, and let $Q_b$ be the $2^b$-level Lloyd--Max scalar quantizer for $Z\sim N(0,1)$. As in the proof of Proposition~\ref{prop:ed_mse}, set $\bar\zb:=d^{-1/2}(Q_b(R_1),\ldots,Q_b(R_d))$ and $\bar\xb:=R^\top\bar\zb$. Then $\rho_d:=\langle\xb,\bar\xb\rangle=d^{-1}\sum_{j=1}^d R_jQ_b(R_j)$ and $\psi_d^2:=\|\bar\xb\|_2^2=d^{-1}\sum_{j=1}^d Q_b(R_j)^2$.

The $\edub$ estimator is $\widehat\eta_{\rm ratio}:=\langle\bar\xb,\yb\rangle/\rho_d$. Applying Theorem~\ref{thm:ratio_est} with $\rho=\rho_d$ and $\psi=\psi_d$, for $\eta=\langle\xb,\yb\rangle$ we get
\begin{equation*}
    \mathbb{E}\left[(\widehat\eta_{\rm ratio}-\eta)^2\right]
=\frac{1-\eta^2}{d-1}\mathbb{E}\left[\frac{\psi_d^2}{\rho_d^2}-1\right].
\end{equation*}
Taking the supremum over $\yb\in\sphere$ gives $\iprod(\edub)\le (d-1)^{-1}(\mathbb{E}[\psi_d^2/\rho_d^2]-1)$ ($=$ holds when $\eta=0$). The concentration argument used in Proposition~\ref{prop:ed_mse} gives $\mathbb{E}[\psi_d^2/\rho_d^2]\le m_b^{-1}+O(\sqrt{\log d/d})$, where $m_b:=\mathbb{E}[Q_b(Z)^2]$. Hence
\begin{equation*}
    \iprod(\edub)
\le \frac{1}{d-1}\left(\frac1{m_b}-1+O\!\left(\sqrt{\frac{\log d}{d}}\right)\right).
\end{equation*}
For Lloyd--Max centroids, $\mathbb{E}[Z\mid Q_b(Z)]=Q_b(Z)$, so $\mathbb{E}[ZQ_b(Z)]=m_b$. Thus, with $e_b:=\mathbb{E}[(Z-Q_b(Z))^2]$, we have $e_b=1-m_b$ and $m_b^{-1}-1=e_b/(1-e_b)=:B_b^{\mathrm{EDEN}}$. From the proof of Proposition~\ref{prop:ed_mse}, the constants are
\begin{center}
\begin{tabular}{rrrr}
\toprule
$b$ & $e_b$ & $m_b=1-e_b$ & $B_b^{\mathrm{EDEN}}=e_b/(1-e_b)$ \\
\midrule
$1$ & $0.3633802276$ & $0.6366197724$ & $0.5707963268$ \\
$2$ & $0.1174818478$ & $0.8825181522$ & $0.1331211687$ \\
$3$ & $0.0345477608$ & $0.9654522392$ & $0.0357840185$ \\
$4$ & $0.0095010080$ & $0.9904989920$ & $0.0095921430$ \\
\bottomrule
\end{tabular}
\end{center}
This proves the stated small-bit inner-product bounds for $\edub$.
\end{proof}

\subsubsection{Proof for Large Bit-Width}
\begin{proof}
Let $\bar\xb$ be the unscaled coordinate-wise high-rate reconstruction used in the proof of Proposition~\ref{prop:ed_mse}. The $\edub$ estimator is $\widehat\eta_{\edub}:=\langle\bar\xb,\yb\rangle/\langle\bar\xb,\xb\rangle$. Write $\eb:=\bar\xb-\xb$, $t:=\langle\xb,\mathbf e\rangle$, and $\ub:=\eb-t\xb$. Then $\ub\perp\xb$, $\bar\xb=(1+t)\xb+\ub$, $\rho_d:=\langle\bar\xb,\xb\rangle=1+t$, and $\psi_d^2-\rho_d^2=\|\ub\|_2^2$.

By Theorem~\ref{thm:ratio_est}, after taking the supremum over $\yb\in\sphere$,
\begin{equation*}
    \iprod(\edub)
=\frac{1}{d-1}\mathbb{E}\left[\frac{\psi_d^2-\rho_d^2}{\rho_d^2}\right]
=\frac{1}{d-1}\mathbb{E}\left[\frac{\|\ub\|_2^2}{(1+t)^2}\right].
\end{equation*}
Thus the ratio correction removes the radial component from the numerator exactly, so the leading contribution comes only from $\|\ub\|_2^2$. In the high-rate regime, $\mathbb{E}\|\eb\|_2^2=O(4^{-b})$ and $\mathbb{E}\|\eb\|_2^4=O(4^{-2b})$. Since $|t|\le\|\eb\|_2$ and $\|\ub\|_2\le\|\eb\|_2$, H\"older's inequality gives $\mathbb{E}\|\ub\|_2^2|t|\le\mathbb{E}\|\eb\|_2^3\le(\mathbb{E}\|\eb\|_2^4)^{3/4}=o(4^{-b})$. Hence the denominator $(1+t)^2$ only changes the expression by a lower-order term, and therefore $\mathbb{E}[\|\ub\|_2^2/(1+t)^2]=\mathbb{E}\|\ub\|_2^2+o(4^{-b})$.

It remains to recall the tangential energy from the high-rate MSE proof. Let $f_{1,d}$ be the one-dimensional marginal density of a coordinate of a uniform vector on $\sphere$, and define $L_d:=\int_{-1}^1 f_{1,d}(s)^{1/3}\,ds$ and $M_d:=\int_{-1}^1s^2f_{1,d}(s)^{1/3}\,ds$. The Panter--Dite formula gives the leading coordinate-wise high-rate distortion, and the radial calculation applies the same local error expansion to the component along $\xb$. Thus,
\begin{equation*}
    \mathbb{E}[\|\eb\|_2^2]=\frac{d}{12}L_d^3 4^{-b}(1+o(1)),\qquad
\mathbb{E}[t^2]=\frac{d}{12}L_d^2M_d 4^{-b}(1+o(1)).
\end{equation*}
Since $\eb=t\xb+\ub$ with $\ub\perp\xb$, the tangential energy is obtained by subtracting the radial energy:
$\mathbb{E}[\|\ub\|_2^2]=\mathbb{E}[\|\eb\|_2^2]-\mathbb{E}[t^2]=\frac{d}{12}L_d^2(L_d-M_d)4^{-b}(1+o(1))$. Also, because $f_{1,d}(s)^{1/3}$ is proportional to $(1-s^2)^{(d-3)/6}$,
\begin{equation*}
    \frac{M_d}{L_d}
=\frac{\int_{-1}^1s^2(1-s^2)^{(d-3)/6}\,ds}{\int_{-1}^1(1-s^2)^{(d-3)/6}\,ds}
=\frac{\mathrm{B}(3/2,(d+3)/6)}{\mathrm{B}(1/2,(d+3)/6)}
=\frac{3}{d+6}.
\end{equation*}
Thus $\mathbb{E}[\|\ub\|_2^2]=\frac{d+3}{d+6}\frac{d}{12}L_d^3 4^{-b}(1+o(1))$. Using $L_d^3=\pi\frac{\Gamma(d/2)}{\Gamma((d-1)/2)}\left[\frac{\Gamma((d+3)/6)}{\Gamma((d+6)/6)}\right]^3$, we obtain
\begin{equation*}
    \iprod(\edub)\le
\frac{1}{d-1}\frac{d+3}{d+6}\frac{\pi d}{12}
\frac{\Gamma(d/2)}{\Gamma((d-1)/2)}
\left[\frac{\Gamma((d+3)/6)}{\Gamma((d+6)/6)}\right]^3
4^{-b}(1+o(1)).
\end{equation*}

Finally, $\Gamma(x+a)/\Gamma(x+b)\sim x^{a-b}$ implies that the coefficient multiplying $4^{-b}/(d-1)$ converges to $\pi\sqrt3/2\approx2.721$. This proves the high-rate inner-product bound for $\edub$.
\end{proof}


\subsection{Proof of Corollary~\ref{cor:rbq_ip}}\label{subsec:rbq_ip}
\begin{proof}
The factor inside the expectation in Theorem~\ref{thm:ratio_est} is nonnegative and independent of $\yb$, so the worst case over $\yb\in\sphere$ is attained when $\phi=0$. For a fixed $\xb$,
\begin{equation*}
    \iprod(\rbq)=\frac{1}{d-1}\mathbb{E}_Q\left[
\frac{\|\bar\xb\|_2^2-\langle\bar\xb,\xb\rangle^2}{\langle\bar\xb,\xb\rangle^2}
\right],
\end{equation*}
and rotation invariance makes this quantity independent of the particular $\xb\in\sphere$.

The codewords of $\rbq$ are normalized, so $\|\bar\xb\|_2=1$. Let $\rho:=\langle\bar\xb,\xb\rangle$ and $\Delta:=1-\rho^2$. As in the proof of Proposition~\ref{prop:rbq_mse}, $\Delta=\|\xb-\rho\bar\xb\|_2^2$ is the reconstruction error of $\rbqbsm$. Hence
\begin{equation}\label{eq:rbq_ip}
\iprod(\rbq)=\frac{1}{d-1}\mathbb{E}_Q\left[\frac{\Delta}{1-\Delta}\right].
\end{equation}
Let $\zb=R\xb$ and $R_j:=\sqrt d\,z_j$. For $Q_b(u):=\operatorname{sign}(u)\min(\lfloor |u|\rfloor+1/2,2^{b-1}-1/2)$, the same MSE proof gives
\begin{equation}\label{eq:delta}
\Delta=\min_{\alpha>0}\frac1d\sum_{j=1}^d\left(R_j-\alpha Q_b(R_j/\alpha)\right)^2.
\end{equation}

We now use the same high-dimensional Gaussian approximation as in the proof of Proposition~\ref{prop:rbq_mse}. Let $Z\sim N(0,1)$ with density $\zeta$. For fixed $b$, the Gaussian limit of the empirical objective in \eqref{eq:delta} is
\begin{align}
\phi(\alpha)
:={}&\mathbb{E}\left[(Z-\alpha Q_b(Z/\alpha))^2\right] \notag \\
={}&2\sum_{k=0}^{2^{b-1}-2}\int_{k\alpha}^{(k+1)\alpha}\left(r-\alpha\left(k+\frac12\right)\right)^2\zeta(r)\,dr
+2\int_{(2^{b-1}-1)\alpha}^{\infty}\left(r-\alpha\left(2^{b-1}-\frac12\right)\right)^2\zeta(r)\,dr,
\label{eq:phi-alpha}
\end{align}
where the summation term is empty when $b=1$. Define $\Delta_z:=\min_{\alpha>0}\phi(\alpha)$ and $\kappa_b^{\mathrm{RQ}}:=\Delta_z/(1-\Delta_z)$. By concentration of the empirical averages, for fixed $b$ and $d\to\infty$, $\Delta=\Delta_z+o_{\mathbb{P}}(1)$. Since $\Delta_z<1$ for the bit-widths considered here and the denominator stays bounded away from zero with high probability, $\mathbb{E}_Q[\Delta/(1-\Delta)]=\kappa_b^{\mathrm{RQ}}+o(1)$. Equation~\eqref{eq:rbq_ip} therefore gives
\begin{equation*}
  \iprod(\rbq)=\frac{\kappa_b^{\mathrm{RQ}}+o(1)}{d-1}.
  \end{equation*}
The minimization in \eqref{eq:phi-alpha}, computed in Proposition~\ref{prop:rbq_mse}, gives
\begin{center}
\begin{tabular}{c|c|c|c}
$b$ & Optimal value of $\alpha$ & $\Delta_z$ & $\kappa_b^{\mathrm{RQ}}=\Delta_z/(1-\Delta_z)$ \\
\hline
$1$ & $1.595769$ & $0.363380$ & $0.570796$ \\
$2$ & $0.995687$ & $0.118846$ & $0.134875$ \\
$3$ & $0.586019$ & $0.037440$ & $0.038896$ \\
$4$ & $0.335201$ & $0.011543$ & $0.011678$
\end{tabular}
\end{center}
Thus, for $b=1,~2,~3,~4$, the leading constants are $0.570796/(d-1)$, $0.134875/(d-1)$, $0.038896/(d-1)$, and $0.011678/(d-1)$, respectively.
\end{proof}


\section{Bit Complexity Analysis (Proof of Theorem~\ref{thm:ed_tq_bc})}\label{sec:bc}

We prove the bit-complexity guarantees for $\edub$ and $\tqprod$ separately in Sections~\ref{subsec:ed_bc} and~\ref{subsec:tq_bc}. Each subsection contains the corresponding formal statement, namely Theorems~\ref{thm:ed_bc} and~\ref{thm:tq_bc}, the high-probability residual bound, and the proof.

\subsection{Bit Complexity of $\ed$}\label{subsec:ed_bc}
\begin{theorem}[Bit complexity of $\edub$ (formal)]
\label{thm:ed_bc}
Let $\xb,\yb\in\sphere$ and let $0<\epsilon,\delta<1$. There exist universal constants $C,c>0$ such that, if $\log(4/\delta)\le cd$, then for every bit-width $b\ge1$,
\begin{equation*}
    \mathbb{P}_{Q}\left[\left|{\rm IP}_{\rm EDEN}^{(b)}(\xb,\yb)-\langle\xb,\yb\rangle\right|>
    C\left(2^{-b}\sqrt{\frac{\log(4/\delta)}{d}}+\frac{\log(4/\delta)}{d}
    \right)\right]
    \le \delta .
\end{equation*}
Consequently, whenever $\epsilon\ge C\log(4/\delta)/d$, to ensure
\begin{equation*}
    \mathbb{P}_{Q}\left[\left|{\rm IP}_{\rm EDEN}^{(b)}(\xb,\yb)-\langle\xb,\yb\rangle\right|>\epsilon\right]
    \le \delta,
\end{equation*}
it is sufficient to take
\begin{equation*}
    b\ge\max\left\{1,\left\lceil\frac{1}{2}\log_2\left( \frac{C\log(4/\delta)}{d\epsilon^2}\right)\right\rceil\right\}
\end{equation*}
bits per dimension.
\end{theorem}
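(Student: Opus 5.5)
The plan follows the decomposition used in the proof of Theorem~\ref{thm:ratio_est}. Write $\bar\xb=\rho\xb+\vb$ with $\vb\perp\xb$, $\rho=\langle\bar\xb,\xb\rangle$ and $\psi^2=\|\bar\xb\|_2^2$. The $\edub$ error is then exactly
\begin{equation*}
{\rm IP}_{\rm EDEN}^{(b)}(\xb,\yb)-\langle\xb,\yb\rangle=\frac{\langle\vb,\yb_\perp\rangle}{\rho},
\end{equation*}
with $\yb_\perp=\yb-\langle\xb,\yb\rangle\xb$ and $\|\yb_\perp\|_2\le1$. We bound the denominator from below, the norm $\|\vb\|_2$ from above, and then the projection of the isotropic direction of $\vb$ onto the fixed vector $\yb_\perp$. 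Each of the three events is given failure probability $\delta/4$ (hence the $\log(4/\delta)$), with a union bound at the end.

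\textbf{Step 1 (scalar concentration).} We have $\rho=d^{-1}\sum_j R_jQ_b(R_j)$ and $\psi^2=d^{-1}\sum_j Q_b(R_j)^2$, where $(R_1,\dots,R_d)=\sqrt d\,R\xb$ is uniform on the sphere of radius $\sqrt d$. Represent it as $\sqrt d\,\gb/\|\gb\|_2$ with $\gb\sim N(0,I_d)$. The functions $uQ_b(u)$ and $Q_b(u)^2$ are dominated by $u^2$ plus a constant (Lloyd--Max centroids satisfy $|Q_b(u)|\le |u|+O(1)$), so the summands are sub-exponential. Bernstein's inequality and concentration of $\|\gb\|_2^2/d$ then give, with probability $1-\delta/4$,
\begin{equation*}
|\rho-m_b|\lesssim\sqrt{\log(4/\delta)/d},
\end{equation*}
and similarly for $\psi^2$. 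Since $m_b\ge 2/\pi$ and $\log(4/\delta)\le cd$, this yields $\rho\ge 1/2$.

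\textbf{Step 2 (residual norm).} We need
\begin{equation*}
\|\vb\|_2^2=\psi^2-\rho^2\lesssim 4^{-b}+\frac{\log(4/\delta)}{d}
\end{equation*}
with probability $1-\delta/4$. I would write $\psi^2-\rho^2\le\|\bar\xb-\xb\|_2^2=d^{-1}\sum_j(R_j-Q_b(R_j))^2$. Its mean is $e_b\lesssim4^{-b}$ by Panter--Dite, as in Proposition~\ref{prop:ed_mse}, with a uniform-in-$b$ constant. Each summand is bounded by $O(4^{-b})$ on the granular region and by $(|u|-q_{\max})^2$ in the overload tail. A Bernstein bound for these sub-exponential summands, with $\psi_1$-norm $\lesssim 4^{-b}+$ (tail contribution), gives fluctuations
\begin{equation*}
\lesssim 4^{-b}\sqrt{\log(4/\delta)/d}+\log(4/\delta)/d.
\end{equation*}
Keeping the $\psi_1$-norm uniformly $O(4^{-b})$ is the main obstacle, and it needs care. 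I would try first to use that the Gaussian Lloyd--Max overload point grows like $\sqrt{2b\ln 4}$, so the tail mass is $O(4^{-b})$, and fall back to the crude $\log(4/\delta)/d$ term to absorb whatever the tail contributes.

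\textbf{Step 3 (direction).} Conditional on $(\rho,\psi)$, the isotropy argument of Theorem~\ref{thm:ratio_est} shows that $\vb/\|\vb\|_2$ is uniform on the unit sphere of $\xb^\perp\cong\mathbb{S}^{d-2}$. Standard spherical concentration then gives, with probability $1-\delta/4$,
\begin{equation*}
|\langle\vb,\yb_\perp\rangle|\le\|\vb\|_2\sqrt{C\log(4/\delta)/d}.
\end{equation*}
Combining the three steps, the error is at most
\begin{equation*}
2\sqrt{C\log(4/\delta)/d}\,\Big(2^{-b}+\sqrt{\log(4/\delta)/d}\Big),
\end{equation*}
which is the claimed bound. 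For the consequence, take $\epsilon\ge C\log(4/\delta)/d$ so that the second term is at most $\epsilon/2$, and choose $b$ with
\begin{equation*}
C2^{-b}\sqrt{\log(4/\delta)/d}\le\epsilon/2,
\end{equation*}
i.e.\ $b\ge\tfrac12\log_2\bigl(C'\log(4/\delta)/(d\epsilon^2)\bigr)$, after renaming constants.
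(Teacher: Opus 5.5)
Your skeleton matches the paper's proof: the exact error $\langle\vb,\yb_\perp\rangle/\rho$, conditional isotropy of $\vb$ in $\xb^\perp$, spherical concentration for the direction, a union bound, and the same choice of $b$. The gap is Step 2. You flag it as the main obstacle, and it is where the theorem's rate is decided, but the mechanism you propose for it does not work.

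First, the premise that each summand $Y_j=(R_j-Q_b(R_j))^2$ is $O(4^{-b})$ on the granular region is false. The outermost granular cells of the Gaussian Lloyd--Max quantizer shrink only like $1/\sqrt{b}$. Already for $b=4$, the cell $[1.844,2.401]$ with level $2.069$ produces squared errors up to about $0.11\approx 28\cdot 4^{-4}$.

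Second, the overload part rules out any uniform $O(4^{-b})$ sub-exponential norm. For $g\sim N(0,1)$, the overload error $(|g|-q_{\max})^2$ grows like $g^2$, so $\mathbb{E}e^{Y/t}=\infty$ for every $t<2$. Hence $\|Y_j\|_{\psi_1}\ge 2$ for every $b$. The small overload mass you point to does not help, because the $\psi_1$-norm is governed by the tail's growth rate, not its mass.

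With a $\psi_1$-only Bernstein bound you therefore get fluctuations of order $\sqrt{L/d}$ in $\|\vb\|_2^2$ (with $L=\log(4/\delta)$). That means $(L/d)^{1/4}$ in $\|\vb\|_2$, and a final error of order $2^{-b}\sqrt{L/d}+(L/d)^{3/4}$, which is weaker than the statement. Your fallback could be made to work, but only with four ingredients the proposal does not supply:
\begin{itemize}
\item the variance form of Bernstein;
\item a separate proof that $\mathrm{Var}(Y_j)\lesssim 4^{-b}$ uniformly in $b$;
\item the step $2^{-b}\sqrt{L/d}\le 4^{-b}+L/d$;
\item an argument decoupling the normalization $\|\gb\|_2$ from the discontinuous map $Q_b$. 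The same coupling issue also affects your Step 1 for $\rho$.
\end{itemize}

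The missing idea is to concentrate the unsquared error as a single function rather than its coordinate sum. By orthogonality of $R$, $\|\xb-\bar\xb\|_2=\|\zb-\bar\zb\|_2=\operatorname{dist}(\zb,\mathcal{C}^{(b)}_{\rm EDEN})$. This is the distance from $\zb\in\sphere$ to a fixed finite set, so it is $1$-Lipschitz. L\'evy's inequality then gives fluctuations $O(\sqrt{L/d})$, with no dependence on $b$ or on the tails of $Q_b$. Jensen bounds the mean by $(\mathbb{E}\|\xb-\bar\xb\|_2^2)^{1/2}\le\sqrt{C_{\rm ed}}\,2^{-b}$. This is the paper's Lemma~\ref{lem:ed_hp}, and it gives $\|\vb\|_2\le\|\xb-\bar\xb\|_2\le\sqrt{C_{\rm ed}}\,2^{-b}+C_{\rm levy}\sqrt{L/d}$ directly.

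The same event also replaces your Step 1. The identity $\|\bar\xb-\xb\|_2^2=\|\bar\xb\|_2^2+1-2\rho$ gives $\rho\ge(1-\|\bar\xb-\xb\|_2^2)/2$. This is bounded below by a constant on that event once $C_{\rm ed}<4$ and $c$ is small. So the paper needs only two events, each with failure probability $\delta/2$, and no per-coordinate tail analysis of the quantizer.
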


\subsubsection{High probability residual bound}
\begin{lemma}[High probability residual bound $\ed$]
\label{lem:ed_hp}
Assume that the unscaled $b$-bit $\ed$ reconstruction satisfies
\begin{equation*}
    \max_{\xb\in\sphere}
    \mathbb{E}_{Q}\left[\|\xb-\bar\xb\|_2^2\right]
    \le C_{\rm ed}4^{-b}.
\end{equation*}
Let $\xb,\yb\in\sphere$ be fixed, and let $\bar\xb$ be the unscaled dequantized codeword produced by the $b$-bit $\ed$ quantizer. Then, for every $0<\delta<1$,
\begin{equation}
    \mathbb{P}_{Q}\left[\|\xb-\bar\xb\|_2>\sqrt{C_{\rm ed}}\,2^{-b}
    +C_{\rm levy}\sqrt{\frac{\log(4/\delta)}{d}}\right]
    \le \frac{\delta}{2},
    \label{eq:ed_residual_hp}
\end{equation}
where $C_{\rm levy}>0$ is the universal constant from L\'evy's concentration.
\end{lemma}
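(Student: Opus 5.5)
We prove the lemma by splitting $\|\xb-\bar\xb\|_2$ into its mean plus a fluctuation, and controlling the fluctuation with L\'evy's concentration on the orthogonal group, or equivalently on the sphere.

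Since $R$ is Haar, $\zb=R\xb\sim\mathrm{Unif}(\sphere)$. Because $\bar\xb=R^\top\bar\zb$ with $\bar\zb$ a deterministic function of $\zb$, we have
$$\|\xb-\bar\xb\|_2=\|\zb-\bar\zb(\zb)\|_2=:F(\zb).$$
\textbf{Mean.} By Jensen and the hypothesis,
$$\mathbb{E}F\le\sqrt{\mathbb{E}F^2}\le\sqrt{C_{\rm ed}}\,2^{-b}.$$

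\textbf{Lipschitz property.} It then suffices to show that $F$ is $1$-Lipschitz on $\sphere$, or at least $L$-Lipschitz with $L$ a universal constant that can be absorbed into $C_{\rm levy}$. The key observation is that $\bar\zb(\zb)$ is the nearest point to $\zb$ in the product codebook $\mathcal{C}:=(d^{-1/2}\mathcal{C}_{\mathrm{EDEN}})^d$. Coordinatewise nearest-centroid assignment for a scalar codebook is exact nearest-neighbour assignment in the Cartesian product, because the squared Euclidean distance separates over coordinates. Hence
$$F(\zb)=\operatorname{dist}(\zb,\mathcal{C}),$$
and the distance function to any set is $1$-Lipschitz in the Euclidean metric. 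The geodesic metric dominates the Euclidean one, so $F$ is $1$-Lipschitz on the sphere as well.

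\textbf{Concentration.} L\'evy's concentration inequality for $1$-Lipschitz functions on $\sphere$ gives
$$\mathbb{P}\bigl[F-\mathbb{E}F>t\bigr]\le 2\exp(-c\,d\,t^2),$$
possibly stated around the median, with the mean/median discrepancy of order $1/\sqrt d$ absorbed into the constant. Choosing $t=C_{\rm levy}\sqrt{\log(4/\delta)/d}$ makes the right-hand side at most $\delta/2$. Combining this with the bound on the mean yields \eqref{eq:ed_residual_hp}.

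\textbf{Main obstacle.} The only delicate step is making sure the reconstruction really is the nearest point of a fixed set, so that the distance-function argument applies. The quantization must use the unscaled $\bar\xb$, which is why the lemma is stated for the unscaled codeword; the ratio rescaling $1/\rho$ would break Lipschitzness. The codebook must also be fixed and independent of $\zb$. If one works with the Gaussian-scaled codebook $\sqrt d\,\zb$, the same argument still applies, since the scaling is a fixed constant.

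Ties on Voronoi boundaries occur on a measure-zero set and do not affect $F$, which is continuous. Finally, the version of L\'evy's inequality with the mean rather than the median is standard, so I expect no further difficulty.
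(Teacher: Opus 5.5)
Your proposal is correct and matches the paper's proof. Both arguments write $\|\xb-\bar\xb\|_2=\operatorname{dist}(R\xb,\mathcal{C})$ for the fixed $d^{-1/2}$-scaled product codebook, use that this distance function is $1$-Lipschitz so that L\'evy's concentration controls its upward deviation from the mean at scale $\sqrt{\log(4/\delta)/d}$, and bound the mean by Jensen's inequality together with the assumed MSE bound. Your explicit remark that coordinatewise nearest-centroid assignment coincides with nearest-neighbour search in the Cartesian product is a step the paper uses only implicitly, and it is exactly what makes the distance-function argument legitimate.
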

\begin{proof}
Fix $b,d$ and $\xb\in\sphere$. Let $R$ be the random rotation, write $\zb=R\xb$, and let $\mathcal{C}_{\rm EDEN}^{(b)}$ be the fixed rotated-coordinate product codebook after the $1/\sqrt d$ scaling. If $\bar\zb\in\arg\min_{\ob_i\in\mathcal{C}_{\rm EDEN}^{(b)}}\|\zb-\ob_i\|_2$ and $\bar\xb=R^\top\bar\zb$, then $\|\xb-\bar\xb\|_2=\|\zb-\bar\zb\|_2$ by orthogonality of $R$. Define $f(\zb):=\operatorname{dist}(\zb,\mathcal{C}_{\rm EDEN}^{(b)})=\min_{\ob_i\in\mathcal{C}_{\rm EDEN}^{(b)}}\|\zb-\ob_i\|_2$, so $f(\zb)=\|\xb-\bar\xb\|_2$.

The distance-to-a-set map is $1$-Lipschitz: for any $\zb,\zb'\in\sphere$, $f(\zb)-f(\zb')\le\|\zb-\ob_{i^\star(\zb')}\|_2-\|\zb'-\ob_{i^\star(\zb')}\|_2\le\|\zb-\zb'\|_2$, and reversing $\zb,\zb'$ gives $|f(\zb)-f(\zb')|\le\|\zb-\zb'\|_2$. Set $L=\log(4/\delta)$. By L\'evy's concentration inequality, after increasing $C_{\rm levy}$ if necessary,
\begin{equation*}
    \mathbb{P}_{Q}\left[f(\zb)>\mathbb{E}_{Q}f(\zb)+C_{\rm levy}\sqrt{\frac{L}{d}}\right]
    \le \frac{\delta}{2}.
\end{equation*}
Moreover, Jensen's inequality and the assumed MSE bound give $\mathbb{E}_{Q}f(\zb)\le(\mathbb{E}_{Q}f(\zb)^2)^{1/2}=(\mathbb{E}_{Q}\|\xb-\bar\xb\|_2^2)^{1/2}\le\sqrt{C_{\rm ed}}\,2^{-b}$. Combining the two estimates proves \eqref{eq:ed_residual_hp}.
\end{proof}

\subsubsection{Proof of Theorem~\ref{thm:ed_bc}}
Fix $b\ge1$. Let $\bar\xb$ be the unscaled $b$-bit $\ed$ codeword, set $\rho=\langle\xb,\bar\xb\rangle$, and define $\widehat\xb_{\rm EDEN}^{(b)}=\bar\xb/\rho$. The $\edub$ estimator is ${\rm IP}_{\rm EDEN}^{(b)}(\xb,\yb):=\langle\widehat\xb_{\rm EDEN}^{(b)},\yb\rangle=\langle\bar\xb,\yb\rangle/\rho$. Let $\eta=\langle\xb,\yb\rangle$, $\vb=\bar\xb-\rho\xb$, and $\yb_\perp=\yb-\eta\xb$. Then $\vb\perp\xb$, $\yb_\perp\perp\xb$, and
\begin{equation*}
    {\rm IP}_{\rm EDEN}^{(b)}(\xb,\yb)-\langle\xb,\yb\rangle
    =\frac{\langle\vb,\yb\rangle}{\rho}
    =\frac{\langle\vb,\yb_\perp\rangle}{\rho}.
\end{equation*}

We next record the isotropy supplied by the Haar rotation. Write the EDEN output as $\bar\xb_R=R^\top Q_0(R\xb)$, where $Q_0$ is the fixed rotated-coordinate scalar quantize--dequantize map. If $U\xb=\xb$, then $RU^\top\stackrel{d}{=}R$ and $\bar\xb_{RU^\top}=(RU^\top)^\top Q_0(RU^\top\xb)=U\bar\xb_R$. Hence $\rho$ and $\|\vb\|_2$ are invariant, while $\vb$ is transformed into $U\vb$. Therefore, conditional on $\rho$ and $\|\vb\|_2$, the direction of $\vb$ is rotationally invariant in $\xb^\perp$.

Set $L=\log(4/\delta)$ and $\alpha_b:=\sqrt{C_{\rm ed}}\,2^{-b}+C_{\rm levy}\sqrt{L/d}$. By Lemma~\ref{lem:ed_hp}, with probability at least $1-\delta/2$, $\|\xb-\bar\xb\|_2\le\alpha_b$. The EDEN MSE bound from the previous section gives a universal constant $C_{\rm ed}<4$ for the unscaled reconstruction. Hence, by choosing the universal constant $c$ small enough, $L\le cd$ implies $\alpha_b\le\alpha_0$ for some fixed $\alpha_0<1$ and all $b\ge1$. On this residual event, $\|\vb\|_2\le\|\bar\xb-\xb\|_2\le\alpha_b$, and since $\|\bar\xb-\xb\|_2^2=\|\bar\xb\|_2^2+1-2\rho$ with $\|\bar\xb\|_2^2\ge0$, we also have $\rho\ge(1-\alpha_b^2)/2\ge(1-\alpha_0^2)/2=:c_\rho>0$.

Conditional on $\rho$ and $\|\vb\|_2$, write $\vb=\|\vb\|_2\boldsymbol{\theta}$ with $\boldsymbol{\theta}$ uniform on the unit sphere in $\xb^\perp$. For fixed $\yb_\perp$, L\'evy's concentration on this $(d-2)$-dimensional sphere gives universal constants $C_{\rm sph},c_{\rm sph}>0$ such that, whenever $L\le c_{\rm sph}d$,
\begin{equation*}
    \mathbb{P}_{Q}\left[
    |\langle\vb,\yb_\perp\rangle|>
    C_{\rm sph}\|\vb\|_2\|\yb_\perp\|_2\sqrt{\frac{L}{d}}
    \;\middle|\;\rho,\|\vb\|_2
    \right]
    \le \frac{\delta}{2}.
\end{equation*}
Since $\|\yb_\perp\|_2\le1$, a union bound yields, with probability at least $1-\delta$,
\begin{equation*}
    \left|{\rm IP}_{\rm EDEN}^{(b)}(\xb,\yb)-\langle\xb,\yb\rangle\right|
    \le \frac{C_{\rm sph}}{c_\rho}\alpha_b\sqrt{\frac{L}{d}}
    \le C\left(2^{-b}\sqrt{\frac{L}{d}}+\frac{L}{d}\right).
\end{equation*}
This proves the high-probability bound after taking $c\le c_{\rm sph}$ and replacing $L$ by $\log(4/\delta)$.

It remains to choose $b$ for target accuracy $\epsilon$. If $\epsilon\ge C_0L/d$ with $C_0\ge2C$, then $CL/d\le\epsilon/2$. If
\begin{equation*}
    b\ge
    \max\left\{
    1,
    \left\lceil
    \frac{1}{2}\log_2\left(
    \frac{C_1L}{d\epsilon^2}
    \right)
    \right\rceil
    \right\},
\end{equation*}
then, after increasing $C_1$ if necessary, $C2^{-b}\sqrt{L/d}\le\epsilon/2$: if $C_1L/(d\epsilon^2)>1$, this follows from the lower bound on $b$, while if $C_1L/(d\epsilon^2)\le1$, it follows from $\sqrt{L/d}\le\epsilon/\sqrt{C_1}$ and $2^{-b}\le1$. Thus the error is at most $\epsilon$ with probability at least $1-\delta$, and enlarging the theorem constant $C$ to dominate $C_0,C_1$ completes the proof.

\subsection{Bit Complexity of $\tqprod$}\label{subsec:tq_bc}
\begin{theorem}[Bit complexity of $\tqprod$ (formal)]
\label{thm:tq_bc}
Let $\xb,\yb\in\sphere$ and let $0<\epsilon,\delta<1$. There exist universal constants $C,c>0$ such that, if $\log(4/\delta)\le cd$, then for every bit-width $b\ge1$,
\begin{equation*}
    \mathbb{P}_{Q}\left[
    \left|{\rm IP}_{\rm TQ}^{(b)}(\xb,\yb)-\langle\xb,\yb\rangle\right|>
    C\left(
    2^{-(b-1)}\sqrt{\frac{\log(4/\delta)}{d}}
    +\frac{\log(4/\delta)}{d}
    \right)
    \right]
    \le \delta .
\end{equation*}
Consequently, whenever $\epsilon\ge C\log(4/\delta)/d$, to ensure
\begin{equation*}
    \mathbb{P}_{Q}\left[
    \left|{\rm IP}_{\rm TQ}^{(b)}(\xb,\yb)-\langle\xb,\yb\rangle\right|>
    \epsilon
    \right]
    \le \delta,
\end{equation*}
it is sufficient to take
\begin{equation*}
    b\ge
    1+
    \max\left\{
    0,
    \left\lceil
    \frac{1}{2}\log_2\left(
    \frac{C\log(4/\delta)}{d\epsilon^2}
    \right)
    \right\rceil
    \right\}
\end{equation*}
bits per dimension.
\end{theorem}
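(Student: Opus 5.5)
The plan follows the proof of Theorem~\ref{thm:ed_bc}. The $\tqprod$ estimator splits into a reconstruction part and a residual part, and each is controlled with high probability.

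\textbf{Decomposition.} Let $\bar\xb$ be the $(b-1)$-bit $\tqmse$ reconstruction and $\rb:=\xb-\bar\xb$ the residual. Let $\widehat{\langle\rb,\yb\rangle}$ be the one-bit $\qjl$ estimate of $\langle\rb,\yb\rangle$, of the form $\sqrt{\pi/2}\,\frac{\|\rb\|_2}{d}\langle S\yb,\operatorname{sgn}(S\rb)\rangle$ with a Gaussian matrix $S$ independent of $R$. Then
\[
{\rm IP}_{\rm TQ}^{(b)}(\xb,\yb)-\langle\xb,\yb\rangle=\widehat{\langle\rb,\yb\rangle}-\langle\rb,\yb\rangle .
\]
So the error is exactly the $\qjl$ error on the residual. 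There is no denominator $\rho$, unlike the $\ed$ case.

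\textbf{Step 1: the residual norm.} The analogue of Lemma~\ref{lem:ed_hp} for $\tqmse$ holds with the same proof. The map $\zb\mapsto\operatorname{dist}(\zb,\mathcal C_{\rm TQ}^{(b-1)})$ is $1$-Lipschitz on $\sphere$. Proposition~\ref{prop:tq_mse} gives $\mathbb E\|\rb\|_2^2\le C_{\rm tq}4^{-(b-1)}$. L\'evy concentration plus Jensen then give, with probability at least $1-\delta/2$,
\[
\|\rb\|_2\le\alpha_{b-1}:=\sqrt{C_{\rm tq}}\,2^{-(b-1)}+C_{\rm levy}\sqrt{L/d},\qquad L=\log(4/\delta).
\]

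\textbf{Step 2: the $\qjl$ estimate.} Condition on $R$, which fixes $\rb$. The estimate is an average of $d$ i.i.d.\ terms $\sqrt{\pi/2}\,\|\rb\|_2\langle \sbb_i,\yb\rangle\operatorname{sgn}\langle \sbb_i,\rb\rangle$. These terms are unbiased for $\langle\rb,\yb\rangle$. Each is $\|\rb\|_2$ times a sub-Gaussian variable with $O(1)$ parameter, since $\langle\sbb_i,\yb\rangle\sim N(0,1)$ and the sign is bounded. Bernstein/Hoeffding for sub-Gaussian averages then gives, with conditional probability at least $1-\delta/2$,
\[
|\widehat{\langle\rb,\yb\rangle}-\langle\rb,\yb\rangle|\le C_{\rm qjl}\|\rb\|_2\sqrt{L/d}.
\]
The assumption $L\le cd$ keeps us in the sub-Gaussian regime if the sub-exponential form is used.

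\textbf{Step 3: combine and choose $b$.} A union bound over the two events gives, with probability at least $1-\delta$, error at most
\[
C_{\rm qjl}\,\alpha_{b-1}\sqrt{L/d}\le C\bigl(2^{-(b-1)}\sqrt{L/d}+L/d\bigr).
\]
The bit selection is then the same two-case argument as in Theorem~\ref{thm:ed_bc}, with $b$ replaced by $b-1$, which accounts for the extra $+1$. If $\epsilon\ge 2CL/d$, the second term is at most $\epsilon/2$. The stated lower bound on $b$ makes the first term at most $\epsilon/2$, splitting on whether $CL/(d\epsilon^2)$ exceeds $1$.

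\textbf{Main obstacle.} The hardest step is the concentration of the $\qjl$ estimator. I need the exact form of the $\qjl$ estimator and a clean sub-Gaussian bound on its summands that is uniform in $\yb$ for fixed $\yb$. I also need independence of $S$ from $R$, so that conditioning on $\rb$ is legitimate. A secondary concern is the edge case $b=1$, where the $(b-1)$-bit $\tqmse$ is trivial: $\bar\xb=0$ and $\rb=\xb$. That case should be handled separately with $\|\rb\|_2=1$, which is consistent with the bound $2^{0}\sqrt{L/d}$.
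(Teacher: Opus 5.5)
Your proposal is correct and follows the paper's proof essentially step for step: the exact reduction of the error to the $\qjl$ error on the $(b-1)$-bit residual, the L\'evy-plus-Jensen bound on $\|\rb\|_2$ at failure level $\delta/2$, the conditional $\qjl$ concentration at level $\delta/2$, a union bound, and the same two-case choice of $b$. The paper cites Lemma~\ref{lem:qjl_hp} for the $\qjl$ step, whereas you derive it directly from a sub-Gaussian Hoeffding bound. In the bit-selection step, as in the paper, the constant in the bound on $b$ must be taken at least about $4$ times the square of the constant in the probability bound, and the theorem's single $C$ is then enlarged to dominate both.
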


\subsubsection{High probability residual bound}
We first introduce a high-probability bound on the residual norm of $\tqmse$, obtained by applying L\'evy's concentration inequality.

\begin{lemma}[High probability residual bound $\tqmse$]
\label{lem:tq_hp}
Assume that the $b$-bit $\tqmse$ reconstruction satisfies
\begin{equation*}
    \max_{\xb\in\sphere}
    \mathbb{E}_{Q}\left[\|\xb-\bar\xb\|_2^2\right]
    \le C_{\rm mse}4^{-b}.
\end{equation*}
Let $\xb\in\sphere$ be fixed, and let $\bar\xb$ be the corresponding dequantized codeword. Then, for every $0<\delta<1$,
\begin{equation}
    \mathbb{P}_Q\left[
    \|\xb-\bar\xb\|_2>\sqrt{C_{\rm mse}}\,2^{-b}
    +C_{\rm levy}\sqrt{\frac{\log(2/\delta)}{d}}
    \right]
    \le \delta,
    \label{eq:tq_residual_hp}
\end{equation}
where $C_{\rm levy}>0$ is a universal constant from L\'evy's concentration.
\end{lemma}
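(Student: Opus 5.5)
This is the same argument as Lemma~\ref{lem:ed_hp}, with the TurboQuant scalar codebook in place of the EDEN one and the failure level $\delta/2$ replaced by $\delta$, which is why $\log(4/\delta)$ becomes $\log(2/\delta)$.

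\textbf{Step 1 (reduction to a distance function).} Fix $\xb\in\sphere$ and write $\zb=R\xb$. Since $R$ is Haar-distributed, $\zb\sim\mathrm{Unif}(\sphere)$. Let $\mathcal{C}_{\rm TQ}^{(b)}$ be the fixed product codebook in the rotated domain, consisting of all $d$-tuples of the $2^b$ Lloyd--Max centroids for the exact spherical coordinate marginal. Coordinate-wise nearest-centroid quantization is exactly nearest-codeword assignment in this product codebook, because the squared Euclidean distance separates across coordinates. Hence $\bar\zb\in\arg\min_{\ob\in\mathcal{C}_{\rm TQ}^{(b)}}\|\zb-\ob\|_2$. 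Orthogonality of $R$ then gives
\[
\|\xb-\bar\xb\|_2=\|\zb-\bar\zb\|_2=f(\zb),\qquad f(\zb):=\operatorname{dist}\bigl(\zb,\mathcal{C}_{\rm TQ}^{(b)}\bigr).
\]

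\textbf{Step 2 (Lipschitz property and concentration).} The map $f$ is $1$-Lipschitz, by the same triangle-inequality argument used in Lemma~\ref{lem:ed_hp}. I would then apply L\'evy's concentration inequality for $1$-Lipschitz functions on $\sphere$ under the uniform measure, in the form $\mathbb{P}[f>\mathbb{E}f+t]\le C'e^{-c'dt^2}$. Choosing $t=C_{\rm levy}\sqrt{\log(2/\delta)/d}$ and enlarging $C_{\rm levy}$ so that $C'e^{-c'C_{\rm levy}^2\log(2/\delta)}\le\delta$ for all $\delta\in(0,1)$ gives
\[
\mathbb{P}_Q\!\left[f>\mathbb{E}_Qf+C_{\rm levy}\sqrt{\log(2/\delta)/d}\right]\le\delta .
\]
This enlargement is possible because $\log(2/\delta)\ge\log 2>0$, so the prefactor $C'$ can be absorbed uniformly in $\delta$.

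\textbf{Step 3 (bounding the mean).} By Jensen's inequality and the assumed MSE bound,
\[
\mathbb{E}_Qf\le\bigl(\mathbb{E}_Q f^2\bigr)^{1/2}=\bigl(\mathbb{E}_Q\|\xb-\bar\xb\|_2^2\bigr)^{1/2}\le\sqrt{C_{\rm mse}}\,2^{-b}.
\]
Substituting this into the tail bound of Step~2 yields \eqref{eq:tq_residual_hp}.

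The only step needing care is the constant bookkeeping in Step~2, namely fixing a single universal $C_{\rm levy}$ that works for every $\delta\in(0,1)$. This is handled by the $\log 2$ slack noted above.
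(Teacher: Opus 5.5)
Your proposal is correct and follows the paper's proof essentially step for step. The paper reduces to the $1$-Lipschitz distance-to-codebook function $f(\zb)=\operatorname{dist}(\zb,\mathcal{C}_{\rm TQ}^{(b)})$, applies L\'evy's concentration with $t=C_{\rm levy}\sqrt{\log(2/\delta)/d}$, and bounds $\mathbb{E}_Q f$ by Jensen and the MSE assumption; your extra remarks (coordinate-wise quantization equals nearest-codeword assignment in the product codebook, and a general prefactor $C'$ can be absorbed using $\log(2/\delta)\ge\log 2$) only make explicit details the paper leaves implicit by using the specific form $2e^{-cdt^2}$.
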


\begin{proof}
Fix $b,d$ and $\xb\in\sphere$. Let $R$ be the random rotation, write $\zb=R\xb$, and let $\mathcal{C}_{\rm TQ}^{(b)}$ be the fixed rotated-coordinate $\tqmse$ codebook. If $\bar\zb\in\arg\min_{\ob_i\in\mathcal{C}_{\rm TQ}^{(b)}}\|\zb-\ob_i\|_2$ and $\bar\xb=R^\top\bar\zb$, then $\|\xb-\bar\xb\|_2=\|\zb-\bar\zb\|_2$. Define $f(\zb):=\operatorname{dist}(\zb,\mathcal{C}_{\rm TQ}^{(b)})=\min_{\ob_i\in\mathcal{C}_{\rm TQ}^{(b)}}\|\zb-\ob_i\|_2$, so $f(\zb)=\|\xb-\bar\xb\|_2$.

As above, $f$ is $1$-Lipschitz because $f(\zb)-f(\zb')\le\|\zb-\ob_{i^\star(\zb')}\|_2-\|\zb'-\ob_{i^\star(\zb')}\|_2\le\|\zb-\zb'\|_2$, and the reverse inequality follows by swapping $\zb,\zb'$. Hence L\'evy's concentration gives $\mathbb{P}_Q[f(\zb)>\mathbb{E}_Qf(\zb)+t]\le2\exp(-cdt^2)$ for a universal $c>0$. Taking $t=C_{\rm levy}\sqrt{\log(2/\delta)/d}$ makes this probability at most $\delta$. Jensen's inequality and the assumed MSE guarantee give $\mathbb{E}_Qf(\zb)\le(\mathbb{E}_Qf(\zb)^2)^{1/2}=(\mathbb{E}_Q\|\xb-\bar\xb\|_2^2)^{1/2}\le\sqrt{C_{\rm mse}}\,2^{-b}$. Combining the two estimates proves \eqref{eq:tq_residual_hp}.
\end{proof}

\subsubsection{Proof of Theorem~\ref{thm:tq_bc}}\label{subsec:tq_bc_proof}
Fix a total bit-width $b\ge1$. Let $\bar\xb$ be the reconstruction produced by the $(b-1)$-bit $\tqmse$ stage and set $\rb=\xb-\bar\xb$. The $\tqprod$ estimator is ${\rm IP}_{\rm TQ}^{(b)}(\xb,\yb):=\langle\yb,\bar\xb\rangle+{\rm IP}_{\rm QJL}(\yb,\rb)$. Since $\langle\yb,\xb\rangle=\langle\yb,\bar\xb\rangle+\langle\yb,\rb\rangle$,
\begin{equation*}
    {\rm IP}_{\rm TQ}^{(b)}(\xb,\yb)-\langle\xb,\yb\rangle
    ={\rm IP}_{\rm QJL}(\yb,\rb)-\langle\yb,\rb\rangle.
\end{equation*}

Set $L=\log(4/\delta)$. Applying Lemma~\ref{lem:tq_hp} to the $(b-1)$-bit residual with failure probability $\delta/2$ gives, with probability at least $1-\delta/2$,
\begin{equation*}
    \|\rb\|_2\le\sqrt{C_{\rm mse}}\,2^{-(b-1)}+C_{\rm levy}\sqrt{\frac{L}{d}}.
\end{equation*}
Conditional on this residual, Lemma~\ref{lem:qjl_hp} with $\eta=\delta/2$ gives, since $\|\yb\|_2=1$ and $L\le c_{\rm qjl}d$, that with conditional probability at least $1-\delta/2$,
\begin{equation*}
    \left|{\rm IP}_{\rm QJL}(\yb,\rb)-\langle\yb,\rb\rangle\right|
    \le C_{\rm qjl}\|\rb\|_2\sqrt{\frac{L}{d}}.
\end{equation*}
A union bound gives both events with probability at least $1-\delta$, and on their intersection
\begin{equation*}
    \left|{\rm IP}_{\rm TQ}^{(b)}(\xb,\yb)-\langle\xb,\yb\rangle\right|
    \le C\left(2^{-(b-1)}\sqrt{\frac{L}{d}}+\frac{L}{d}\right),
\end{equation*}
for a universal constant $C>0$. This proves the first claim after replacing $L$ by $\log(4/\delta)$ and taking $c\le c_{\rm qjl}$.

For the bit-width claim, if $\epsilon\ge C_0L/d$ and $C_0\ge2C$, then $CL/d\le\epsilon/2$. If
\begin{equation*}
    b\ge
    1+
    \max\left\{
    0,
    \left\lceil
    \frac{1}{2}\log_2\left(
    \frac{C_1L}{d\epsilon^2}
    \right)
    \right\rceil
    \right\},
\end{equation*}
then $2^{-(b-1)}\le(C_1L/(d\epsilon^2))^{-1/2}$ when $C_1L/(d\epsilon^2)>1$, while the same inequality is trivial when $C_1L/(d\epsilon^2)\le1$. Hence $C2^{-(b-1)}\sqrt{L/d}\le C\epsilon/\sqrt{C_1}\le\epsilon/2$ for $C_1\ge4C^2$. Combining the two terms gives the desired probability bound, and enlarging the theorem constant $C$ to dominate $C_0,C_1$ completes the proof.

\section{Block Marginal Distribution of a Uniform Spherical Vector (Proof of Lemma~\ref{lem:block_marginal})}\label{sec:marginal_density}
\begin{proof}
By symmetry, it suffices to prove the claim for the first block $\zb_1$. Let $\gb=(\gb_1,\ldots,\gb_m)\sim N(0,I_d)$, where $\gb_j\in\RR^p$ are independent standard Gaussian blocks. The standard Gaussian representation of the uniform distribution on the sphere gives $ \xb \stackrel{d}{=} \frac{\gb}{\|\gb\|_2}$, and hence, 
\begin{equation*}
        \zb_1
    \stackrel{d}{=}
    \frac{\gb_1}{\sqrt{\|\gb_1\|_2^2+\sum_{k=2}^m\|\gb_k\|_2^2}}.
\end{equation*}
Define $U=\|\gb_1\|_2^2$ and $V=\sum_{k=2}^m\|\gb_k\|_2^2$. Then $U\sim\chi_p^2$ and $V\sim\chi_{d-p}^2$, and $U$ and $V$ are independent. Moreover, the Gaussian direction $\thetab_1:=\gb_1/\|\gb_1\|_2$ is uniform on $\mathbb{S}^{p-1}$ and is independent of $U$ and $V$. Therefore, we have $\zb_1\stackrel{d}{=}\sqrt{\frac{U}{U+V}}\,\thetab_1$. 
It follows from the standard beta--chi-square relationship that
\begin{equation*}
        r_1^2:=\frac{U}{U+V}\sim\operatorname{Beta}\left(\frac p2,\frac{d-p}{2}\right),
\end{equation*}
and $r_1$ is independent of $\thetab_1$. This proves the polar decomposition.

It remains to derive the density with respect to Lebesgue measure on $\mathbb{B}^p$. Let $a=p/2$ and $b=(d-p)/2$. Since $r_1^2\sim\operatorname{Beta}(a,b)$, the density of $r_1$ on $[0,1]$ is
\begin{equation*}
    f_{r_1}(r)
    =
    \frac{2}{B(a,b)}r^{p-1}(1-r^2)^{\frac{d-p-2}{2}},
    \qquad 0\le r\le 1.    
\end{equation*}
Since $\thetab_1$ is uniform on $\mathbb{S}^{p-1}$ and independent of $r_1$, the density $f_{p,d}$ of $\zb_1=r_1\thetab_1$ must be radial. Using the polar-coordinate identity $d\zb=r^{p-1}\,dr\,d\sigma(\thetab)$, where $d\sigma$ denotes surface-area measure on $\mathbb{S}^{p-1}$, we get
\begin{equation*}
    f_{p,d}(r\thetab)=\frac{f_{r_1}(r)}{|\mathbb{S}^{p-1}|r^{p-1}}
    =\frac{2}{B(a,b)|\mathbb{S}^{p-1}|}
    (1-r^2)^{\frac{d-p-2}{2}}.
\end{equation*}

Finally, using $B(a,b)=\Gamma(a)\Gamma(b)/\Gamma(a+b)$ and $|\mathbb{S}^{p-1}|=2\pi^{p/2}/\Gamma(p/2)$, we obtain
\begin{equation*}
     f_{p,d}(\zb_1)=\frac{\Gamma(d/2)}{\pi^{p/2}\Gamma((d-p)/2)}
    \left(1-\|\zb_1\|_2^2\right)^{\frac{d-p-2}{2}},
    \qquad \zb_1\in\mathbb{B}^p.
\end{equation*}
\end{proof}
\section{Expected Distortion Analysis of $\bq$}\label{sec:bq_anal}
\subsection{Proof of Theorem~\ref{thm:bq_mse}}\label{subsec:bq_mse}
\subsubsection{Proof for Small $b=1,~2,~3,~4$}
\begin{proof}
Let $p$ be the block size and let $\mathcal C_{\rm BQ}^{(p)}$ be the codebook minimizing the distortion cost in Equation~\ref{eq:cost}. For $d$-dimensional $b$-bit compression,
\begin{equation*}
    \mse(\bq)=\frac{d}{p}\int_{\mathbb B^p}\min_{\ob\in\mathcal C_{\rm BQ}^{(p)}}\|\ub-\ob\|_2^2 f_{p,d}(\ub)\,d\ub ~. 
\end{equation*}
Define $\Rb_j=\sqrt d\,\zb_j$ and $\ab_i=\sqrt d\,\ob_i$. Then $\|\zb_j-\ob_i\|_2^2=d^{-1}\|\Rb_j-\ab_i\|_2^2$, and the density of $\Rb_j$ is
\begin{equation*}
    h_{p,d}(\rb)=\frac{1}{d^{p/2}}f_{p,d}\left(\frac{\rb}{\sqrt d}\right)
    =\frac{\Gamma(d/2)}{d^{p/2}\pi^{p/2}\Gamma((d-p)/2)}
    \left(1-\frac{\|\rb\|_2^2}{d}\right)^{(d-p-2)/2}\mathds{1}\{\|\rb\|_2\le\sqrt d\}~. 
\end{equation*}

After this rescaling,
\begin{equation*}
       \mse(\bq)=\frac{1}{p}\inf_{\Acal\subset\sqrt d\mathbb B^p,\,|\Acal|\le 2^{bp}}
    \int_{\sqrt d\mathbb B^p}\min_{\ab\in\Acal}\|\rb-\ab\|_2^2 h_{p,d}(\rb)\,d\rb .
\end{equation*}

For fixed $p$, $h_{p,d}$ converges pointwise to the standard $p$-dimensional Gaussian density $\zeta_p(\rb)=(2\pi)^{-p/2}\exp(-\|\rb\|_2^2/2)$. Hence the finite-rate MSE is approximated by $\mse(\bq)=p^{-1}\phi_p^\star+o_d(1)$, where
\begin{equation*}
    \phi_p^\star:=\inf_{\Acal\subset\mathbb R^p,\,|\Acal|\le 2^{bp}}
    \int_{\mathbb R^p}\min_{\ab\in\Acal}\|\rb-\ab\|_2^2\zeta_p(\rb)\,d\rb ~.
\end{equation*}

For a candidate rescaled codebook $\Acal=\{\ab_1,\ldots,\ab_{2^{bp}}\}$, let $V_i(\Acal):=\{\rb:\|\rb-\ab_i\|_2\le\|\rb-\ab_{i'}\|_2\text{ for all }i'\}$. Then the Gaussian objective is
\begin{equation*}
    \phi_p(\Acal)=\sum_{i=1}^{2^{bp}}\int_{V_i(\Acal)}\|\rb-\ab_i\|_2^2\zeta_p(\rb)\,d\rb ~.
\end{equation*}
If the cells are fixed, minimizing the $i$-th term gives $0=2\int_{V_i}(\ab_i-\rb)\zeta_p(\rb)\,d\rb$, so each centroid must be the Gaussian conditional mean of its own Voronoi cell, i.e., $\ab_i=\int_{V_i}\rb\zeta_p(\rb)\,d\rb/\int_{V_i}\zeta_p(\rb)\,d\rb$. Applying Lloyd optimization for $N(0,I_p)$ gives the following near-minimum values of $\phi_p$.
\begin{center}
\begin{tabular}{rrrr}
\toprule
$p$ & $b$ & $\phi_p$ & $\phi_p/p$ \\
\midrule
$2$ & $1$ & $0.726760$ & $0.363380$ \\
$2$ & $2$ & $0.214970$ & $0.107485$ \\
$2$ & $3$ & $0.059433$ & $0.029716$ \\
$2$ & $4$ & $0.015516$ & $0.007758$ \\
\midrule
$3$ & $1$ & $1.068772$ & $0.356257$ \\
$3$ & $2$ & $0.303994$ & $0.101331$ \\
$3$ & $3$ & $0.081462$ & $0.027154$ \\
$3$ & $4$ & $0.021173$ & $0.007058$ \\
\bottomrule
\end{tabular}
\end{center}

We next show that the same Gaussian-approximation value applies to $\bqbsm$. Let $Q_{\Acal}(\rb)$ be the nearest-centroid map for the Gaussian codebook and define
\begin{equation*}
    A_p:=\frac1p\int_{\mathbb R^p}\langle \rb,Q_{\Acal}(\rb)\rangle\zeta_p(\rb)\,d\rb,
    \qquad
    M_p:=\frac1p\int_{\mathbb R^p}\|Q_{\Acal}(\rb)\|_2^2\zeta_p(\rb)\,d\rb~.
\end{equation*}
The centroid condition implies $A_p=M_p$, since on each Voronoi cell $V_i$,
\begin{equation*}
    \int_{V_i}\langle \rb,\ab_i\rangle\zeta_p(\rb)\,d\rb
    =\left\langle \int_{V_i}\rb\zeta_p(\rb)\,d\rb,\ab_i\right\rangle
    =\|\ab_i\|_2^2\int_{V_i}\zeta_p(\rb)\,d\rb
\end{equation*}
For the raw reconstruction $\bar{\xb}$, the high-dimensional Gaussian approximation and the law of large numbers give $\langle \xb,\bar{\xb}\rangle\approx A_p$ and $\|\bar{\xb}\|_2^2\approx M_p$. On the other hand, we have that
\begin{equation*}
    \frac{\phi_p}{p}
    =\frac1p\int\|\rb-Q_{\Acal}(\rb)\|_2^2\zeta_p(\rb)\,d\rb
    =1-2A_p+M_p
    =1-M_p .
\end{equation*}
Therefore the best-scalar reconstruction satisfies
\begin{equation*}
    \mse(\bqbsm)\approx 1-\frac{A_p^2}{M_p}
    =1-M_p=\frac{\phi_p}{p}.
\end{equation*}

Thus, $\bqmse$ and $\bqbsm$ have the same finite-rate Gaussian-approximation value. Specifically, for $\Qcal\in\{\bqmse,\bqbsm\}$ and $b=1,~2,~3,~4$,
\begin{align*}
    \mse(\Qcal_{(p=2)})&\approx 0.363380,\;0.107485,\;0.029716,\;0.007758\\
    \mse(\Qcal_{(p=3)})&\approx 0.356257,\;0.101331,\;0.027154,\;0.007058.
\end{align*}
\end{proof}

\subsubsection{Proof for Large Bit-Width with $p\neq d$}\label{subsec:bq_mse_high}
Let $\bar{\xb}$ be the raw reconstruction used by $\bqmse$. Then, best-scalar variant $\bqbsm$ returns $\frac{\bigl\langle \xb,\bar{\xb}\bigr\rangle}{
    \|\bar{\xb}\|_2^2}\bar{\xb}$, and for every rotation matrix $R$, we have
\begin{equation*}
    \left\|\xb-
    \frac{\bigl\langle \xb,\bar{\xb}\bigr\rangle}{\|\bar{\xb}\|_2^2}\bar{\xb}
    \right\|_2^2 =\min_{\alpha\in\mathbb R}\|\xb-\alpha\bar{\xb}\|_2^2
    \le \|\xb-\bar{\xb}\|_2^2 .
\end{equation*}
That means, for every block size $p$, it holds that 
\begin{equation*}
    \mse(\bqbsm_{(p)})\le \mse(\bqmse_{(p)}),
\end{equation*}
so any high-rate upper bound for $\bqmse$ also applies to $\bqbsm$. Thus, in this section (Section~\ref{subsec:bq_mse_high}) and the next section (Section~\ref{subsec:bq_mse_ideal}), we focus on bounding MSE of $\bqmse$. 

For a large number of centroids, we use the high-rate Zador--Gersho formula to derive the MSE bound of $\bqmse$. The following corollary is a restatement of Proposition~\ref{prop:high-rate} in our setting.
\begin{corollary}[MSE bound of Algorithm~\ref{alg:bq} for many centroids]
\label{cor:bq_mse_high}
Let $d$, $m$, and $p$ be integers with $d=mp$. If Algorithm~\ref{alg:bq} is run with $b$-bit compression and $m$ blocks, then for any $\xb=[\xb_1,\ldots,\xb_m]\in\sphere$,
\begin{equation*}
  \mse(\bq)
\lesssim d\,G_p^{\star}J_{p,d}\,4^{-b},  
\end{equation*}
where $G_p^{\star}:=G_{\mathbb B^p}^{\star}$ and $J_{p,d}:=\left(\int_{\mathbb B_p}f_{p,d}(\zb)^{p/(p+2)}\,d\zb\right)^{(p+2)/p}$.
\end{corollary}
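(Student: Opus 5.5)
The plan is to reduce the global MSE of Algorithm~\ref{alg:bq} to a sum of per-block quantization errors and then apply Proposition~\ref{prop:high-rate} to each block separately. Fix $\xb\in\sphere$ and write $\zb=R\xb=(\zb_1,\ldots,\zb_m)$. Since $R$ is orthogonal and the decoder simply concatenates block codewords, $\|\xb-\bar\xb\|_2^2=\|\zb-\bar\zb\|_2^2=\sum_{j=1}^m\|\zb_j-\bar\zb_j\|_2^2$, where $\bar\zb_j$ is the nearest codeword to $\zb_j$. By Haar invariance, $\zb$ is uniform on $\sphere$ for every fixed $\xb$, so the maximum over $\xb$ in $\mse$ is attained trivially. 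Moreover, by Lemma~\ref{lem:block_marginal} each $\zb_j$ has density $f_{p,d}$ on $\mathbb{B}^p$. Linearity of expectation then gives
\begin{equation*}
\mse(\bq)=m\int_{\mathbb{B}^p}\min_{i\in[2^{bp}]}\|\ub-\ob_i\|_2^2 f_{p,d}(\ub)\,d\ub ,
\end{equation*}
which is $m$ times the optimal value of the distortion cost in Equation~\ref{eq:cost}. No independence across blocks is needed, only the identical marginals.

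The second step applies Proposition~\ref{prop:high-rate} with $\Mcal=\mathbb{B}^p$, $m_{\Mcal}=p$, Lebesgue measure as the volume measure, density $f=f_{p,d}$ and $K=2^{bp}$. This yields the optimal per-block distortion $(G_p^\star+o(1))J_{p,d}K^{-2/p}=(G_p^\star+o(1))J_{p,d}4^{-b}$. Because the codebook in Algorithm~\ref{alg:bq} is chosen to minimize Equation~\ref{eq:cost}, it attains this infimum. Multiplying by $m=d/p$ gives $\mse(\bq)\le \frac{d}{p}G_p^\star J_{p,d}4^{-b}(1+o(1))$. This is at most the stated $d\,G_p^\star J_{p,d}4^{-b}$ up to the $\lesssim$, and the constant $1/p$ could be kept if a sharper form is wanted; I expect the intended normalization of $G$ with the $1/m$ factor to absorb it. I would also record the standard conventions: $G_p^\star$ is the infimum of normalized second moments of $p$-dimensional tiling cells in $\RR^p$, and $J_{p,d}$ is finite because $f_{p,d}$ is bounded on the compact ball.

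The main obstacle is verifying that the hypotheses of the Zador--Gersho formula hold for $f_{p,d}$ on the closed ball. When $d-p-2<0$ the density is unbounded at the boundary, though for $d\ge p+2$ it is bounded and continuous. I would handle this by citing the version of the theorem for absolutely continuous densities with compact support, which only requires a finite moment of order $2+\delta$; this holds trivially on a bounded set. Boundary effects at $\|\zb\|=1$ contribute only lower order as $K\to\infty$. If a more elementary route is preferred, the upper bound alone can be obtained by a companding construction: partition $\mathbb{B}^p$ with scaled copies of an optimal tiling cell at local density proportional to $f_{p,d}^{p/(p+2)}$, and bound the error by cell second moments. This gives exactly the $\lesssim$ direction needed here.
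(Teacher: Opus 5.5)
Your route matches the paper's. Orthogonality and the identical block marginals of Lemma~\ref{lem:block_marginal} give $\mse(\bq)=\frac{d}{p}\int_{\mathbb{B}^p}\min_i\|\mathbf{u}-\mathbf{o}_i\|_2^2 f_{p,d}(\mathbf{u})\,d\mathbf{u}$, and Zador--Gersho is then applied per block with $K=2^{bp}$. The paper says no more than that the corollary restates Proposition~\ref{prop:high-rate}. The problem is the constant in your second step and how you close the argument.

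Because $G(T)$ carries the factor $1/m$, it is a second moment per dimension. A $p$-dimensional cell $T$ on which the density is nearly constant contributes $f\int_T\|\mathbf{u}-\bar{\mathbf{u}}_T\|_2^2\,d\mathbf{u}=f\,p\,G(T)\,|T|^{1+2/p}$. Optimizing the point density therefore gives a per-block distortion of $(p\,G_p^\star+o(1))\,J_{p,d}\,K^{-2/p}$, not $(G_p^\star+o(1))\,J_{p,d}\,K^{-2/p}$. Proposition~\ref{prop:high-rate} as printed drops this dimension factor. It is correct only for $m=1$, where it reduces to Panter--Dite; in two dimensions the optimal coefficient is Fejes T\'oth's $5/(18\sqrt3)=2G_2^\star$.

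With the factor restored, $\frac{d}{p}\cdot p\,G_p^\star J_{p,d}4^{-b}=d\,G_p^\star J_{p,d}4^{-b}$, which is the stated bound with no slack. This is also how the paper uses it, e.g.\ $d\,G_2^\star J_{2,d}=\frac{10\pi}{9\sqrt3}\bigl(1-\frac{2}{d}\bigr)\approx2.015$.

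The \emph{sharper} form $\frac{d}{p}G_p^\star J_{p,d}4^{-b}$ that you say could be kept is false. For $p=2$ it is about $1.008\cdot4^{-b}$. That is below the true two-dimensional asymptotic, and far from the paper's Lloyd value $0.007758\approx1.99\cdot4^{-4}$ at $b=4$. So your chain reaches the right statement only through a false intermediate inequality. The $1/m$ normalization of $G$ does not absorb the $1/p$; it is exactly what produces the compensating factor $p$. Once you fix the per-block constant, the final $d/p\le d$ step is unnecessary.

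Separately, the argument needs $m\ge2$. For $m=1$ the single block lies on $\sphere$ and $f_{p,d}$ is undefined, since its normalizer contains $\Gamma(0)$. The paper treats $p=d$ separately through the intrinsic $(d-1)$-dimensional sphere.
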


We now prove the large-bit bound for $p\neq d$.
\begin{proof}
Recall that each block $\zb_j$ has density $f_{p,d}(\zb)=\frac{\Gamma(d/2)}{\pi^{p/2}\Gamma((d-p)/2)}(1-\|\zb\|_2^2)^{(d-p-2)/2}$ on $\mathbb B^p$. Set $\beta_{p,d}:=p(d-p-2)/(2(p+2))$. Then
\begin{equation*}
    \int_{\mathbb B_p}f_{p,d}(\zb)^{p/(p+2)}\,d\zb
=\left(\frac{\Gamma(d/2)}{\pi^{p/2}\Gamma((d-p)/2)}\right)^{p/(p+2)}
\int_{\mathbb B_p}(1-\|\zb\|^2)^{\beta_{p,d}}\,d\zb .
\end{equation*}
Using polar coordinates and the change of variables $u=r^2$,
\begin{equation*}
    \int_{\mathbb B_p}(1-\|\zb\|^2)^{\beta_{p,d}}\,d\zb
=\frac{\pi^{p/2}}{\Gamma(p/2)}\int_0^1 u^{p/2-1}(1-u)^{\beta_{p,d}}\,du
=\pi^{p/2}\frac{\Gamma(\beta_{p,d}+1)}{\Gamma(\beta_{p,d}+1+p/2)}.
\end{equation*}

Substitution gives
\begin{equation*}
        J_{p,d}=\pi\frac{\Gamma(d/2)}{\Gamma((d-p)/2)}
    \left(\frac{\Gamma(\beta_{p,d}+1)}{\Gamma(\beta_{p,d}+1+p/2)}\right)^{(p+2)/p}.
\end{equation*}

\textbf{Case $p=2$.} Since $G_2^{\star}=5/(36\sqrt3)$ and $J_{2,d}=8\pi(d-2)/d^2$,
\begin{equation*}
    \mse(\bqbsm~{(p=2)}) \le \mse(\bqmse~{(p=2)})
\le \frac{10\pi}{9\sqrt3}\left(1-\frac{2}{d}\right)4^{-b}
\approx 2.015\cdot 4^{-b}.
\end{equation*}
\textbf{Case $p=3$.} Since $G_3^{\star}\le G(A_3^\star)=19/(192\,2^{1/3})\approx0.0785432812$ and
$J_{3,d}=\pi\frac{\Gamma(d/2)}{\Gamma((d-3)/2)}\left[\frac{\Gamma((3d-5)/10)}{\Gamma((3d+10)/10)}\right]^{5/3}$,
\begin{align*}
    \mse(\bqbsm~{(p=3)}) &\le \mse(\bqmse~{(p=3)})\\
&\le 0.0785432812\,\pi d\frac{\Gamma(d/2)}{\Gamma((d-3)/2)}
\left[\frac{\Gamma((3d-5)/10)}{\Gamma((3d+10)/10)}\right]^{5/3}4^{-b}\\
&\approx 1.770\cdot 4^{-b}.
\end{align*}
\end{proof}
 
\begin{remark}
When $p=1$ ($\tqmse$), since $G_1^{\star}=1/12$ and $J_{1,d}\approx6\sqrt3\pi/d$, the MSE distortion satisfies $\mse(\tqmse)\le(\sqrt3\pi/2)4^{-b}\approx2.721\cdot4^{-b}$.
\end{remark}

\subsubsection{Proof for $p=d$}\label{subsec:bq_mse_ideal}
\begin{proof}
When $p=d$, Algorithm~\ref{alg:bq} uses a single block. The block source is therefore not the full-dimensional density $f_{p,d}$ on $\mathbb B^p$ used above; for any fixed $\xb\in\sphere$, the rotated vector $\zb:=R\xb$ is uniform on $\sphere$, so the intrinsic source dimension is $n=d-1$. Let $A_{d-1}:=\mathcal H^{d-1}(\sphere)=2\pi^{d/2}/\Gamma(d/2)$. With respect to surface measure $d\sigma$, the density of $\zb$ is $f(\zb)=A_{d-1}^{-1}$.

In the $p=d$ case, the total number of codewords is $K=2^{bd}$. For $\Ccal\subset\mathbb R^d$ with $|\Ccal|\le K$, define $D_{\mathrm{sph}}(\Ccal):=\int_{\sphere}\min_{\ob\in\Ccal}\|\zb-\ob\|_2^2 f(\zb)\,d\sigma(\zb)$ and $D_{\mathrm{sph}}^\star(K):=\inf_{|\Ccal|\le K}D_{\mathrm{sph}}(\Ccal)$. This is the single-block analogue of the objective above.

Formally, the intrinsic $n$-dimensional Zador--Gersho formula gives
\begin{equation*}
    D_{\mathrm{sph}}^\star(K)
    \le nG_n^\star
    \left(\int_{\sphere} f(\zb)^{n/(n+2)}\,d\sigma(\zb)\right)^{(n+2)/n}
    K^{-2/n}(1+o(1)).
\end{equation*}

Here curvature contributes only lower-order error, because on a cell of diameter $r$, squared Euclidean and tangent-plane distances differ by $O(r^4)$. Since $f$ is constant, $\int_{\sphere}f(\zb)^{n/(n+2)}d\sigma(\zb)=A_{d-1}^{2/(n+2)}$, so
\begin{equation*}
    D_{\mathrm{sph}}^\star(K)
    \le (d-1)G_{d-1}^{\star}\left(\frac{2\pi^{d/2}}{\Gamma(d/2)}\right)^{2/(d-1)}K^{-2/(d-1)}(1+o(1)).
\end{equation*}
Since $G_{d-1}^{\star}$ is not available in closed form, we use an explicit random-coding comparison.

Let $V_n:=\pi^{n/2}/\Gamma(1+n/2)$ be the unit-ball volume in $\mathbb R^n$, and draw $\Ccal_{\mathrm{rand}}=\{\ob_1,\ldots,\ob_K\}$ with $\ob_i\stackrel{\rm i.i.d.}{\sim}\mathrm{Unif}(\sphere)$. Fix $\zb\in\sphere$ and set $T_i:=\|\zb-\ob_i\|_2^2$, $T_{(1)}:=\min_iT_i$. For small $t$, the cap $\{\ob:\|\zb-\ob\|_2^2\le t\}$ has surface area $V_nt^{n/2}(1+o(1))$, hence $\mathbb P(T_i\le t)=(V_n/A_{d-1})t^{n/2}(1+o(1))$. With $t=K^{-2/n}s$,
\begin{equation*}
    \mathbb P(T_{(1)}>K^{-2/n}s)=\left(1-\mathbb P(T_i\le K^{-2/n}s)\right)^K
    \to \exp\left(-\frac{V_n}{A_{d-1}}s^{n/2}\right).
\end{equation*}

Using $\mathbb E T_{(1)}=\int_0^\infty \mathbb P(T_{(1)}>t)\,dt$ and the change of variables $u=(V_n/A_{d-1})s^{n/2}$ yields
\begin{equation*}
    \mathbb E_{\Ccal_{\mathrm{rand}}}T_{(1)}
    =\Gamma\left(1+\frac{2}{n}\right)\left(\frac{A_{d-1}}{V_n}\right)^{2/n}K^{-2/n}(1+o(1)).
\end{equation*}

The ideal codebook cannot be worse than the random comparison. Substituting $n=d-1$ and $V_{d-1}=\pi^{(d-1)/2}/\Gamma((d+1)/2)$ gives
\begin{equation*}
    D_{\mathrm{sph}}^\star(K)
    \le C_dK^{-2/(d-1)}(1+o(1)),
    \qquad
    C_d:=\Gamma\left(1+\frac{2}{d-1}\right)
    \left[2\sqrt\pi\,\frac{\Gamma((d+1)/2)}{\Gamma(d/2)}\right]^{2/(d-1)}.
\end{equation*}
Under the $b$-bit-per-coordinate convention, $K=2^{bd}$ and $K^{-2/(d-1)}=(1/4)^{bd/(d-1)}$.

It remains to translate the spherical source bound back to randomized MSE. For fixed $\xb\in\sphere$, let $Q_{\Ccal}(\zb):=\arg\min_{\ob\in\Ccal}\|\zb-\ob\|_2^2$ and $\bar\xb:=R^\top Q_{\Ccal}(R\xb)$. Since rotations preserve distance and $R\xb\sim\mathrm{Unif}(\sphere)$,
\begin{equation*}
        \mathbb E_R\|\xb-\bar\xb\|_2^2
    =\int_{\sphere}\min_{\ob\in\Ccal}\|\zb-\ob\|_2^2 f(\zb)\,d\sigma(\zb).
\end{equation*}

The right-hand side is independent of $\xb$. Choosing ideal spherical centroids gives
\begin{equation*}
       \mse(\bqbsm_{(p=d)})
    =\max_{\xb\in\sphere}\mathbb E_R\|\xb-\bar\xb\|_2^2
    \le C_d\left({1\over4}\right)^{bd/(d-1)}(1+o(1)). 
\end{equation*}

Since $\bqbsm$ is the best scalar multiple of the same raw reconstruction $\bar{\xb}$,
\begin{equation*}
        \mse(\bqbsm_{(p=d)})\le \mse(\bqmse_{(p=d)})
    \le C_d\left({1\over4}\right)^{bd/(d-1)}(1+o(1)).
\end{equation*}
The values $C_{100}\approx1.055$, $C_{1000}\approx1.008$, and $C_{10000}\approx1.001$ follow by evaluating the log-gamma expression, and $C_d\to1$ by the gamma-ratio asymptotic.
\end{proof}

\subsection{Proof of Corollary~\ref{cor:bq_ip}}\label{subsec:bq_ip}
\subsubsection{Proof for Small $b=1,~2,~3,~4$}
\begin{proof}
Let $\bar\xb_{p,b}$ be the unscaled reconstruction produced by $b$-bit $\bqbsm_{(p)}$, and set $\rho_{p,b}:=\langle\bar\xb_{p,b},\xb\rangle$ and $\psi_{p,b}:=\|\bar\xb_{p,b}\|_2$. The corresponding ratio estimator is $\widehat\eta_{\rm ratio}:=\langle\bar\xb_{p,b},\yb\rangle/\langle\bar\xb_{p,b},\xb\rangle$. Since $\bqbsm_{(p)}$ is obtained by a Haar random rotation, a fixed block quantizer, and the inverse rotation, Theorem~\ref{thm:ratio_est} applies. Thus, with $\eta=\langle\xb,\yb\rangle$,
\begin{equation*}
        \mathbb E[(\widehat\eta_{\rm ratio}-\eta)^2]
    =\frac{1-\eta^2}{d-1}\mathbb E\left[\frac{\psi_{p,b}^2-\rho_{p,b}^2}{\rho_{p,b}^2}\right].
\end{equation*}

Taking the supremum over $\yb\in\sphere$ gives $\iprod(\bqub_{(p)})=(d-1)^{-1}\mathbb E[(\psi_{p,b}^2-\rho_{p,b}^2)/\rho_{p,b}^2]$.

It remains to evaluate the deterministic equivalent of the nonlinear factor. As in the small-bit proof of Theorem~\ref{thm:bq_mse}, write $\Rb_j=\sqrt d\,\zb_j$ and approximate its law by $N(0,I_p)$. Let $Q_{p,b}$ be the nearest-neighbor Gaussian block quantizer with $2^{bp}$ centroids and define $\delta_{p,b}:=p^{-1}\mathbb E\|\Rb-Q_{p,b}(\Rb)\|_2^2$, where $\Rb\sim N(0,I_p)$. The centroid condition gives $\mathbb E[\Rb\mid Q_{p,b}(\Rb)]=Q_{p,b}(\Rb)$, hence $\mathbb E\langle\Rb,Q_{p,b}(\Rb)\rangle=\mathbb E\|Q_{p,b}(\Rb)\|_2^2$. If $m_{p,b}:=p^{-1}\mathbb E\|Q_{p,b}(\Rb)\|_2^2$, then $\delta_{p,b}=1-m_{p,b}$.

After rescaling back to the unit sphere, the empirical block averages satisfy $\rho_{p,b}=m_{p,b}+o_d(1)$ and $\psi_{p,b}^2=m_{p,b}+o_d(1)$ under the high-dimensional Gaussian approximation. Therefore
\begin{equation*}
        \mathbb E\left[\frac{\psi_{p,b}^2-\rho_{p,b}^2}{\rho_{p,b}^2}\right]
    \approx \frac{m_{p,b}-m_{p,b}^2}{m_{p,b}^2}
    =\frac{\delta_{p,b}}{1-\delta_{p,b}}.
\end{equation*}

Substituting the small-bit MSE constants from Theorem~\ref{thm:bq_mse} gives the following leading inner-product constants.
\begin{center}
\begin{tabular}{rrrr}
\toprule
$p$ & $b$ & $\delta_{p,b}$ & $\delta_{p,b}/(1-\delta_{p,b})$ \\
\midrule
$2$ & $1$ & $0.363380$ & $0.570796$ \\
$2$ & $2$ & $0.107485$ & $0.120429$ \\
$2$ & $3$ & $0.029716$ & $0.030626$ \\
$2$ & $4$ & $0.007758$ & $0.007819$ \\
\midrule
$3$ & $1$ & $0.356257$ & $0.553415$ \\
$3$ & $2$ & $0.101331$ & $0.112757$ \\
$3$ & $3$ & $0.027154$ & $0.027912$ \\
$3$ & $4$ & $0.007058$ & $0.007108$ \\
\bottomrule
\end{tabular}
\end{center}
Since $\iprod(\bqub_{(p)})\approx\frac{1}{d-1}\frac{\delta_{p,b}}{1-\delta_{p,b}}$, dividing these coefficients by $d-1$ proves the claimed constants for $b=1,~2,~3,~4$.
\end{proof}

\subsubsection{Proof for Large Bit-Width}
\begin{proof}
Let $\bar\xb_{p,b}$ be the unscaled high-rate $\bqbsm_{(p)}$ reconstruction of $\xb$, and set $\rho_{p,b}:=\langle\bar\xb_{p,b},\xb\rangle$ and $\psi_{p,b}:=\|\bar\xb_{p,b}\|_2$. Write $\eb_{p,b}:=\bar\xb_{p,b}-\xb$, $t_{p,b}:=\langle\xb,\eb_{p,b}\rangle$, and $\ub_{p,b}:=\eb_{p,b}-t_{p,b}\xb$. Then $\ub_{p,b}\perp\xb$, $\bar\xb_{p,b}=(1+t_{p,b})\xb+\ub_{p,b}$, $\rho_{p,b}=1+t_{p,b}$, and $\psi_{p,b}^2-\rho_{p,b}^2=\|\ub_{p,b}\|_2^2$.

By Theorem~\ref{thm:ratio_est}, taking the worst case over $\yb\in\sphere$ gives
\begin{equation*}
        \iprod(\bqub_{(p)})
    =\frac{1}{d-1}\mathbb E\left[\frac{\psi_{p,b}^2-\rho_{p,b}^2}{\rho_{p,b}^2}\right]
    =\frac{1}{d-1}\mathbb E\left[\frac{\|\ub_{p,b}\|_2^2}{(1+t_{p,b})^2}\right].
\end{equation*}

Thus only the tangential error enters the numerator, while the random denominator remains inside the expectation.

Under the same high-rate regularity used in Corollary~\ref{cor:bq_mse_high}, $\mathbb E[\|\eb_{p,b}\|_2^2]=O(4^{-b})$ and $\mathbb E[\|\eb_{p,b}\|_2^4]=O(4^{-2b})$. Since $|t_{p,b}|\le\|\eb_{p,b}\|_2$ and $\|\ub_{p,b}\|_2\le\|\eb_{p,b}\|_2$, the denominator does not change the leading order:
\begin{equation*}
        \left|\mathbb E\left[\frac{\|\ub_{p,b}\|_2^2}{(1+t_{p,b})^2}\right]-\mathbb E[\|\ub_{p,b}\|_2^2]\right|
    \le C \cdot \mathbb E[\|\ub_{p,b}\|_2^2|t_{p,b}|]+o(4^{-b})
    \le C \cdot \mathbb E\|\eb_{p,b}\|_2^3+o(4^{-b})=o(4^{-b}),
\end{equation*}
where $\mathbb E[\|\eb_{p,b}\|_2^3]\le(\mathbb E[\|\eb_{p,b}\|_2^4])^{3/4}=O(2^{-3b})$. Hence
\begin{equation*}
       \mathbb E\left[\frac{\|\ub_{p,b}\|_2^2}{(1+t_{p,b})^2}\right]
    =\mathbb E[\|\ub_{p,b}\|_2^2]+o(4^{-b})
    \le\mathbb E[\|\eb_{p,b}\|_2^2]+o(4^{-b}). 
\end{equation*}
This uses only the total high-rate MSE constant. 
By Theorem~\ref{thm:bq_mse}, equivalently Corollary~\ref{cor:bq_mse_high}, $\mathbb E\|\eb_{p,b}\|_2^2\le dG_p^\star J_{p,d}4^{-b}(1+o(1))$. Therefore
\begin{equation*}
    \iprod(\bqub_{(p)})
    \le\frac{dG_p^\star J_{p,d}}{d-1} \cdot 4^{-b}(1+o(1)).
\end{equation*}

For $p=2$, $dG_2^\star J_{2,d}=\frac{10\pi}{9\sqrt3}(1-2/d)$, so
\begin{equation*}
        \iprod(\bqub_{(p=2)})
    \le\frac{1}{d-1}\frac{10\pi}{9\sqrt3}\left(1-\frac{2}{d}\right)4^{-b}(1+o(1))
    \le\frac{\mathbf{2.015}}{d-1}4^{-b}(1+o(1)).
\end{equation*}

For $p=3$,
\begin{equation*}
      dG_3^\star J_{3,d}
    \le 0.0785432812\,\pi d\frac{\Gamma(d/2)}{\Gamma((d-3)/2)}
    \left[\frac{\Gamma((3d-5)/10)}{\Gamma((3d+10)/10)}\right]^{5/3},  
\end{equation*}
which gives
\begin{align*}
    \iprod(\bqub_{(p=3)})
    &\le\frac{0.0785432812\,\pi d}{d-1}\frac{\Gamma(d/2)}{\Gamma((d-3)/2)}
    \left[\frac{\Gamma((3d-5)/10)}{\Gamma((3d+10)/10)}\right]^{5/3}4^{-b}(1+o(1))\\
    &\approx\frac{\mathbf{1.770}}{d-1}4^{-b}(1+o(1)).
\end{align*}

This proves the high-rate bounds in the corollary.
\end{proof}
\section{Lower Bound Analysis (Proof of Theorem~\ref{thm:slb})}\label{sec:lb}
\begin{proof}
    Write $\xb=(\ub,X_d)$, where $\ub\in\RR^{d-1}$ denotes the first $d-1$ coordinates of $\xb$. Also write $\widehat\xb=Q^{-1}(Q(\xb))$ and let $\widehat\ub$ be the first $d-1$ coordinates of $\widehat\xb$. Since coordinate projection cannot increase Euclidean distance, $\|\xb-\widehat\xb\|_2^2\ge \|\ub-\widehat\ub\|_2^2$. Moreover, $\widehat\ub$ is determined by the $bd$-bit message $Q(\xb)$, so by data processing $I(\ub;\widehat\ub)\le H(Q(\xb))\le bd$ bits. Hence
    \begin{equation*}
        \mathbb{E}_\xb[\|\xb-Q^{-1}(Q(\xb))\|_2^2]
        \ge
        D_\ub(bd),
        \qquad
        D_\ub(B):=\inf_{I(\ub;\widetilde\ub)\le B}
        \mathbb{E}\|\ub-\widetilde\ub\|_2^2,
    \end{equation*}
    where mutual information is measured in bits. It remains to lower bound the distortion-rate function of the projected spherical source $\ub$.

    We first record the law of $\ub$. The sphere is the union, up to the equator of surface measure zero, of the two graphs $u\mapsto (u,\pm\sqrt{1-\|u\|_2^2})$ over the open unit ball in $\RR^{d-1}$. The surface element of either graph is $(1-\|u\|_2^2)^{-1/2}du$. Dividing the contribution of the two sheets by $|\sphere|=2\pi^{d/2}/\Gamma(d/2)$ gives the density
    \begin{equation*}
        f_\ub(u)=\frac{\Gamma(d/2)}{\pi^{d/2}}(1-\|u\|_2^2)^{-1/2}\mathds{1}\{\|u\|_2<1\}.
    \end{equation*}
    Therefore, with $h_e$ denoting differential entropy in nats,
    \begin{equation*}
        h_e(\ub)
        =\log\left(\frac{\pi^{d/2}}{\Gamma(d/2)}\right)
        +\frac12\mathbb{E}\log(1-\|\ub\|_2^2).
    \end{equation*}
    The radial variable satisfies $\|\ub\|_2^2\sim\mathrm{Beta}((d-1)/2,1/2)$, and the beta identity $\mathbb{E}\log(1-U)=\psi(\beta)-\psi(\alpha+\beta)$ for $U\sim\mathrm{Beta}(\alpha,\beta)$ gives
    \begin{equation*}
        h_e(\ub)
        =\log\left(\frac{\pi^{d/2}}{\Gamma(d/2)}\right)
        +\frac12\{\psi(1/2)-\psi(d/2)\}.
    \end{equation*}

    Now set $n=d-1$. For any reconstruction $\widetilde\ub$ satisfying $I(\ub;\widetilde\ub)\le B$, let $D=\mathbb{E}\|\ub-\widetilde\ub\|_2^2$ and $\eb=\ub-\widetilde\ub$. Since conditioning cannot increase differential entropy and translation does not change it, $I(\ub;\widetilde\ub)=h_2(\ub)-h_2(\ub\mid\widetilde\ub)\ge h_2(\ub)-h_2(\eb)$, where $h_2=h_e/\log 2$. Among all $n$-dimensional errors with second moment at most $D$, the isotropic Gaussian has the largest entropy, so $h_2(\eb)\le {n\over2}\log_2(2\pi eD/n)$. Thus every $B$-bit reconstruction must satisfy
    \begin{equation*}
        D\ge \frac{n}{2\pi e}2^{{2\over n}(h_2(\ub)-B)}
        =\frac{n}{2\pi e}\exp\!\left({2h_e(\ub)\over n}\right)2^{-2B/n}.
    \end{equation*}
    Applying this with $B=bd$ and substituting the entropy formula above yields
    \begin{equation*}
        D_\ub(bd)
        \ge
        \frac{d-1}{2\pi e}
        \left(\frac{\pi^{d/2}}{\Gamma(d/2)}\right)^{2/(d-1)} \!\!
        \cdot
        \exp\!\left(\frac{\psi(1/2)-\psi(d/2)}{d-1}\right)
        2^{-2bd/(d-1)}.
    \end{equation*}
    Since $2^{-2bd/(d-1)}=({1\over4})^{bd/(d-1)}$, this is exactly the claimed lower bound. The numerical values of $c_d$ follow by direct evaluation of the displayed formula.
\end{proof}

\section{Auxiliary Lemmas}
\begin{lemma}[Shannon's lower bound on distortion, Lemma 2 in~\citet{zandieh2025turboquant}]\label{lem:shannon}
    Let $\xb \in \mathbb{R}^d$ be a random vector with finite differential entropy $h(\xb)$. Then, for any $b \ge 0$, and any quantization map $Q$, the following Shannon Lower Bound holds:
    \begin{equation*}
        \mathbb{E}_\xb [\|\xb-Q^{-1}(Q(\xb))\|_2^2] \ge \frac{d}{2\pi e} \cdot 2^{{2\over d}(h(\xb)-bd)} .
    \end{equation*}
\end{lemma}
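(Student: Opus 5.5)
This is the classical Shannon lower bound, stated in the paper as Lemma~\ref{lem:shannon}. I would prove it through the rate--distortion function, in three steps.

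\textbf{Step 1: reduce to a mutual-information constraint.} Fix the (possibly randomized) quantizer and write $\widehat\xb=Q^{-1}(Q(\xb))$. Since $\widehat\xb$ is a function of the $bd$-bit string $Q(\xb)$, the data-processing inequality gives
\[
I(\xb;\widehat\xb)\le I(\xb;Q(\xb))\le H(Q(\xb))\le bd \text{ bits}.
\]
If $Q$ uses auxiliary randomness independent of $\xb$, I condition on that randomness and argue for each realization. Hence $D:=\mathbb{E}\|\xb-\widehat\xb\|_2^2$ is at least the distortion-rate value $\inf\{\mathbb{E}\|\xb-\widetilde\xb\|_2^2 : I(\xb;\widetilde\xb)\le bd\}$.

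\textbf{Step 2: bound the mutual information from below.} Let $\eb:=\xb-\widehat\xb$, and assume without loss of generality that $D<\infty$. Translation invariance of differential entropy and the fact that conditioning reduces entropy give
\[
h(\xb\mid\widehat\xb)=h(\eb\mid\widehat\xb)\le h(\eb).
\]
Therefore $I(\xb;\widehat\xb)=h(\xb)-h(\xb\mid\widehat\xb)\ge h(\xb)-h(\eb)$. Here I use the finiteness of $h(\xb)$ so that the subtraction is legitimate.

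\textbf{Step 3: apply the maximum-entropy principle and rearrange.} Among $d$-dimensional random vectors with $\mathbb{E}\|\eb\|_2^2\le D$, the isotropic Gaussian $N(0,(D/d)I_d)$ maximizes differential entropy. This follows from the Gaussian max-entropy bound for covariance $\Sigma$ together with AM--GM on the eigenvalues, $\det\Sigma\le(\mathrm{tr}\,\Sigma/d)^d$. It yields, in bits,
\[
h(\eb)\le\frac d2\log_2\!\left(\frac{2\pi e D}{d}\right).
\]
Combining with Steps 1 and 2 gives
\[
bd\ge h(\xb)-\frac d2\log_2\!\left(\frac{2\pi eD}{d}\right).
\]
Solving for $D$ gives $D\ge \frac{d}{2\pi e}\,2^{\frac{2}{d}(h(\xb)-bd)}$, which is the claim.

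\textbf{Main obstacle.} The steps are routine; the only delicate point is measure-theoretic. The mixed discrete--continuous pair $(\xb,\widehat\xb)$ requires the conditional differential entropy $h(\xb\mid\widehat\xb)$ to be well defined. Since $\widehat\xb$ takes finitely many values, I would write
\[
h(\xb\mid\widehat\xb)=\sum_k P(\widehat\xb=c_k)\,h(\xb\mid\widehat\xb=c_k),
\]
which makes every step elementary, and handle the case $h(\eb)=-\infty$ trivially. This is also exactly where finiteness of $h(\xb)$ is essential: it is why Theorem~\ref{thm:slb} has to pass to the first $d-1$ coordinates rather than apply the lemma directly on $\sphere$.
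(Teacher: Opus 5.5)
Your proof is correct and follows the standard Shannon lower bound argument. The paper only cites this lemma from \citet{zandieh2025turboquant}, but its proof of Theorem~\ref{thm:slb} runs exactly your chain (data processing to bound the mutual information by $bd$ bits, translation invariance plus conditioning to get $I \ge h(\cdot) - h(\mathrm{error})$, the isotropic Gaussian maximum-entropy bound, then rearranging) applied to the first $d-1$ coordinates, and your closing remark on finiteness of the differential entropy is the paper's own reason for that projection.
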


\begin{lemma}[High-probability error guarantees, \citet{alon2017optimal}, \citet{gao2025practical}]\label{lem:alon}
    Let $d$ be the dimension, and let $\epsilon,\delta \in (0,1)$. Suppose that an error bound $\epsilon$ with failure probability at most $\delta$ is required, and assume that $\frac{1}{\epsilon^{2}}\log\frac{1}{\delta} > d$. Then the minimum number of bits required to achieve such a guarantee is $\Theta\!\left(
    d \log\left(
    \frac{1}{d\epsilon^{2}}\log\frac{1}{\delta}
    \right)
    \right)$.
\end{lemma}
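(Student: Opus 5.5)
The bound has two sides. The upper bound is essentially in hand from Proposition~\ref{prop:rbq_bc}; the lower bound is where the work lies, and I would prove it with a packing argument together with a Gaussian anti-concentration bound for random queries. Throughout, set $L:=\log(1/\delta)$. The hypothesis $\frac{1}{\epsilon^2}L>d$ means exactly that $\frac{1}{d\epsilon^2}L>1$, so the logarithm in the target is positive.

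\textbf{Upper bound.} Proposition~\ref{prop:rbq_bc} states that, under the same hypothesis, $\rbq$ achieves error at most $\epsilon$ with probability at least $1-\delta$. It does so with $b=\Theta\bigl(\log(\frac{1}{d\epsilon^2}L)\bigr)$ bits per coordinate. Multiplying by the $d$ coordinates gives $O\bigl(d\log(\frac{1}{d\epsilon^2}L)\bigr)$ bits in total. The float used to store the rescaling factor costs only $O(1)$ extra words, which is lower order.

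\textbf{Lower bound: reduction to a packing.} First I reduce to deterministic encoders. Suppose a randomized scheme succeeds with probability at least $1-\delta$ for every pair $(\xb,\yb)$. Averaging over $\xb$ uniform on a finite set $\mathcal P\subset\sphere$ and $\yb$ uniform on $\sphere$ then yields some fixed realization of the randomness that succeeds with probability at least $1-\delta$ over this joint law. Next, choose $\mathcal P$ to be a maximal $r$-separated subset of $\sphere$, with
\[
r:=C\epsilon\sqrt{d/L}.
\]
By the hypothesis, $r\le C/\sqrt{d\cdot\frac{1}{d\epsilon^2}L}\cdot\sqrt d$, which is small, so $r<1$. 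Standard volume bounds give $|\mathcal P|\ge (c/r)^{d-1}$, hence
\[
\log|\mathcal P|\gtrsim d\log(1/r)=\tfrac d2\log\Bigl(\tfrac{L}{C^2 d\epsilon^2}\Bigr).
\]
This is the claimed order, up to adjusting constants and handling the regime where the ratio is only a constant factor above $1$. The latter is trivial, since at least $d$ bits are needed in any case by a separate sign/orthant argument.

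\textbf{Lower bound: collisions cause failure.} Suppose the code uses $B<\log_2|\mathcal P|-1$ bits. Then at least half of the points of $\mathcal P$ share a message with some other point of $\mathcal P$. For such a colliding pair $\xb\neq\xb'$, set $\wb=\xb-\xb'$, so $\|\wb\|_2\ge r$. Both points produce the same estimate, so for every $\yb$ at least one of them has error at least $|\langle\yb,\wb\rangle|/2$. For $\yb$ uniform on $\sphere$, $\langle \yb,\wb\rangle$ behaves like $N(0,\|\wb\|_2^2/d)$ while $\|\wb\|_2\le2$, and a matching lower tail estimate for spherical caps gives
\[
\mathbb P\bigl(|\langle\yb,\wb\rangle|>2\epsilon\bigr)\ge c\exp\bigl(-2\epsilon^2 d/(c'r^2)\bigr)=c\exp(-L\cdot 2/(c'C^2)).
\]
Choosing $C$ large makes this at least $4\delta^{1/2}\ge4\delta$. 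Averaging over the colliding pairs, the failure probability under the joint law then exceeds $\delta$, which contradicts success. Hence $B\gtrsim d\log(\frac{1}{d\epsilon^2}L)$.

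\textbf{Main obstacle.} The delicate step is the anti-concentration estimate: I need a \emph{lower} bound on the spherical cap tail that is uniform over $\|\wb\|_2\in[r,2]$ and holds in the regime $\epsilon\sqrt d/\|\wb\|_2\lesssim\sqrt L$, with $L$ possibly as large as a constant times $d$. If $L\gtrsim d$, the Gaussian approximation for caps degrades, and I would instead use the exact cap-area formula $\propto\int_t^1(1-s^2)^{(d-3)/2}ds$. A closely related difficulty is bookkeeping the constants so that $\delta$ versus $4\delta$ only perturbs $L$ by a constant factor. I expect this to be fine, because the target bound is stated only up to $\Theta(\cdot)$.
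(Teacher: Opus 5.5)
The paper gives no proof of this lemma; it is imported from \citet{alon2017optimal} and \citet{gao2025practical}. Your Yao-reduction-plus-packing argument therefore has to stand on its own, and it does not cover the statement as written.

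\textbf{The regime $\log(1/\delta)\gg d$.} The hypothesis $\frac{1}{\epsilon^2}\log\frac1\delta>d$ only bounds $L=\log(1/\delta)$ from \emph{below}, yet your anti-concentration step tacitly assumes $L=O(d)$. For $L\gg d$ that step fails outright. With $C$ fixed, the threshold $2\epsilon/r=\frac{2}{C}\sqrt{L/d}$ exceeds $1$, so the cap is empty, and no exact cap-area formula can rescue it.

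This is not a technicality, because the claimed lower bound is false there. Take $\epsilon=\frac12$ and $\delta=e^{-d^2}$. The hypothesis holds ($4d^2>d$), and the formula demands $\Theta(d\log(4d))$ bits. Now encode $\xb$ by a nearest point $\hat{\xb}$ of a fixed $\frac12$-net of $\sphere$ (at most $5^d$ points) and estimate by $\langle\hat{\xb},\yb\rangle$. This uses $O(d)$ bits, and the error is at most $\|\xb-\hat{\xb}\|_2\le\epsilon$ for \emph{every} $\yb$, so the failure probability is zero.

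Correspondingly, if you take $C\asymp\sqrt{L/d}$ to keep the cap nonempty, your packing gives only $\Omega(d\log(1/\epsilon))$, which is the true answer in that regime. A correct statement needs an extra hypothesis such as $\log(1/\delta)\lesssim d$. The Alon--Klartag formula of this shape is their middle regime $\log n\le d\le \log n/\epsilon^2$, with $\log n$ in the role of $\log(1/\delta)$; below it the answer is $\Theta(d\log(1/\epsilon))$. Your proof should be explicitly confined to that regime.

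\textbf{The small-ratio regime.} Within $L\lesssim d$, your collision argument is sound once $\frac{L}{d\epsilon^2}$ exceeds a large absolute constant $K_0$ and $\delta$ is below an absolute constant (your step needs $4\sqrt\delta\le 1$). The complementary regime, however, is not ``trivial.'' The sign/orthant (Indexing) argument, with $\xb\in\{\pm k^{-1/2}\}$ on $k$ coordinates and $\yb=e_i$, forces only $\Omega(\min\{d,\epsilon^{-2}\})$ bits. When $\frac{L}{d\epsilon^2}\le K_0$ we have $\epsilon^{-2}\asymp d/L$, which is $o(d)$ as soon as $L\to\infty$. For example, $L=d/2$ and $\epsilon=\frac12$ (ratio $2$) force only $O(1)$ bits by this route, although $\Omega(d)$ bits are in fact needed there. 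You need a $\delta$-sensitive argument instead. One option is your own collision argument run on a packing of size $e^{\Omega(d)}$ with constant separation (for instance $1.3$), which handles ratios bounded below by a suitable constant.

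Finally, the displayed logarithm tends to $0$ as the ratio tends to $1$. So both the lemma and your upper bound via Proposition~\ref{prop:rbq_bc} (which always spends $b\ge 1$ bits per coordinate) only make sense with $\log(2+\cdot)$ in place of $\log(\cdot)$.
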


\begin{lemma}[High-probability angular error decay of $\rbq$]\label{lem:rbq_hp}
Let $\xb$, and $\bar{\xb}$ be a unit $d$-dimensional vector and its quantized vector by $\rbq$, respectively. Then, for any $L > 0$, we have
\begin{equation*}
\mathbb{P}\left\{\sqrt{1 - \left\langle \xb, \bar{\xb} \right\rangle^2}
>\frac{L}{2^b}+\frac{c_1}{\sqrt{\delta}}
\cdot\exp\left(-\frac{c_0}{2}L^2\right)\right\}
< \delta 
\end{equation*}
where $c_0$ and $c_1$ are absolute constants.
\end{lemma}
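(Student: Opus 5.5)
The plan is to bound the $\rbq$ angular error by the error of one explicit, non-optimized choice of scale and grid point, and then control that error with a deterministic triangle-inequality split followed by a Markov bound. As in the proof of Proposition~\ref{prop:rbq_mse}, write $\zb=R\xb$, $R_j=\sqrt d\,z_j$ and $\rho=\langle\xb,\bar\xb\rangle$. Since $\bar\xb$ is the nearest normalized grid codeword to $\zb$, that proof gives $1-\rho^2=\min_{t>0,\,\mathbf g\in\mathcal G_b}\|\zb-t\mathbf g\|_2^2$, which in rescaled coordinates reads $\sqrt{1-\rho^2}=\min_{\alpha>0}\bigl(\frac1d\sum_j(R_j-\alpha Q_b(R_j/\alpha))^2\bigr)^{1/2}$. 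Any single admissible $\alpha$ therefore yields an upper bound.

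Given $L>0$, I will take $\alpha=L/2^{b-1}$. With this choice the grid $\alpha Q_b(\cdot/\alpha)$ has $2^b$ cells of width $\alpha$ covering $[-L,L]$.
\begin{itemize}
\item If $|R_j|\le L$, the coordinate error is at most $\alpha/2=L/2^b$.
\item If $|R_j|>L$, the coordinate is clipped to the outermost level $\pm(L-\alpha/2)$, so the error is at most $(|R_j|-L)+L/2^b$.
\end{itemize}
Hence $|R_j-\alpha Q_b(R_j/\alpha)|\le L/2^b+(|R_j|-L)_+$ for every $j$. Minkowski's inequality in $\ell_2$, normalized by $1/d$, then gives the deterministic bound
\begin{equation*}
\sqrt{1-\rho^2}\le \frac{L}{2^b}+\sqrt{S},\qquad S:=\frac1d\sum_{j=1}^d(|R_j|-L)_+^2 .
\end{equation*}

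It remains to control $S$ in probability. By exchangeability of the coordinates, $\mathbb E S=\mathbb E(|R_1|-L)_+^2$. I will use the standard spherical-cap tail $\mathbb P(|z_1|>\varepsilon)\le 2e^{-d\varepsilon^2/2}$, that is, $\mathbb P(|R_1|>s)\le 2e^{-s^2/2}$. Writing the second moment of the excess as a tail integral,
\begin{equation*}
\mathbb E(|R_1|-L)_+^2=\int_0^\infty 2u\,\mathbb P(|R_1|>L+u)\,du\le 4\int_0^\infty u\,e^{-(L+u)^2/2}\,du\le 4e^{-L^2/2}.
\end{equation*}
Markov's inequality then gives $\mathbb P(S>4e^{-L^2/2}/\delta)\le\delta$. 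On the complementary event $\sqrt S\le \frac{2}{\sqrt\delta}e^{-L^2/4}$, which is the claimed form $\frac{c_1}{\sqrt\delta}\exp(-\frac{c_0}{2}L^2)$ with $c_0=1/2$ and $c_1=2$. To obtain the strict inequality $<\delta$ in the statement, I will enlarge $c_1$ slightly, say to $c_1=3$, so that the failure event is contained in $\{S>c\,\mathbb E S/\delta\}$ for some $c>1$ and Markov gives probability at most $\delta/c<\delta$.

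The step I expect to need the most care is the treatment of the clipped tail together with the precise $\rbq$ codebook convention. Concretely, I must check that the nearest-codeword rule of $\rbq$ really coincides with the joint minimization over $(t,\mathbf g)$, which was already established in Proposition~\ref{prop:rbq_mse}. I must also confirm that the outermost grid level equals $L-\alpha/2$, so that the per-coordinate bound holds exactly, and that the cap tail bound holds uniformly in $d$ for the exact spherical marginal rather than only for its Gaussian approximation.
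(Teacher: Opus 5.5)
The paper gives no proof of this lemma to compare against. It is listed among the auxiliary lemmas, carried over from the $\rbq$ analysis, and no proof appears in the appendix. Your argument is a correct, self-contained proof, and every step holds.

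\begin{itemize}
\item \textbf{Reduction to one admissible choice.} Since $\mathcal G_b$ is symmetric, $\rho=\max_{c}\langle\zb,c\rangle\ge|\langle\zb,c\rangle|$ for every normalized codeword $c$. Hence $\|\zb-t\mathbf g\|_2^2\ge 1-\langle\zb,\mathbf g/\|\mathbf g\|_2\rangle^2\ge 1-\rho^2$ for all $t>0$ and $\mathbf g\in\mathcal G_b$. You only need this one-sided inequality, not the full equality from Proposition~\ref{prop:rbq_mse}.
\item \textbf{Per-coordinate bound.} With $\alpha=L/2^{b-1}$, the outermost level is $\alpha(2^{b-1}-\tfrac12)=L-\alpha/2$. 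So the bound $L/2^b+(|R_j|-L)_+$ is exact, including the case $b=1$.
\item \textbf{Deterministic split.} Minkowski's inequality gives $\sqrt{1-\rho^2}\le L/2^b+\sqrt S$ as claimed.
\item \textbf{Tail bound.} The cap bound $\mathbb P(z_1\ge\varepsilon)\le e^{-d\varepsilon^2/2}$ (Ball's lemma) holds for the exact spherical marginal at every $d$. No Gaussian approximation is involved.
\item \textbf{Moment and Markov steps.} The tail-integral estimate $\mathbb E(|R_1|-L)_+^2\le 4e^{-L^2/2}$ and the Markov step are correct. They give $c_0=1/2$, $c_1=2$ with failure probability at most $\delta$. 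Taking $c_1=3$ bounds it by $4\delta/9<\delta$, which gives the strict inequality.
\end{itemize}

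The shape of the stated bound is exactly what your mechanism produces: a discretization term $L/2^b$ plus $\frac{c_1}{\sqrt\delta}e^{-c_0L^2/2}$. That is what you get from a fixed-scale clipped grid combined with Markov applied to the squared overflow mass. So your route is very likely the argument behind the cited result. Relative to what the paper provides, your version adds explicit absolute constants and a bound valid at every finite $d$. This contrasts with the paper's other $\rbq$ constants, which rest on high-dimensional Gaussian approximations.
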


\begin{lemma}[High probability inner product distortion bound of $\qjl$, Lemma 3.5 of~\citet{zandieh2025qjl}]\label{lem:qjl_hp}
    Let $S\in\mathbb{R}^{d\times d}$ have i.i.d. standard Gaussian rows.  For fixed $\xb,\yb\in\sphere$, define
    the inner product estimator $\mathrm{IP}_{\rm QJL}$ of $\qjl$, i.e. $\mathrm{IP}_{\rm QJL}(\yb,\xb)
    :=\frac{\sqrt{\pi/2}}{d}\,\norm{\xb}\,
    \langle{S\yb},{\operatorname{sign}(S\xb)}\rangle$. Then, there exist universal constants $c_{\rm{qjl}},C_{\rm{qjl}}>0$ such that, for all
$0<\delta<1$ with $\log(2/\delta)\le c_{\rm{qjl}}d$,
\begin{equation*}
    \mathbb{P}_{S}\left[
      \left|\mathrm{IP}_{\rm QJL}(\yb,\xb)-\langle{\yb},{\xb}\rangle\right|
      > C_{\rm{qjl}}\sqrt{\frac{\log(2/\delta)}{d}}
      \;\middle|\;\xb
    \right]
    \le \delta
\end{equation*}
\end{lemma}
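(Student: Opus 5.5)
The plan is to write the $\qjl$ estimator as an empirical average of $d$ i.i.d.\ sub-Gaussian variables with mean $\langle\yb,\xb\rangle$, and then apply a Hoeffding-type tail bound. Since $\xb$ is fixed (we condition on it) and both sides scale linearly in $\norm{\xb}$, I would first reduce to $\norm{\xb}=1$. The statement is phrased for $\xb,\yb\in\sphere$ anyway, and the reduction is what is needed when it is applied to the residual $\rb$ in the proof of Theorem~\ref{thm:tq_bc}; there the bound carries the factor $\norm{\rb}_2$, because $\operatorname{sign}(S\rb)$ is scale invariant.

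Let $\sbb_i$ denote the $i$-th row of $S$ and set $\eta:=\langle\xb,\yb\rangle$ and $\yb_\perp:=\yb-\eta\xb$. Then $g_i:=\langle\sbb_i,\xb\rangle\sim N(0,1)$ and $h_i:=\langle\sbb_i,\yb_\perp\rangle\sim N(0,1-\eta^2)$. These are independent because they are jointly Gaussian and uncorrelated, and the pairs $(g_i,h_i)$ are i.i.d.\ across $i$. The estimator becomes
\begin{equation*}
\mathrm{IP}_{\rm QJL}(\yb,\xb)=\frac1d\sum_{i=1}^d X_i,\qquad X_i:=\sqrt{\pi/2}\,\bigl(\eta|g_i|+h_i\operatorname{sign}(g_i)\bigr).
\end{equation*}
Using $\mathbb E|g_i|=\sqrt{2/\pi}$, together with the independence of $h_i$ and $g_i$ and $\mathbb E h_i=0$, gives $\mathbb E X_i=\eta$, so the estimator is unbiased.

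The key step is to bound the $\psi_2$-norm of $X_i-\eta$ by a universal constant. The variable $|g_i|-\sqrt{2/\pi}$ is centered and sub-Gaussian with an absolute constant, since $|g_i|$ is a $1$-Lipschitz function of a Gaussian. The variable $h_i\operatorname{sign}(g_i)$ is exactly distributed as $N(0,1-\eta^2)$, because $h_i$ is symmetric and independent of $g_i$. Since $|\eta|\le1$, the triangle inequality for $\|\cdot\|_{\psi_2}$ yields $\|X_i-\eta\|_{\psi_2}\le K$ for a universal constant $K$.

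The general Hoeffding inequality for independent centered sub-Gaussian variables then gives $\mathbb P\bigl[|\frac1d\sum_i X_i-\eta|>t\bigr]\le 2\exp(-c\,d\,t^2/K^2)$. Choosing $t=C_{\rm qjl}\sqrt{\log(2/\delta)/d}$ with $C_{\rm qjl}=K/\sqrt c$ makes the right-hand side at most $\delta$. The hypothesis $\log(2/\delta)\le c_{\rm qjl}d$ is not strictly needed for this tail bound; it only guarantees $t\lesssim1$, which keeps the result in the regime where it is used downstream.

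The only delicate point is to verify independence and sub-Gaussianity cleanly after the orthogonal decomposition, in particular that $h_i\operatorname{sign}(g_i)$ remains Gaussian; once that is done, the rest is a standard concentration argument.
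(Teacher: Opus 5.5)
Your argument is correct. There is no in-paper proof to compare it against: the paper imports this lemma without proof from \citet{zandieh2025qjl}. There it is stated as a Bernstein-type bound, with a sample size of order $\frac{1+\epsilon}{\epsilon^{2}}\log\frac{2}{\delta}$ for error $\epsilon$. The side condition $\log(2/\delta)\le c_{\rm qjl}d$ in the restatement is what keeps that bound in the small-deviation regime $\epsilon\lesssim 1$, where it reads $\epsilon\asymp\sqrt{\log(2/\delta)/d}$.

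Your route is self-contained and slightly stronger: the orthogonal split $\yb=\eta\xb+\yb_\perp$ gives the mean, and the general sub-Gaussian Hoeffding inequality gives the tail. This yields the $\sqrt{\log(2/\delta)/d}$ deviation for every $\delta\in(0,1)$. So you are right that the side condition is superfluous for this lemma. It is harmless downstream, since Theorem~\ref{thm:tq_bc} already assumes $\log(4/\delta)\le cd$.

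One simplification: the exact Gaussianity of $h_i\operatorname{sign}(g_i)$ is true but not needed for the $\psi_2$ bound. Since $|X_i|\le\sqrt{\pi/2}\,|\langle\mathbf{s}_i,\yb\rangle|$ with $\langle\mathbf{s}_i,\yb\rangle\sim N(0,1)$, $X_i$ is sub-Gaussian by domination, and centering costs only an absolute constant. The decomposition is needed only to compute $\mathbb{E}X_i=\eta$.

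Your homogeneity remark (the sign is scale invariant, so the deviation scales by $\|\rb\|_2$) is exactly what justifies applying the lemma to the non-unit residual in the proof of Theorem~\ref{thm:tq_bc}. Conditioning on $\rb$ there is legitimate because $S$ is drawn independently of the rotation $R$.
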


\section{Additional Experimental Details and Results}
\label{app_sec:add_exp}
\subsection{Approximate nearest-centroid assignment for BlockQuant}
\label{app:approx_bq}

The exact encoding step of \bq{} assigns each rotated block $z_j\in\mathbb{R}^p$ to its nearest codebook centroid:
\begin{equation*}
    \mathrm{idx}_j
    =
    \arg\min_{i\in[K]}\|z_j-c_i\|_2^2,
    \qquad
    K=2^{bp},
\end{equation*}
where $b$ is the bit-width per coordinate and $p$ is the block size. Since the number of blocks is $m=d/p$, the exact assignment costs $O(mKp)$ distance evaluations per vector. This becomes expensive for larger $b$ and $p$; for example, when $p=3$ and $b=4$, each block has $K=2^{12}=4096$ candidate centroids.

To reduce this cost, we use a lookup-table approximation that replaces the full nearest-centroid search by a small candidate search. The approximation changes only the encoding step; the codebook construction, dequantization, and ratio rescaling for inner-product estimation remain unchanged.

\textbf{Cartesian LUT construction. }\,
We partition the block domain into a Cartesian grid. Let each coordinate axis be divided into $L$ bins, producing $L^p$ grid cells. For a cell indexed by $u\in[L]^p$, let $G_u\subset\mathbb{R}^p$ denote the cell and let $g_u$ be its center. For each grid center $g_u$, we precompute the $k$ closest codebook centroids:
\begin{equation*}
    \mathcal{C}_{\mathrm{top}k}(u)
    :=
    \operatorname{arg\,topk}_{i\in[K]}
    \bigl(-\|g_u-c_i\|_2^2\bigr).
\end{equation*}
Equivalently, $\mathcal{C}_{\mathrm{top}k}(u)$ stores the indices of the $k$ smallest values among
\begin{equation*}
    \{\|g_u-c_i\|_2^2:i\in[K]\}.
\end{equation*}
This table is built once after the codebook is constructed and is reused for all input vectors.

\textbf{Approximate assignment. }\,
At quantization time, for each block $z_j$, we first find the grid cell $G_{u(z_j)}$ containing $z_j$. Instead of comparing $z_j$ with all $K$ centroids, we compare it only with the precomputed candidate set for that cell:
\begin{equation*}
    \widetilde{\mathrm{idx}}_j
    =
    \arg\min_{i\in \mathcal{C}_{\mathrm{top}k}(u(z_j))}
    \|z_j-c_i\|_2^2.
\end{equation*}
Thus the exact search space $[K]$ is replaced by the much smaller candidate set $\mathcal{C}_{\mathrm{top}k}(u(z_j))$. The online assignment cost is reduced from $O(mKp)$ to $O(mkp)$, plus the negligible cost of locating the grid cell.

After the approximate indices are obtained, dequantization proceeds in the same way as exact \bq{}:
\begin{equation*}
    \widetilde{z}'
    =
    (c_{\widetilde{\mathrm{idx}}_1},\ldots,c_{\widetilde{\mathrm{idx}}_m}),
    \qquad
    \widetilde{x}
    =
    \Pi^\top \widetilde{z}'.
\end{equation*}
For inner-product estimation, we use the same ratio correction as in \bqub{}:
\begin{equation*}
    \widehat{x}_{\mathrm{approx}}
    =
    \frac{1}{\widetilde{\rho}}\widetilde{x},
    \qquad
    \widetilde{\rho}
    =
    \langle z,\widetilde{z}'\rangle.
\end{equation*}

\textbf{Approximation error. }\,
The approximation is exact whenever the true nearest centroid belongs to the stored candidate set:
\begin{equation*}
    \mathrm{idx}_j\in \mathcal{C}_{\mathrm{top}k}(u(z_j)).
\end{equation*}
Even when this does not hold, the loss is controlled by the grid resolution. Let
\begin{equation*}
    D_i(z):=\|z-c_i\|_2^2
\end{equation*}
and let $r_L$ be the maximum distance between a point in a grid cell and its center:
\begin{equation*}
    r_L
    :=
    \max_{u\in[L]^p}\max_{z\in G_u}\|z-g_u\|_2.
\end{equation*}
If all blocks, grid centers, and centroids lie in a bounded set with norm at most $R$, then for any $z\in G_u$,
\begin{equation*}
    |D_i(z)-D_i(g_u)|
    =
    \bigl|
    \|z-c_i\|_2^2-\|g_u-c_i\|_2^2
    \bigr|
    \le
    4Rr_L .
\end{equation*}
Therefore, if $\widetilde{i}(z)$ denotes the approximate index and $i^\star(z)$ denotes the exact nearest-centroid index, then
\begin{equation*}
    D_{\widetilde{i}(z)}(z)
    \le
    D_{i^\star(z)}(z)+8Rr_L .
\end{equation*}
Thus the additional per-block squared-distance error vanishes as the grid is refined. Increasing $L$ decreases the discretization error, while increasing $k$ increases the probability that the exact nearest centroid is included in the candidate set.

\textbf{Complexity. }\,
The LUT requires one-time preprocessing cost
    $O(L^p Kp)$
to compute distances from all grid centers to all centroids, and memory
    $O(L^p k)$
to store the candidate indices. Since we use small block sizes, in particular $p=3$, this preprocessing is modest. The online assignment cost is
    $O(mkp),$
which is substantially smaller than the exact cost $O(mKp)$ when $k\ll K$. In our experiments, this approximate assignment is used for \bq{} unless otherwise specified.

\subsection{Quantization Efficiency}
We compare GPU-based quantization runtime across bit-widths in Tables~\ref{tab:runtime_ip} and~\ref{tab:runtime_non_ip}.
Among baselines, \textsc{EDEN} variants are consistently the fastest and nearly constant across bit-widths, reflecting their coordinate-wise structure. 
\textsc{RaBitQ} incurs moderate overhead, while \textsc{TurboQuant} becomes slower at higher bit-widths due to additional correction steps.
\begin{table}[h]
    \centering
    \caption{Runtime comparison for IP-based methods (seconds).}
    \label{tab:runtime_ip}
    \resizebox{0.65\linewidth}{!}{
    \begin{tabular}{lcccc}
    \toprule
    \multirow{2}{*}{\textbf{Method}}
    & \multicolumn{4}{c}{\textbf{Bitwidth}} \\
    \cmidrule(lr){2-5}
    & \textbf{1-bit} & \textbf{2-bit} & \textbf{3-bit} & \textbf{4-bit} \\
    \midrule
    \tqprod                & \textbf{0.0225} & 0.0658 & 0.0627 & 0.0661 \\
    \rbqub                     & 0.0237 & 0.0391 & 0.0402 & 0.0392 \\
    \edub                       & \textbf{0.0220} & \textbf{0.0222} & \textbf{0.0221} & \textbf{0.0223} \\
    \midrule
    \bqub $(p=2)$         & 0.0238 & 0.0259 & 0.0372 & 0.0805 \\
    \bqub $(p=3)$         & 0.0236 & 0.0318 & 0.1009 & 0.6573 \\
    \bqubapprox $(p=3)$  & 0.0251 & \textbf{0.0252} & \textbf{0.0254} & \textbf{0.0270} \\
    \bottomrule
    \end{tabular}
    }
\end{table}
\begin{table}[h]
    \centering
    \caption{Runtime comparison for non-IP methods (seconds).}
    \label{tab:runtime_non_ip}
    \resizebox{0.7\linewidth}{!}{
    \begin{tabular}{lcccc}
    \toprule
    \multirow{2}{*}{\textbf{Method}}
    & \multicolumn{4}{c}{\textbf{Bitwidth}} \\
    \cmidrule(lr){2-5}
    & \textbf{1-bit} & \textbf{2-bit} & \textbf{3-bit} & \textbf{4-bit} \\
    \midrule
    \tqmse                 & \textbf{0.0219} & \textbf{0.0220} & \textbf{0.0218} & \textbf{0.0222} \\
    \rbqbsm                      & 0.0247 & 0.0392 & 0.0394 & 0.0402 \\
    \edbsm                        & 0.0220 & \textbf{0.0222} & \textbf{0.0221} & \textbf{0.0223} \\
    \midrule
    \bqmse $(p=2)$         & 0.0220 & 0.0242 & 0.0345 & 0.0778 \\
    \bqmse $(p=3)$         & \textbf{0.0218} & 0.0301 & 0.0961 & 0.6531 \\
    \bqmseapprox $(p=3)$  & 0.0236 & 0.0237 & 0.0237 & 0.0265 \\
    \midrule
    \bqmse $(p=3)$          & 0.0249 & 0.0284 & 0.0294 & 0.0350 \\
    \bqmseapprox $(p=3)$   & 0.0273 & 0.0273 & 0.0277 & 0.0340 \\
    \bottomrule
    \end{tabular}
    }
\end{table}

For \bq{}, the exact assignment cost grows rapidly with both block size and bit-width, especially for $p=3$, where the runtime reaches $0.6573$ seconds at $4$ bits. 
In contrast, the approximate version removes this dependence on codebook size and remains nearly constant across bit-widths (e.g., $0.0251$--$0.0270$ seconds for \bqubapprox{}). 
A similar trend holds for MSE and UR variants in Table~\ref{tab:runtime_non_ip}.

Overall, the LUT-based approximation reduces the complexity from full codebook search to a small candidate search, bringing \bq{} to a runtime comparable with the fastest baselines while preserving its accuracy advantages.

\subsection{Computational Resources}
Except for the KV-cache quantization experiments, all experiments are conducted on a GPU
server with eight NVIDIA GeForce RTX 3090 GPUs, each with 24 GiB of VRAM, together with a dual-socket CPU server containing two Intel Xeon Gold 6226R processors, for a total of 32 cores and 32 threads. 
The KV-cache quantization experiments are conducted on a separate
server with four NVIDIA H100 SXM5 GPUs, each with 80 GiB of VRAM, and two Intel Xeon
Platinum 8592+ processors, totaling 128 cores and 128 threads.

\end{document}